\documentclass[11pt]{article}

\usepackage{lmodern}
\usepackage{float}
\usepackage{amsfonts}    
\usepackage{color}
\usepackage{multirow}
\usepackage{graphicx}
\usepackage{amssymb,amsmath,amsthm,bm,bbold}
\usepackage{enumitem}
\usepackage{fullpage}
\usepackage{appendix}
\usepackage{cite}
\usepackage{tikz}
\usetikzlibrary{arrows.meta}

\usepackage{algpseudocode}
\usepackage{algorithm}
\algblock[Init]{Initialize}{EndInitialize}
\usepackage{booktabs}

\usepackage{hyperref}
\usepackage[capitalize,noabbrev]{cleveref}
\Crefname{equation}{Eq.}{Eqs.}
\crefname{equation}{eq.}{eqs.}
\crefname{section}{section}{section}
\Crefname{section}{Section}{Sections}
\crefname{appendix}{appendix}{appendices}
\Crefname{appendix}{Appendix}{Appendices}
\crefname{table}{Table}{Tables}
\Crefname{table}{Table}{Tables}
\crefname{figure}{Fig.}{Fig.}
\Crefname{figure}{Figure}{Figures}
\crefname{algorithm}{Alg.}{Alg.}
\Crefname{algorithm}{Algorithm}{Algorithms}
\crefname{theorem}{Theorem}{Theorems}
\Crefname{theorem}{Theorem}{Theorems}
\crefname{corollary}{Corollary}{Corollaries}
\Crefname{corollary}{Corollary}{Corollaries}
\crefname{lemma}{Lemma}{Lemma}
\Crefname{lemma}{Lemma}{Lemmas}
\crefname{remark}{Remark}{Remarks}
\Crefname{remark}{Remark}{Remarks}
\crefname{proposition}{Proposition}{Propositions}
\Crefname{proposition}{Proposition}{Propositions}
\crefname{algorithm}{Algorithm}{Algorithms}  

\newtheorem{proposition}{Proposition}
\newtheorem{definition}{Definition}
\newtheorem{corollary}{Corollary}
\newtheorem{lemma}{Lemma}

\renewcommand{\vec}[1]{\bm{#1}}
\newcommand{\mat}[1]{\mathbf{#1}}
\newcommand{\red}[1]{\textcolor[rgb]{0.635,0.0780,0.1840}{#1}}
\newcommand{\blue}[1]{\textcolor[rgb]{0,0.33,0.56}{#1}}

\definecolor{shade}{rgb}{0.93,0.93,0.93}

\title{\bf 
 \mbox{Subzero matrix completion for sparse data analysis:}
  large-scale learning of latent low-rank structure}

\author{Lawrence K. Saul$^1$, Ningyuan Huang$^{1}$\thanks{Now at Google Research}, Dennis Bollweg$^2$, \\ Jeff Soules$^1$, 
and  Diana C. Halikias$^3$ \\[1ex]
\small $^1$Center for Computational Mathematics, Flatiron Institute, New York, NY 10010 \\
\small $^2$Scientific Computing Core, Flatiron Institute, New York, NY 10010 \\
\small $^3$Courant Institute, NYU, New York, NY 10012}

\date{August 21, 2026}

\begin{document}

\maketitle

\begin{abstract}
\noindent
We investigate when a sparse nonnegative matrix can be recovered from a real-valued matrix of much lower rank by zeroing out its negative elements. The potential for such decompositions suggests a mathematical connection between sparsity and rank; we analyze a number of sparse matrices with this latent low-rank structure and use them to illustrate the geometric origins of this connection. Previous algorithms have discovered these decompositions via an alternating minimization over the factors of a low-rank matrix, but to do so, they have also needed to compute and store another matrix, neither sparse nor low-rank, that is the size of their product. We develop a stochastic, alternating least-squares algorithm that operates on smaller blocks of this dense matrix and scales as a result to much larger problems. We also show how to further accelerate this algorithm with sparse optimizations and customized CUDA kernels. As one example, we use the algorithm to analyze the sparse matrix of synaptic weights for the recently published \textit{Drosphilia} connectome. The nonzero elements of this matrix, with 139,255 rows and columns, record the
number of synapses between cells in the nervous system of a female fruit fly. Despite a slowly decaying spectrum of singular values, this matrix exhibits a latent low-rank structure that is predictive of cell categories across multiple levels of specificity.
\end{abstract}


\section{Introduction}

There are two types of structure that arise often in large matrices of data. The first is sparsity, where relatively few elements have nonzero values; sparse matrices are used, for example, to record the links in social and biological networks~\cite{newman2003-networks}, the co-occurrence statistics in natural language~\cite{michel2011-ngrams}, the neural responses to natural stimuli~\cite{olshausen1996-emergence}, and the nearest-neighbor relations in large data sets~\cite{papadopoulos2004-nearest}. The second is a deficiency in rank, where not all the rows or columns are linearly independent: many matrices in data science are approximately of low rank~\cite{udell2019-rank,budzinskiy2025-rank}, and low-rank factorizations have been used in many different ways---to analyze text documents~\cite{deerwester1990-indexing} and gene expression levels~\cite{brunet2004-gene}, to learn recommender systems~\cite{koren2009-matrix} and large language models~\cite{hu2022-lora}, and more generally, to reduce the dimensionality of data for downstream tasks~\cite{turk1991-eigenfaces,jolliffe2016-pca}.

It is not obvious that these two types of structure should be mathematically related in any way. Clearly there are sparse matrices, such as the identity matrix, that are full rank, and there are rank-one matrices, such as the matrix of all ones, that are entirely dense. Some recent studies~\cite{saul2022-nmd,saul2022-geometrical}, however, have explored the following connection between sparsity and rank: for a sparse \textit{nonnegative} matrix $\mat{S}$, it is often possible to find a \textit{real-valued} matrix~$\mat{L}$ of much lower rank such that \mbox{$\mat{S}\approx\max(0,\mat{L})$}; here, the max operation is applied elementwise to the entries of $\mat{L}$. An example of this basic idea is illustrated in \cref{fig:sumac}, and note that the sparser the matrix~$\mat{S}$, the more zeros can be replaced by negative values to lower the rank of $\mat{L}$. An even more striking example~\cite{alon2016-sign} (to which we return later) is provided by the $n\!\times\! n$ identity matrix $\mat{I}_n$; for all $n\!\geq\!3$, there exists a corresponding low-rank matrix $\mat{L}_n$ such that $\mat{I}_n\! =\! \max(0,\mat{L}_n)$ and $\text{rank}(\mat{L}_n)\!\leq\! 3$.

\begin{figure}[t]
\centerline{\includegraphics[width=\textwidth]{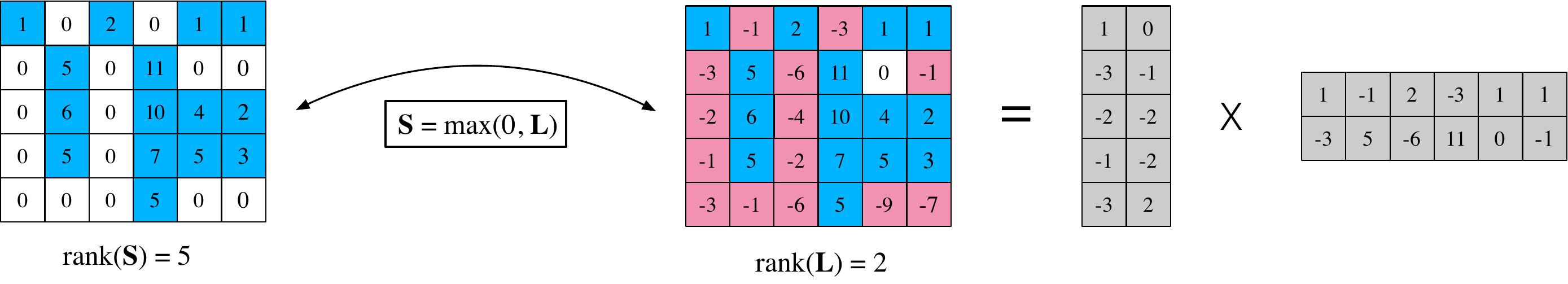}}
\caption{A sparse nonnegative matrix $\mat{S}$ of full rank (\textit{left}) can be transformed into a matrix $\mat{L}$ of lower rank (\textit{right}) by replacing zeros in $\mat{S}$ with negative values in $\mat{L}$ (shown in red). Since $\mat{S}$ and~$\mat{L}$ share the same positive entries (shown in blue), the latter provides a low-rank encoding of the former via  $\mat{S}=\max(0,\mat{L})$. In this case we say that $\mat{L}$ is a \textit{subzero} matrix completion of~$\mat{S}$.}
\label{fig:sumac}
\end{figure}

Subsequently many researchers have developed faster, more efficient, and provably convergent algorithms to learn these decompositions~\cite{seraghiti2023-accelerated,liu2024-symmetric,wang2024-momentum,awari2025-alternating,gillis2025-extrapolated,wang2025-efficient,wang2025-accelerated}. Given a sparse nonnegative matrix~$\mat{S}$, and a prescribed rank $r$, these algorithms attempt to compute a matrix $\mat{L}$ such that $\text{rank}(\mat{L})\!=\!r$ and $\mat{S}\!\approx\! \max(0,\mat{L})$. Despite this ongoing progress, there remain gaps in our understanding and use of these models. Conceptually, these nonlinear decompositions of sparse matrices are more difficult to interpret than linear ones, while on a practical level, current implementations of these algorithms have generally assumed that there is enough memory to store a \textit{dense} matrix of the same size as $\mat{S}$. Such implementations do not scale well to applications involving very large sparse matrices, with (say) hundreds of thousands of rows and columns.

This paper aims to fill some of these gaps. We start by providing an intuitive geometric picture for this model of sparse data analysis and studying various idealized matrices where one can obtain analytical results. Next we describe a stochastic alternating least-squares algorithm for this model that scales well to large problem sizes. We also provide GPU-compatible software packages~\cite{sumac2026-github} in Python and MATLAB that are supported by dedicated and extensively optimized CUDA kernels. As an illustrative example, we use the algorithm to analyze the latent low-rank structure of the recently published \textit{Drosphilia} connectome~\cite{dorkenwald2024-flywire,matsliah2023-codex}; the connectome is stored as a sparse nonnegative matrix, with over 139,000 rows and columns, whose $ij^\text{th}$ element indicates the number of synapses from the $i^\text{th}$ neuron to the~$j^\text{th}$ neuron in the brain of a female fruit fly. On problems of this size, we find not only that stochastic optimizers are much faster than deterministic ones, but also that the customized CUDA kernels lead to further significant speedups (e.g., up to 4x in Python and over 10x in MATLAB).
These implementations should be of use to many researchers in science and engineering. 

The organization of this paper is as follows. In \cref{sec:motivation}, we prove some basic propositions that show when these nonlinear low-rank decompositions exist and what they signify.  We also study some structured sparse matrices where one can develop a further analytical understanding and preview the matrices from image, brain, and text data that we aim to study empirically. In \cref{sec:alg}, we review the basic framework of alternating minimization that is most commonly used to discover these decompositions. In \cref{sec:scaling}, we explore a number of methods for scaling this optimization up to very large problem sizes. In \cref{sec:experiment}, we describe which of these methods are most complementary in practice and demonstrate their ability to discover latent low-rank structure in sparse matrices with hundreds of thousands of rows and columns. Finally, in \cref{sec:related}, we give a broader review of related work, and in \cref{sec:discuss}, we summarize our main findings and discuss directions for future research.


\section{Motivation and examples}
\label{sec:motivation}

In this section we develop a geometric picture for the nonlinear matrix decompositions in \cref{fig:sumac}. We start from our previous understanding of linear decompositions---namely, when a matrix $\mat{L}$ is low rank, it can be written $\mat{L}\!=\!\mat{A}\mat{B}^\top$ where $\mat{A}$ and $\mat{B}$ are tall skinny matrices, or equivalently, we can write $L_{ij}\!=\! \vec{a}_i\cdot\vec{b}_j$, where the vectors~$\vec{a}_i$ and $\vec{b}_j$ refer, respectively, to the $i^\text{th}$ and $j^\text{th}$ corresponding rows in $\mat{A}$ and~$\mat{B}$. Often these vectors can be useful in downstream tasks; intuitively, they provide low-dimensional representations of the data that are indexed by the rows and columns of $\mat{L}$. We can also regard~$\mat{L}$ as a \textit{similarity matrix} for these vectors, where similarities are measured by inner products. Our goal is to prove some elementary propositions that extend these intuitions to the case where $\mat{S}\!=\!\max(0,\mat{L})$ for a sparse nonnegative matrix~$\mat{S}$.

\subsection{Geometric and graphical constructions}

There is a large literature on low-rank completions of real-valued matrices with \textit{missing elements}~\cite{candes2009-exact,candes2010-matrix,keshavan2010-matrix,chatterjee2015-matrix,sun2016-guaranteed,nguyen2019-low,chatterjee2020-deterministic,zilber2022-GNMR}. The zeros of sparse matrices are not missing elements per se, but there are obvious parallels between earlier methods for matrix completion and the ideas in this paper. This motivates the following definition.

\noindent\fcolorbox{black}{shade}{\parbox{0.985\textwidth}{\vspace{-2ex}
\begin{definition} We say that $\mat{L}$ is a \textit{subzero matrix completion} of a sparse nonnegative matrix $\mat{S}$ if $S_{ij}=\max(0,L_{ij})$ for all of the elements in these matrices.
\end{definition}
\vspace{-2ex}}}

 Here we are interested in two questions: (i) when do sparse nonnegative matrices have \textit{subzero} matrix completions of low rank, and (ii) when such completions exist, what geometric picture do they provide? The next definition and propositions give some basic answers to these questions.

\vspace{1ex}
\noindent\fcolorbox{black}{shade}{\parbox{0.985\textwidth}{\vspace{-2ex}
\begin{definition}
Let $\{\vec{\alpha}_i\}_{i=1}^m$ and $\{\vec{\beta}_j\}_{j=1}^n$ be sets of vectors in $\mathbb{R}^d$. We define the thresholded similarity matrix for these vectors, with threshold $\tau\!\in\!(-1,1)$, as the $m\times n$ nonnegative matrix~$\mat{S}$ with elements
\begin{equation}
S_{ij}(\tau) = \max\big(0,\vec{\alpha}_i\!\cdot\!\vec{\beta}_j - \tau\|\vec{\alpha}_i\|\|\vec{\beta}_j\|\big).
\label{eq:tsm}
\end{equation}
\end{definition}
\vspace{-2ex}}}

For brevity in what follows, we write each matrix element in \cref{eq:tsm} as simply $S_{ij}$, rather than~$S_{ij}(\tau)$, omitting the dependence on the threshold $\tau$. Note that this matrix element is positive if and only if the angle between vectors $\vec{\alpha}_i$ and $\vec{\beta}_j$ has a cosine greater than $\tau$. Thus $\mat{S}$ can be viewed as a thresholded similarity matrix based on cosine distances, and the larger the value of $\tau$, the sparser is the matrix $\mat{S}$. In particular, if $\tau$ is close to unity, then the positive elements in $\mat{S}$ indicate precisely those pairs of vectors in $\{\vec{\alpha}_i\}_{i=1}^m$ and $\{\vec{\beta}_j\}_{j=1}^n$ that are nearly parallel. Later we will see many examples of such matrices that are both highly sparse and of full rank even when $d\ll\min(m,n)$.

The geometric intuition behind $\mat{S}=\max(0,\mat{L})$ is based on a correspondence between thresholded similarity matrices and subzero matrix completions. The next proposition establishes one direction of this correspondence.

\vspace{1ex}
\noindent\fcolorbox{black}{shade}{\parbox{0.985\textwidth}{\vspace{-2ex}
\begin{proposition}
\label{thm:tsm1}
Let $\{\vec{\alpha}_i\}_{i=1}^m$ and $\{\vec{\beta}_j\}_{j=1}^n$ be sets of vectors in $\mathbb{R}^d$. If $\mat{S}$ is given by the $m\!\times\! n$ thresholded similarity matrix in \cref{eq:tsm}, then $\mat{S}$ has a subzero matrix completion of rank $r\leq d\!+\!1$.
\end{proposition}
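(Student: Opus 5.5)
The plan is to write down the obvious candidate $\mat{L}$ and check its rank directly. I take
\begin{equation*}
L_{ij} = \vec{\alpha}_i\cdot\vec{\beta}_j - \tau\|\vec{\alpha}_i\|\|\vec{\beta}_j\|.
\end{equation*}
By \cref{eq:tsm}, $S_{ij}=\max(0,L_{ij})$ holds elementwise, so $\mat{L}$ is a subzero matrix completion of $\mat{S}$ by definition. What remains is to show that $\mathrm{rank}(\mat{L})\leq d+1$.

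For the rank bound I will exhibit an explicit factorization $\mat{L}=\mat{A}\mat{B}^\top$ with inner dimension $d+1$. Row $i$ of $\mat{A}$ will be the augmented vector $\vec{a}_i=(\vec{\alpha}_i,\ \|\vec{\alpha}_i\|)\in\mathbb{R}^{d+1}$. Row $j$ of $\mat{B}$ will be $\vec{b}_j=(\vec{\beta}_j,\ -\tau\|\vec{\beta}_j\|)\in\mathbb{R}^{d+1}$. A one-line computation then gives $\vec{a}_i\cdot\vec{b}_j=\vec{\alpha}_i\cdot\vec{\beta}_j-\tau\|\vec{\alpha}_i\|\|\vec{\beta}_j\|=L_{ij}$.

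Equivalently, $\mat{L}=\mat{X}\mat{Y}^\top-\tau\,\vec{u}\vec{v}^\top$, where the rows of $\mat{X}$ and $\mat{Y}$ are the $\vec{\alpha}_i$ and $\vec{\beta}_j$, and $u_i=\|\vec{\alpha}_i\|$, $v_j=\|\vec{\beta}_j\|$. Subadditivity of rank then gives $\mathrm{rank}(\mat{L})\leq\mathrm{rank}(\mat{X}\mat{Y}^\top)+1\leq d+1$.

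I do not expect a real obstacle here; the only point needing care is bookkeeping. The rank is at most $d+1$, and it may be smaller, for instance when the vectors span a lower-dimensional subspace or when $\vec{u}\vec{v}^\top$ lies in the span of $\mat{X}\mat{Y}^\top$. That is consistent with the statement's ``$r\leq d+1$''. The hypothesis $\tau\in(-1,1)$ is not needed for this direction and only matters for the interpretation and for the converse.
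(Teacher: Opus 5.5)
Your proof is correct and matches the paper's construction: the paper also appends one coordinate to each vector, taking $\vec{a}_i=(\vec{\alpha}_i,\tau\|\vec{\alpha}_i\|)$ and $\vec{b}_j=(\vec{\beta}_j,-\|\vec{\beta}_j\|)$, which differs from your choice only in which factor carries $\tau$. Your rank-subadditivity view $\mat{L}=\mat{X}\mat{Y}^\top-\tau\,\vec{u}\vec{v}^\top$ is the same argument written in matrix form.
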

\vspace{-2ex}}}
\vspace{-1ex}

\begin{proof}
The proof is by construction. 
For each row of $\mat{S}$, define a vector $\vec{a}_i$ in $\mathbb{R}^{d+1}$
by \mbox{$\vec{a}_i = (\vec{\alpha}_i,\tau\!\left\|\vec{\alpha}_i\right\|)$},
and for each column of $\mat{S}$, define a vector $\vec{b}_j$ in $\mathbb{R}^{d+1}$ by $\vec{b}_j = (\vec{\beta}_j,-\!\left\|\vec{\beta}_j\right\|)$.
Finally let $L_{ij}\!=\!\vec{a}_i\!\cdot\!\vec{b}_j$. Then $\mat{L}$ is a matrix of rank at most $d\!+\!1$ satisfying $S_{ij}=\max(0,L_{ij})$.
\end{proof}

This result shows how thresholded similarity matrices give rise to subzero matrix completions of very low rank; in particular, there may be a large gap in rank between $\mat{L}$ and $\mat{S}$ in this setting whenever $d\!\ll\!\min(m,n)$ (that is, when the vectors inhabit a space whose dimensionality is far lower than the number of vectors whose similarities are being computed). The next proposition establishes the other direction of this correspondence.

\vspace{1ex}
\noindent\fcolorbox{black}{shade}{\parbox{0.985\textwidth}{\vspace{-2ex}
\begin{proposition}
\label{thm:tsm2}
Let $\mat{L}$ be a subzero matrix completion of $\mat{S}$ with $\textup{rank}(\mat{L})\!=\!r$. Then for any $\tau\!\in\!(-1,1)$, there exist vectors $\{\vec{\alpha}_i\}_{i=1}^m$ and  $\{\vec{\beta}_j\}_{j=1}^n$ in~$\mathbb{R}^{r+1}$ such that $\mat{S}$ equals the thresholded similarity matrix in \cref{eq:tsm}.
\end{proposition}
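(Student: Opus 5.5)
We prove this by reversing the construction in \cref{thm:tsm1}. Since $\textup{rank}(\mat{L})=r$, write $\mat{L}=\mat{A}\mat{B}^\top$ with $\mat{A}\in\mathbb{R}^{m\times r}$ and $\mat{B}\in\mathbb{R}^{n\times r}$, so that $L_{ij}=\vec{a}_i\cdot\vec{b}_j$ for the rows $\vec{a}_i,\vec{b}_j\in\mathbb{R}^r$. We then lift each row vector into $\mathbb{R}^{r+1}$ by appending one coordinate. The extra coordinates will satisfy two conditions: their product is orthogonal to the original inner product, so the cross term vanishes, and the threshold term $\tau\|\vec{\alpha}_i\|\|\vec{\beta}_j\|$ is canceled by a positive rescaling.

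A simple device is to make every lifted vector have the same norm, so that the threshold term becomes a constant. Fix a constant $c>0$ and set
\[
\vec{\alpha}_i=(\vec{a}_i,\,x_i),\qquad \vec{\beta}_j=(\vec{b}_j,\,y_j).
\]
Then $\vec{\alpha}_i\cdot\vec{\beta}_j=L_{ij}+x_iy_j$. We want
\[
L_{ij}+x_iy_j-\tau\|\vec{\alpha}_i\|\|\vec{\beta}_j\| = L_{ij}
\]
for all $i,j$. If $\|\vec{\alpha}_i\|=\|\vec{\beta}_j\|=c$ for all $i,j$, this reduces to $x_iy_j=\tau c^2$ for all $i,j$. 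We satisfy it by choosing $x_i=\sqrt{|\tau|}\,c$ and $y_j=\mathrm{sgn}(\tau)\sqrt{|\tau|}\,c$. The norm constraints then become
\[
\|\vec{a}_i\|^2+|\tau|c^2=c^2,\qquad \|\vec{b}_j\|^2+|\tau|c^2=c^2,
\]
that is, $\|\vec{a}_i\|^2=\|\vec{b}_j\|^2=(1-|\tau|)c^2$. This is satisfiable because $|\tau|<1$. The remaining obstacle is that the $\vec{a}_i$ and $\vec{b}_j$ do not all have the same norm.

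We handle this obstacle in two steps.

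\textbf{Step 1: use the scaling freedom of $\max(0,\cdot)$.} For positive scalars $p_i,q_j$ we have $\max(0,p_iq_jL_{ij})=p_iq_j\max(0,L_{ij})$, so the positive rescalings must be undone at the end. The cleaner route is to retain the norms: relax the requirement that all norms equal $c$ and instead impose
\[
x_iy_j=\tau\|\vec{\alpha}_i\|\|\vec{\beta}_j\|.
\]
Take $x_i=s\,\|\vec{\alpha}_i\|$ and $y_j=t\,\|\vec{\beta}_j\|$ with constants satisfying $st=\tau$. The norm relations are then $\|\vec{\alpha}_i\|^2=\|\vec{a}_i\|^2+s^2\|\vec{\alpha}_i\|^2$, which gives
\[
\|\vec{\alpha}_i\|=\frac{\|\vec{a}_i\|}{\sqrt{1-s^2}},\qquad \|\vec{\beta}_j\|=\frac{\|\vec{b}_j\|}{\sqrt{1-t^2}}.
\]

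\textbf{Step 2: choose $s$ and $t$.} We need $s^2<1$, $t^2<1$ and $st=\tau$. Choosing $s=\sqrt{|\tau|}$ and $t=\mathrm{sgn}(\tau)\sqrt{|\tau|}$ works since $|\tau|<1$. With these choices,
\[
x_i=\frac{s\|\vec{a}_i\|}{\sqrt{1-s^2}},\qquad y_j=\frac{t\|\vec{b}_j\|}{\sqrt{1-t^2}}.
\]
Substituting into the inner product gives
\[
\vec{\alpha}_i\cdot\vec{\beta}_j-\tau\|\vec{\alpha}_i\|\|\vec{\beta}_j\|=L_{ij}+(st-\tau)\|\vec{\alpha}_i\|\|\vec{\beta}_j\|=L_{ij}.
\]
Hence $S_{ij}=\max(0,L_{ij})$ is exactly the thresholded similarity matrix in \cref{eq:tsm}.

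The main step to verify is this self-consistency of the norms in Step 1, which the explicit formulas for $\|\vec{\alpha}_i\|$ and $\|\vec{\beta}_j\|$ resolve. Zero rows $\vec{a}_i=\vec{0}$ are harmless, because they give $\vec{\alpha}_i=\vec{0}$ and both sides vanish.
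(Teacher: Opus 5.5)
Your construction is correct and is essentially the paper's: with $s=\sqrt{|\tau|}$, your appended coordinates are $\gamma\|\vec{a}_i\|$ and $\mathrm{sgn}(\tau)\,\gamma\|\vec{b}_j\|$, where $\gamma=\sqrt{|\tau|/(1-|\tau|)}$ solves $\gamma^2(1-|\tau|)=|\tau|$, which is exactly the paper's lifting except that the paper puts the sign of $\tau$ on $\vec{\alpha}_i$ rather than on $\vec{\beta}_j$. The opening detour through equal norms and positive rescalings is not needed, but it does not affect the final argument.
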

\vspace{-2ex}}}

\begin{proof}
The proof is by construction. Since $\mat{L}$ is of rank $r$, there exist vectors $\{\vec{a}_i\}_{i=1}^m$ and $\{\vec{b}_j\}_{j=1}^n$ in $\mathbb{R}^r$ such that $L_{ij} = \vec{a}_i\cdot\vec{b}_j$. Let $\gamma$ be a solution of $\gamma^2(1\!-\!|\tau|)=|\tau|$. For each row of~$\mat{S}$, define a vector $\vec{\alpha}_i$ in $\mathbb{R}^{r+1}$ by $\vec{\alpha}_i = \left(\vec{a}_i,\gamma\left\|\vec{a}_i\right\|\text{sign}(\tau)\right)$,
and for each column of~$\mat{S}$, define a vector  $\vec{\beta}_j$ in $\mathbb{R}^{r+1}$ by
$\vec{\beta}_j = \left(\vec{b}_j,\gamma\left\|\vec{b}_j\right\|\right)$.
From these definitions, it follows that
\begin{eqnarray*}
  \vec{\alpha}_i\!\cdot\!\vec{\beta}_j - \tau\|\vec{\alpha}_i\|\|\vec{\beta}_j\|
    & = & \vec{a}_i\!\cdot\!\vec{b}_j\, +\, \gamma^2 \|\vec{a}_i\|\|\vec{b}_j\|\,\text{sign}(\tau)\, -\, \tau\left(1\!+\!\gamma^2\right)\|\vec{a}_i\|\|\vec{b}_j\|, \\
    & = & L_{ij} +\, 
        \text{sign}(\tau)\|\vec{a}_i\|\|\vec{b}_j\|\, [\gamma^2(1\!-\!|\tau|) - |\tau|], \\
     & = & L_{ij}. 
\end{eqnarray*}
Since by assumption $S_{ij}\!=\!\max(0,L_{ij})$, it follows from the preceding equation that $\mat{S}$ is equal to the thresholded similarity matrix in \cref{eq:tsm}.
\end{proof}

The above results provide the following geometric picture for this model of sparse data analysis: a sparse nonnegative matrix $\mat{S}$ may not itself be low rank, but \textit{it will be amenable to a low-rank subzero matrix completion if to each row and column we can associate low-dimensional vectors such that the nonzero elements of $\mat{S}$ encode those pairs of vectors that are most nearly parallel.} 
The above propositions generalize earlier results for subzero matrix completions of symmetric thresholded similarity matrices~\cite{saul2022-geometrical}. We note that the constructions in these proofs are not unique.

Subzero matrix completion is based on a simple idea: the sparser the nonnegative matrix $\mat{S}$, the more zeros we can replace by negative values to lower the rank of $\mat{L}$. This suggests, conversely, that less sparse matrices may be less amenable to these manipulations. The last proposition in this section makes this idea more precise; specifically, it provides a lower bound on the rank of $\mat{L}$ for any subzero matrix completion satisfying $\mat{S} = \max(0,\mat{L})$.

\vspace{1ex}
\noindent\fcolorbox{black}{shade}{\parbox{0.985\textwidth}{\vspace{-2ex}
\begin{proposition}
\label{thm:rankL_bound}
Suppose that $\mat{S} = \max(0,\mat{L})$. Let $\mat{S}_+$ be any submatrix of $\mat{S}$ with all positive elements. Then $\textup{rank}(\mat{L}) \geq \textup{rank}(\mat{S}_+)$.
\end{proposition}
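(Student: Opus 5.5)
The argument is short and rests on one observation: wherever $\mat{S}$ is positive, the max operation acts as the identity. Concretely, $S_{ij}=\max(0,L_{ij})>0$ forces $L_{ij}=S_{ij}$. So I would first fix the index sets. Let $I\subseteq\{1,\ldots,m\}$ be the rows and $J\subseteq\{1,\ldots,n\}$ the columns that define $\mat{S}_+$, so that $\mat{S}_+=[S_{ij}]_{i\in I,j\in J}$ with every $S_{ij}>0$ for $i\in I$ and $j\in J$. Write $\mat{L}_{IJ}$ for the submatrix of $\mat{L}$ on the same rows and columns.

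The next step is to check that $\mat{L}_{IJ}=\mat{S}_+$ entrywise. Take any $(i,j)\in I\times J$. If $L_{ij}\leq 0$, then $S_{ij}=\max(0,L_{ij})=0$, which contradicts $S_{ij}>0$. Hence $L_{ij}>0$, and then $S_{ij}=\max(0,L_{ij})=L_{ij}$.

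Finally, I would invoke the standard fact that a matrix has rank at least that of any of its submatrices. This follows because any set of $k$ linearly independent columns of $\mat{L}_{IJ}$ extends to the corresponding full columns of $\mat{L}$, and those remain independent. It also follows from the characterization of rank as the size of the largest nonvanishing minor. Putting the steps together gives $\textup{rank}(\mat{L})\geq\textup{rank}(\mat{L}_{IJ})=\textup{rank}(\mat{S}_+)$.

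I do not expect any step to be a genuine obstacle. The only point needing care is the convention for "submatrix": $\mat{S}_+$ should be taken as an arbitrary selection of rows and columns, not necessarily contiguous. The rank monotonicity in the last step holds for any such selection, so the proof goes through without changes. It may also be worth noting afterward that the bound is strongest when $\mat{S}$ has a large dense positive block, which matches the intuition that less sparse matrices resist low-rank subzero completion.
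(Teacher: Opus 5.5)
Your proposal is correct and follows the same route as the paper. The paper also observes that positive entries of $\mat{S}$ are unchanged in $\mat{L}$, so $\mat{S}_+$ is a submatrix of $\mat{L}$, and then applies the fact that rank cannot increase when passing to a submatrix. You simply spell out the entrywise check and the rank-monotonicity fact in more detail.
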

\vspace{-2ex}}}

\vspace{-1ex}
\begin{proof}
The positive elements of $\mat{S}$ are not changed by the process of subzero matrix completion; hence any all-positive submatrix of $\mat{S}$ will also be a submatrix of $\mat{L}$. Since the rank of a matrix is never less than the rank of any submatrix, it follows that $\text{rank}(\mat{L}) \geq \text{rank}(\mat{S}_+)$.
\end{proof}

We make two final observations regarding this result: (i) if $\mat{S}$ is a binary matrix of zeros and ones, then the lower bound in \cref{thm:rankL_bound} is vacuous (because in this case any positive submatrix of~$\mat{S}$ has rank one); (ii) if $\mat{S}$ is a nonnegative matrix with $p$ nonzeros, then the rank of any positive submatrix is at most $\sqrt{p}$. 

Our next goal is to provide some examples that illustrate the geometric content of these propositions. The simplest examples are motivated by graphical constructions, of which the next is well known.

\vspace{1ex}
\noindent\fcolorbox{black}{shade}{\parbox{0.985\textwidth}{\vspace{-2ex}
\begin{definition}
Let $\mat{W}$ be the weighted adjacency matrix of a symmetric graph. The \textit{signless Laplacian} is defined as the matrix $\mat{Q}=\mat{D}\!+\!\mat{W}$, where $\mat{D}$ is the diagonal matrix with elements $D_{ii} = \sum_{j\neq i} W_{ij}$.
\end{definition}
\vspace{-2ex}}}

Many results are known about signless Laplacians from spectral graph theory~\cite{chung1997-spectral,cvetkovic2007-signless}. For our purposes, the most important is that all of the eigenvalues of a signless Laplacian are positive unless the underlying graph is bipartite; in this special case, there is exactly one zero eigenvalue. Of course, the signless Laplacian of a sparse graph is also a \textit{sparse nonnegative matrix}, and from the preceding observation, it is also one of full or nearly full rank.  Thus it is natural to ask when signless Laplacians have subzero matrix completions of especially low rank and to study the properties of sparse graphs that are revealed when they do. The next sections do this for the graphs in \cref{fig:graphs}. 

\begin{figure}[t]
\centerline{\includegraphics[width=0.9\textwidth]{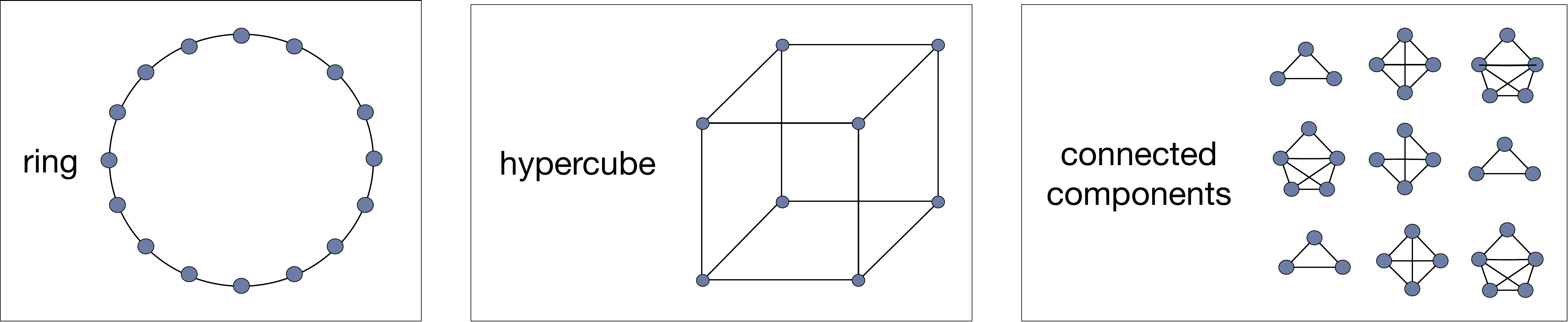}}
\caption{Graphs whose signless Laplacians have subzero matrix completions of much lower rank than the number of nodes (\textit{left}, \textit{middle}) or the number of connected components (\textit{right}).}
\label{fig:graphs}
\end{figure}

\subsection{Example: ring graph}
\label{sec:ring}

\vspace{-1ex}
The geometric picture in the previous theorems is nicely illustrated by the following example. Let $\delta\!\in\big[0,\frac{1}{2}\big]$, and consider the $n\!\times\!n$ matrix given by
\begin{equation}
\label{eq:ring}
S_{ij} = \left\{
 \begin{array}{ll}
  1 & \mbox{if $i\!=\!j$}, \\
  \delta & \mbox{if}\ |i\!-\!j| \in\{1,n\!-\!1\}, \\
  0 & \text{otherwise.}
 \end{array}
 \right.
\end{equation}
Note that $\mat{S}$ reduces to the identity  matrix for $\delta\!=\!0$, while for $\delta\!=\!\frac{1}{2}$, it yields the signless Laplacian for the ring graph in the left panel of \cref{fig:graphs}. For this graph, we shall see that its low-dimensional structure is immediately revealed by a subzero matrix completion of its signless Laplacian. We note furthermore that the matrix $\mat{S}$ in \cref{eq:ring} has no positive submatrix with more than two rows and columns, and hence \cref{thm:rankL_bound} provides a lower bound of 2 on the rank of any subzero matrix completion. The next result shows that this lower bound is very nearly realized; this construction is previously known~\cite{lee1999-learning,saul2022-nmd}, but we include the proof as a stepping stone for later results.

\vspace{1ex}
\noindent\fcolorbox{black}{shade}{\parbox{0.985\textwidth}{\vspace{-2ex}
\begin{proposition}
\label{thm:ring}
Let $\mat{S}$ be the sparse $n\!\times\! n$ matrix in \cref{eq:ring}, and suppose $\delta\!\in\!\left[0,\frac{1}{2}\right]$. Then there exists a matrix $\mat{L}$, satisfying $\mat{S}=\max(0,\mat{L})$, such that $\textup{rank}(\mat{L})\leq 3$.
\end{proposition}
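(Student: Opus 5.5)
The plan is to invoke \cref{thm:tsm1} with $d=2$: place $n$ unit vectors evenly around the circle, $\vec{\alpha}_i=\vec{\beta}_i=(\cos\theta_i,\sin\theta_i)$ with $\theta_i=2\pi i/n$. Then $\vec{\alpha}_i\cdot\vec{\beta}_j=\cos(\theta_i-\theta_j)$ depends only on the cyclic distance $k=\min(|i-j|,n-|i-j|)$. It will be convenient to add an affine offset directly rather than use the cosine threshold. So I will look for a rank-three matrix of the form
\begin{equation*}
L_{ij} = c\,\cos\!\big(\tfrac{2\pi(i-j)}{n}\big) - b,
\end{equation*}
which is $\vec{a}_i\cdot\vec{b}_j$ with $\vec{a}_i=(c\cos\theta_i,c\sin\theta_i,-b)$ and $\vec{b}_j=(\cos\theta_j,\sin\theta_j,1)$. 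Hence $\textup{rank}(\mat{L})\le 3$, exactly as in the construction proving \cref{thm:tsm1}.

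We need three conditions: $L_{ii}=c-b=1$, $L_{i,i\pm1}=c\cos(2\pi/n)-b=\delta$, and $L_{ij}<0$ whenever $k\geq 2$. Write $\kappa=\cos(2\pi/n)$. For $n\ge 3$ we have $\kappa<1$, so the first two equations give
\begin{equation*}
c=\frac{1-\delta}{1-\kappa}, \qquad b=c-1 .
\end{equation*}
Both are nonnegative since $\delta\le\tfrac12$. The cosine is strictly decreasing in $k$ on $\{0,\dots,\lfloor n/2\rfloor\}$, so $L_{ij}$ is decreasing in the cyclic distance, and it suffices to check $k=2$. (When $n\le 4$, the cases with no $k\ge2$ entries, or with the antipodal entry, are checked directly.)

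The main obstacle is showing $c\cos(4\pi/n)-b<0$. Using $\cos(4\pi/n)=2\kappa^2-1$, this condition becomes
\begin{equation*}
c\,(1-\cos(4\pi/n)) > 1 ,\quad\text{i.e.}\quad \frac{(1-\delta)\cdot 2(1-\kappa^2)}{1-\kappa}=2(1-\delta)(1+\kappa)>1 .
\end{equation*}
For $\delta\le\tfrac12$ this holds exactly when $\kappa>0$, which requires $n\ge5$. So the cases $n=3,4$ need separate handling.

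For $n=3$ there is no $k\ge2$ entry, so nothing needs checking. For $n=4$, $\kappa=0$ and the antipodal value $-c-b=-2c+1=2\delta-1$ is $<0$ only if $\delta<\tfrac12$. At $\delta=\tfrac12$ with $n=4$, I would instead note that $\mat{S}$ itself is the signless Laplacian of the $4$-cycle, which is bipartite and so has rank $3$; take $\mat{L}=\mat{S}$. Similarly, for $n=3$ take $\mat{L}=\mat{S}$, which has rank at most $3$ trivially.

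Finally, for $n\ge 5$ the inequality $2(1-\delta)(1+\kappa)\ge 1+\kappa>1$ holds, and monotonicity in $k$ makes every $k\ge2$ entry negative. This gives $\mat{S}=\max(0,\mat{L})$ with $\textup{rank}(\mat{L})\le3$.
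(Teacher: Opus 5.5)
Your proof is correct and is essentially the paper's own construction: your $c$ and $b$ are the paper's $r^2$ and $r^2\tau$, and your check at cyclic distance two via $\cos(4\pi/n)=2\kappa^2-1$ is the same inequality. The paper only requires $L_{ij}\le 0$ off the band, since $\max(0,0)=0$, so your formula already works at $n=4$, $\delta=\tfrac12$ (the antipodal entry is exactly $0$) and that special case is unnecessary; also, your claim $b\ge 0$ fails for small $n$ (e.g.\ $n=3$) but is never used.
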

\vspace{-2ex}}}

\begin{proof}
The high rank of $\mat{S}$ in \cref{eq:ring} belies a simple geometric picture: the ring graph can be easily visualized in the plane by placing its $n$ nodes at equispaced points on the circle, as in the left panel of \cref{fig:graphs}. In particular, suppose we define the two-dimensional vectors
\begin{equation}
\label{eq:ringvecs}
 \vec{\alpha}_k = r\left(\cos\tfrac{2\pi k}{n},\,\sin\tfrac{2\pi k}{n}\right)
\end{equation} 
to lie at equispaced points on a circle of radius $r$. Then \cref{thm:tsm1} suggests that we should seek a subzero matrix completion, of rank 3 or less, that interprets $\mat{S}$ as the $n\!\times\! n$ thresholded similarity matrix with elements
\begin{equation}
\label{eq:tsm-sym}
  S_{ij} = \max\big(0,\vec{\alpha}_i\!\cdot\!\vec{\alpha}_j\!-\! \tau\|\vec{\alpha}_i\|\|\vec{\alpha_j}\|\big).
\end{equation}
To match the elements in \cref{eq:ring}, we use the freedom to choose the radius~$r$ in \cref{eq:ringvecs} as well as the threshold~$\tau$ in \cref{eq:tsm-sym}. Specifically, to match the diagonal elements of $\mat{S}$ in \cref{eq:ring}, we require $1=r^2(1\!-\!\tau)$, and to match the positive off-diagonal elements, we require $\delta=r^2(\cos\frac{2\pi}{n}\!-\!\tau)$. Solving these equations, we find
\begin{align}
\label{eq:tau}
\tau &= (1\!-\!\delta)^{-1}(\cos\tfrac{2\pi}{n}-\delta), \\
\label{eq:r2}
r^2 &= (1\!-\!\delta)(1-\cos\tfrac{2\pi}{n})^{-1}.
\end{align}
This completes the proof for $n\!\leq\!3$, but for $n\!\geq\!4$, we must account for the remaining zero elements in \cref{eq:ring} that are further off the diagonal. To proceed, we make two observations: first, that non-adjacent nodes in the ring graph are separated by an angle of at least $\tfrac{4\pi}{n}$, and second, that the threshold $\tau$ in \cref{eq:tau} is a monotonically decreasing function of $\delta$, with a minimum value of  $\tau_{\text{min}}=2\cos(\frac{2\pi}{n})\!-\!1$
at $\delta\!=\!\frac{1}{2}$. From these observations, it follows that for non-adjacent nodes
\begin{displaymath}
  \vec{\alpha}_i\!\cdot\!\vec{\alpha}_j\!-\! \tau\|\vec{\alpha}_i\|\|\vec{\alpha_j}\|\
  \leq\ r^2\left[\cos\tfrac{4\pi}{n}\!-\!\tau_{\text{min}}\right]\,
 =\, r^2 \left[\cos\tfrac{4\pi}{n}\!-\!2\cos\tfrac{2\pi}{n}\!+\!1\right]\,
 =\, 2r^2 \cos\tfrac{2\pi}{n}(\cos\tfrac{2\pi}{n}\!-\!1)\, \leq\, 0,
\end{displaymath}
where in the last inequality we have used that $n\!\geq\! 4$. From the above, we see that the \textit{thresholded} cosine similarity for these off-diagonal elements is equal to zero, completing the proof.
\end{proof}

The subzero matrix completion in \cref{thm:ring} can also be extended to permutation matrices, a result we will need in \cref{sec:block-diagonal}. Low-rank parameterizations of permutation matrices have been exploited in recent work on assignment problems and shape-matching~\cite{droge2023-kissing}. The following result is also known from previous work on the signed rank of binary matrices~\cite{alon2016-sign}. 

\noindent\fcolorbox{black}{shade}{\parbox{0.985\textwidth}{\vspace{-2ex}
\begin{corollary}
\label{thm:ident}
Any $n\times $n permutation matrix (and in particular, the $n\!\times\! n$ identity matrix) has a subzero matrix completion of rank 3 or less. 
\end{corollary}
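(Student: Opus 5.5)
The plan is to reduce everything to the identity matrix and then handle permutations by reordering columns. First we take the case $\delta=0$ of \cref{thm:ring}. With $\delta=0$, the matrix $\mat{S}$ in \cref{eq:ring} is exactly $\mat{I}_n$, so \cref{thm:ring} gives a matrix $\mat{L}_n$ with $\mat{I}_n=\max(0,\mat{L}_n)$ and $\textup{rank}(\mat{L}_n)\leq 3$.

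For $n\leq 3$ the statement is trivial, since every $n\times n$ matrix has rank at most $3$ and $\mat{I}_n$ is its own completion. For $n\geq 4$ the proof of \cref{thm:ring} carries over as given. Concretely, we take $\tau=\cos\frac{2\pi}{n}$ and $r^2=(1-\cos\frac{2\pi}{n})^{-1}$, and place the points $\vec{\alpha}_k$ equispaced on the circle as in \cref{eq:ringvecs}.
\begin{itemize}
\item On the diagonal the thresholded similarity equals $1$.
\item For adjacent nodes it equals $0$, because $\delta=0$.
\item For all other pairs it is nonpositive, since the cosine of the angle between them is at most $\cos\frac{4\pi}{n}<\tau$.
\end{itemize}
The construction of \cref{thm:tsm1} then lifts these vectors to $\mathbb{R}^3$ and produces $\mat{L}_n=\mat{A}\mat{B}^\top$ with $\mat{A},\mat{B}\in\mathbb{R}^{n\times 3}$.

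Next, let $\mat{P}$ be any $n\times n$ permutation matrix, with $P_{ij}=1$ exactly when $j=\sigma(i)$. Then $\mat{P}=\mat{I}_n\mat{P}$, where right-multiplication by $\mat{P}$ simply permutes columns. We set $\mat{L}=\mat{L}_n\mat{P}$. The map $x\mapsto\max(0,x)$ acts elementwise, and a column permutation only relocates entries, so $\max(0,\mat{L}_n\mat{P})=\max(0,\mat{L}_n)\mat{P}=\mat{P}$. Moreover $\textup{rank}(\mat{L}_n\mat{P})=\textup{rank}(\mat{L}_n)\leq 3$, since $\mat{P}$ is invertible. In terms of factors, $\mat{L}=\mat{A}(\mat{P}^\top\mat{B})^\top$, which amounts to assigning the vector $\vec{b}_{\sigma^{-1}(j)}$ to column $j$. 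In other words, $\mat{P}$ is the thresholded similarity matrix of $\{\vec{\alpha}_i\}$ against the reindexed $\{\vec{\alpha}_{\sigma^{-1}(j)}\}$.

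There is no real obstacle in this argument. The only points needing care are confirming that the $\delta=0$ case of \cref{thm:ring} gives exact zeros rather than merely nonpositive values where they are required, which the check above does, and verifying that the elementwise max commutes with column permutations.
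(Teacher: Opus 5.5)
Your proposal is correct and follows the paper's proof. You specialize the ring construction of \cref{thm:ring} to $\delta=0$ to obtain a completion of $\mat{I}_n$ of rank at most 3, then permute it into an arbitrary permutation matrix, which preserves both rank and the elementwise rectification; the differences are cosmetic (the paper permutes rows via $\mat{P}\mat{\Omega}$ where you permute columns via $\mat{L}_n\mat{P}$, and your separate treatment of $n\leq 3$ also avoids the degenerate case $n=1$, where the explicit formula for $\mat{\Omega}$ in \cref{eq:Omega} divides by zero).
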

\vspace{-2ex}}}
\begin{proof}
The matrix in \cref{eq:ring} reduces to the identity matrix upon setting $\delta\!=\!0$. Making this substitution in \cref{eq:ringvecs,eq:tsm-sym,eq:tau,eq:r2}, we can write $\mat{I}_n = \max(0,\mat{\Omega})$, where
\begin{equation}
    \Omega_{ij} = \frac{\cos\frac{2\pi(i-j)}{n}-\cos\frac{2\pi}{n}}{1-\cos\frac{2\pi}{n}},
\label{eq:Omega}
\end{equation}
and from the previous proposition, we know that $\text{rank}(\mat{\Omega})\leq 3$. 
Finally, if $\mat{P}$ is a $n\!\times\!n$ permutation matrix, then we have $\mat{P}\! =\! \max(0,\mat{P}\mat{\Omega})$, and the rank of a matrix is preserved under permutations of its rows and columns, so that \mbox{$\text{rank}(\mat{P\Omega})\!=\!\text{rank}(\mat{\Omega})\leq 3$}.
\end{proof}

The above result also highlights an important distinction of problems in subzero matrix completion. It is well-known that the nuclear norm provides the tightest convex surrogate to the rank of a matrix, and therefore, under certain conditions, the lowest-rank completion of a real-valued matrix can be obtained by choosing the missing entries to minimize its nuclear norm~\cite{candes2009-exact}. But \textit{these conditions are generally not met for subzero matrix completion}, where the resulting low-rank solutions often have a much higher nuclear norm than the sparse matrices from which they are derived. One such example is provided by subzero matrix completions of the identity matrix.

\noindent\fcolorbox{black}{shade}{\parbox{0.985\textwidth}{\vspace{-2ex}
\begin{corollary}
\label{thm:nuclear}
Let $\mat{\Omega}$ denote the subzero matrix completion of the $n \times n$ identity matrix $\mat{I}_n$ in \cref{eq:Omega}. For large $n$, it is the case that 
$\|\mat{\Omega}\|_* \gg \|\mat{I}_n\|_* = \textup{rank}(\mat{I_n}) \gg \textup{rank}(\mat{\Omega})$.
\end{corollary}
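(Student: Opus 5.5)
The plan is to compute the nuclear norm of $\mat{\Omega}$ exactly from its spectral structure and then compare the three quantities.

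Write $c=\cos\frac{2\pi}{n}$. By \cref{eq:Omega},
\[
\mat{\Omega}=(1-c)^{-1}\big(\mat{C}-c\,\mat{J}\big),
\]
where $C_{ij}=\cos\frac{2\pi(i-j)}{n}$ and $\mat{J}$ is the all-ones matrix. Both matrices are symmetric circulants, so they are diagonalized by the discrete Fourier basis. In that basis:
\begin{itemize}
\item $\mat{C}=\frac12(\vec{u}\vec{u}^* + \bar{\vec{u}}\bar{\vec{u}}^*)$ with $u_k=e^{2\pi i k/n}$, so $\mat{C}$ has eigenvalue $n/2$ (with multiplicity two) on the frequencies $\pm1$.
\item $\mat{J}$ has eigenvalue $n$ on the constant vector, i.e.\ the zero frequency.
\end{itemize}
For $n\ge3$ these three eigenvectors are orthogonal. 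Hence $\mat{\Omega}$ is symmetric with eigenvalues $\frac{n}{2(1-c)}$ (twice), $-\frac{nc}{1-c}$ (once), and zero otherwise. This incidentally confirms $\mathrm{rank}(\mat{\Omega})=3$ for $n\ge 5$, where $c\neq 0$.

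The nuclear norm is therefore the sum of the absolute values of these eigenvalues:
\[
\|\mat{\Omega}\|_*=\frac{n(1+|c|)}{1-c}.
\]
For large $n$, $1-c=2\sin^2\frac{\pi}{n}\approx 2\pi^2/n^2$ and $c\to1$, so $\|\mat{\Omega}\|_*\approx n^3/\pi^2$.

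Comparing the three quantities gives the chain:
\begin{itemize}
\item $\|\mat{\Omega}\|_*\sim n^3/\pi^2$;
\item $\|\mat{I}_n\|_* = n = \mathrm{rank}(\mat{I}_n)$;
\item $\mathrm{rank}(\mat{\Omega})\le 3$ by \cref{thm:ident}.
\end{itemize}
Thus $\|\mat{\Omega}\|_*/\|\mat{I}_n\|_*\sim n^2/\pi^2\to\infty$ and $\mathrm{rank}(\mat{I}_n)/\mathrm{rank}(\mat{\Omega})\ge n/3\to\infty$, which is the claimed chain of inequalities.

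The only step needing care is the spectral decomposition: we must check that the constant vector and the frequency-$\pm1$ Fourier vectors are mutually orthogonal, which holds because $\sum_k e^{\pm 2\pi i k/n}=0$ for $n\ge 2$. Once that is settled, the eigenvalues of $\mat{C}-c\mat{J}$ simply add on disjoint eigenspaces. If one prefers to avoid exact computation, a lower bound suffices: $\|\mat{\Omega}\|_*\ge \mathrm{tr}(\mat{\Omega})=n$ is too weak, so instead use $\|\mat{\Omega}\|_*\ge \|\mat{\Omega}\|_2 \ge \frac{n}{2(1-c)}$, obtained from the Rayleigh quotient with $\vec{u}$. This already gives the $\Theta(n^3)$ growth.
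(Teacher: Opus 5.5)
Your proof is correct and follows the same route as the paper. Both arguments diagonalize the circulant $\mat{\Omega}$ in the Fourier basis, take its three nonzero eigenvalues, sum their absolute values to get $\|\mat{\Omega}\|_* = n(1+|c|)/(1-c)\sim n^3/\pi^2$, and compare this with $n$ and $3$. Your eigenvalues $\frac{n}{2(1-c)}$ (twice) and $-\frac{nc}{1-c}$ are in fact the right ones, since they sum to $\mathrm{tr}\,\mat{\Omega}=n$, whereas the paper's proof attaches the factor $\cos\phi$ to the wrong eigenvalue — a slip that happens not to change the nuclear norm.
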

\vspace{-2ex}}}
\begin{proof}
The matrix $\mat{\Omega}$ is circulant with Fourier modes as eigenvectors. The lowest three modes account for the nonzero eigenvalues, which are given by $-n(1\!-\!\cos\phi)^{-1}$ and $\frac{n}{2}\cos\phi\,(1\!-\!\cos\phi)^{-1}$ 
(with multiplicity two) where $\phi=\frac{2\pi}{n}$. Since $\cos\phi \approx 1-\frac{2\pi^2}{n^2}$ for large $n$, it follows in this limit~that
\begin{equation}
\|\mat{\Omega}\|_* = n\left(\frac{1+|\!\cos\phi|}{1-\cos\phi}\right) \sim \frac{n^3}{\pi^2},
\end{equation}
and we see that the nuclear norm of the subzero matrix completion $\mat{\Omega}$ grows \textit{cubically} with $n$. This is true even though the matrix $\mat{\Omega}$ itself is always of rank 3 or less.
\end{proof}


\subsection{Example: hypercube graph}

Consider a sparse nonnegative matrix $\mat{S}$ of size $2^d\!\times\!2^d$ whose row and column indices enumerate the vertices of a hypercube in $d$ dimensions. Also, let $\vec{v}_i\!\in\!\{-1,+1\}^d$ denote the vertex indexed by the $i^\text{th}$ row or column, and suppose that the elements of $\mat{S}$ are given by
\begin{equation}
\label{eq:hypercube}
S_{ij} = \left\{
 \begin{array}{ll}
  1 & \mbox{if $i\!=\!j$}, \\
  \delta & \mbox{if}\ \|\vec{v}_i\!-\!\vec{v}_j\| = 2, \\
  0 & \text{otherwise.}
 \end{array}
 \right.  
\end{equation}
where $\delta\!>\!0$. Note that for $\delta\!=\!\frac{1}{d}$, the matrix in \cref{eq:hypercube} is equal to the signless Laplacian of the hypercube graph in the middle panel of \cref{fig:graphs}, while for $\delta\!=\!0$, it reduces to the identity matrix. In general the rank of~$\mat{S}$ is exponential in~$d$, suffering from the curse of dimensionality. But we can lift this curse and obtain a much lower rank by replacing the zeros of~$\mat{S}$ in \cref{eq:hypercube} with negative values.

\vspace{1ex}
\noindent\fcolorbox{black}{shade}{\parbox{0.985\textwidth}{\vspace{-2ex}
\begin{proposition}
\label{thm:hypercurbe}
Let $\mat{S}$ be the sparse $2^d\!\times\! 2^d$ matrix in \cref{eq:hypercube}, and suppose $\delta\!\in\!\left(0,\frac{1}{d}\right)$. Then there exists a matrix $\mat{L}$, satisfying $\mat{S}=\max(0,\mat{L})$, with $\textup{rank}(\mat{L})\leq d\!+\!1$.
\end{proposition}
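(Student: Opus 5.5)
Following the template of \cref{thm:ring}, we will view $\mat{S}$ as a thresholded similarity matrix of the form \cref{eq:tsm-sym} and then invoke \cref{thm:tsm1}. The natural vectors are the hypercube vertices themselves, rescaled: set $\vec{\alpha}_i = r\,\vec{v}_i\in\mathbb{R}^d$. Every such vector has norm $r\sqrt{d}$. Two vertices at Hamming distance $k$ (so that $\|\vec{v}_i-\vec{v}_j\|^2=4k$) satisfy $\vec{v}_i\cdot\vec{v}_j = d-2k$. Hence
\begin{displaymath}
\vec{\alpha}_i\cdot\vec{\alpha}_j-\tau\|\vec{\alpha}_i\|\|\vec{\alpha}_j\| = r^2\left(d-2k-\tau d\right),
\end{displaymath}
which is a decreasing affine function of $k$. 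Since all norms are equal, the $\tau$-term is just the constant $\tau r^2 d$. So $L_{ij} = r^2(\vec{v}_i\cdot\vec{v}_j - \tau d)$ is a rank-$d$ matrix plus a constant matrix, and its rank is at most $d+1$, in agreement with \cref{thm:tsm1}.

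Next we choose $r$ and $\tau$ to match the prescribed entries. The diagonal ($k=0$) requires $r^2 d(1-\tau)=1$. The nearest neighbours ($k=1$) require $r^2(d-2-\tau d)=\delta$. Subtracting the two equations gives $2r^2=1-\delta$, so $r^2=(1-\delta)/2$. Then $\tau d = d - 2/(1-\delta)$, and this lies in $(-1,1)\cdot d$ for the range of $\delta$ under consideration; if necessary, the threshold can simply be treated as an arbitrary constant, because the construction only needs $L$ to be affine in $\vec{v}_i\cdot\vec{v}_j$. It is simplest to write the completion directly as $L_{ij} = a\,\vec{v}_i\cdot\vec{v}_j + c$, with $a,c$ solving $ad+c=1$ and $a(d-2)+c=\delta$. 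This gives $a=(1-\delta)/2>0$ and $c = 1-d(1-\delta)/2$.

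The remaining step, and the only real obstacle, is to check that every pair at Hamming distance $k\ge2$ gives $L_{ij}\le 0$. Since $a>0$, $L$ decreases in $k$, so it suffices to check $k=2$. There $L = a(d-4)+c = \delta - 2a = \delta-(1-\delta) = 2\delta-1$. This is nonpositive precisely when $\delta\le\tfrac12$, which holds whenever $\delta<\tfrac1d$ and $d\ge2$. The case $d=1$ is trivial, since then $\mat{S}$ is $2\times2$ and has rank at most $2$. Thus $\max(0,\mat{L})=\mat{S}$ and $\operatorname{rank}(\mat{L})\le d+1$.

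If the intended construction instead uses the threshold form literally with $\tau\in(-1,1)$, then the hypothesis $\delta<1/d$ is what keeps $\tau$ in range. I would check that constraint at the end, but it is not needed for the rank bound itself.
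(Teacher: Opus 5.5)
Your proposal is correct and matches the paper's proof: the paper also sets $\vec{\alpha}_i = r\vec{v}_i$ and invokes \cref{thm:tsm1} with $r^2=\frac{1}{2}(1-\delta)$ and $\tau = 1-\frac{2}{d}(1-\delta)^{-1}$, which are exactly your values. Your explicit check that the entry at Hamming distance $k\geq 2$ equals $1-k(1-\delta)\leq 2\delta-1\leq 0$ fills in the step the paper dismisses as ``a similar exercise,'' and it correctly shows that the zeros only need $\delta\leq\frac{1}{2}$, which $\delta<\frac{1}{d}$ guarantees for $d\geq 2$.
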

\vspace{-2ex}}}

\begin{proof}
Again we appeal to \cref{thm:tsm1} and identify vectors $\vec{\alpha}_i\!\in\!\mathbb{R}^d$ and a threshold $\tau\!\in\!(-1,1)$ such that the elements of $\mat{S}$ are given by the thresholded similarities in \cref{eq:tsm-sym}. In this case, it is natural to point these vectors at the vertices of the hypercube, with $\vec{\alpha}_i\! =\! r\vec{v}_i$ for some $r\!>\!0$. Then a similar exercise shows that \cref{eq:tsm-sym,eq:hypercube} define the same matrix elements upon setting $r^2 = \frac{1}{2}(1\!-\!\delta)$ and $\tau = 1-\frac{2}{d}(1\!-\!\delta)^{-1}$.
\end{proof}


\subsection{Example: block diagonal matrices}
\label{sec:block-diagonal}

Our final example builds in a different way on the ring graph and illustrates another way that subzero matrix completions can significantly lower the rank of a sparse matrix. Suppose in particular that $\mat{S}$ is a \textit{block diagonal} matrix of the form
\begin{equation}
\label{eq:blocks}
\mat{S} = \left[
\begin{array}{ccccc}
\mat{S}_1 & \mat{0} & \mat{0} & \cdots & \mat{0} \\
\mat{0} & \mat{S}_2 & \mat{0} & \cdots & \mat{0} \\
\mat{0} & \mat{0} & \mat{S}_3 & \cdots & \mat{0} \\
\vdots & \vdots & \vdots & \ddots & \vdots  \\
\mat{0} & \mat{0} & \mat{0} & \cdots & \mat{S}_k
\end{array}
\right]
\end{equation}
where each block $\mat{S}_b$ is a (possibly dense) $m_b\!\times\! n_b$ nonnegative matrix. There are many reasons to study this type of idealized structure: for example, if the adjacency matrix of a symmetric graph has this form (or if the rows and columns can be permuted to yield this form), then its nonzero blocks would correspond to the graph's \textit{connected components}, as shown in the right panel of \cref{fig:graphs}.

The matrix in \cref{eq:blocks}
has a rank equal to the sum of the ranks of its non-zero blocks:
namely, \mbox{$\text{rank}(\mat{S}) = \sum_{b=1}^k \text{rank}(\mat{S}_b)$.}
Not surprisingly, for such matrices one can also construct subzero matrix completions in a blockwise fashion. In particular, suppose that there exist matrices $\mat{L}_b$ satisfying \mbox{$\mat{S}_b = \max(0,\mat{L}_b)$} for \mbox{$1\!\leq\!b\leq\!k$}, and let
\begin{equation}
\label{eq:Ladd}
\mat{L} = \left[
\begin{array}{ccccc}
\mat{L}_1 & \mat{0} & \mat{0} & \cdots & \mat{0} \\
\mat{0} & \mat{L}_2 & \mat{0} & \cdots & \mat{0} \\
\mat{0} & \mat{0} & \mat{L}_3 & \cdots & \mat{0} \\
\vdots & \vdots & \vdots & \ddots & \vdots  \\
\mat{0} & \mat{0} & \mat{0} & \cdots & \mat{L}_k
\end{array}
\right].
\end{equation}
The above provides one possible subzero matrix completion of the block-diagonal matrix in \cref{eq:blocks}, with $\text{rank}(\mat{L}) = \sum_{b=1}^k \text{rank}(\mat{L}_b)$, but in general \textit{this is not the one of lowest rank}. To show this, we start by considering the special case where all of the blocks are of the same size.

\vspace{1ex}
\noindent\fcolorbox{black}{shade}{\parbox{0.985\textwidth}{\vspace{-2ex}
\begin{lemma}
\label{lemma:block}
Suppose that each block $\mat{S}_b$ in \cref{eq:blocks} is of equal size with $m$ rows and $n$ columns. Then there exists a matrix $\mat{L}$, satisfying $\mat{S} = \max(0,\mat{L})$, with 
\mbox{$\textup{rank}(\mat{L}) \leq 3\min(m,n)$}.
\end{lemma}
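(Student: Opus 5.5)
The plan is to combine the block structure with the rank-3 completion of the $k\times k$ identity from \cref{thm:ident}, using a Kronecker/Hadamard-type construction. Assume without loss of generality that $m\le n$ (otherwise transpose the whole argument). Each block then factors as $\mat{S}_b = \mat{A}_b\mat{B}_b^\top$ with $\mat{A}_b\in\mathbb{R}^{m\times m}$ and $\mat{B}_b\in\mathbb{R}^{n\times m}$; for instance, take $\mat{A}_b=\mat{I}_m$ and $\mat{B}_b=\mat{S}_b^\top$. Equivalently, $(\mat{S}_b)_{ij}=\vec{a}^{(b)}_i\cdot\vec{b}^{(b)}_j$ with vectors in $\mathbb{R}^{m}$. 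The goal is to build $\mat{L}$ whose $(b,c)$ block equals $\Omega_{bc}\,\mat{S}_b$-like quantities, but with a shared low-rank structure across blocks.

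Concretely, let $\mat{\Omega}$ be the $k\times k$ matrix of \cref{eq:Omega}, so $\Omega_{bb}=1$, $\Omega_{bc}\le 0$ for $b\ne c$, and $\mat{\Omega}=\mat{U}\mat{V}^\top$ with $\mat{U},\mat{V}\in\mathbb{R}^{k\times 3}$. A naive choice would be $\mat{L}$ with blocks $\Omega_{bc}\mat{A}_b\mat{B}_c^\top$, but $\mat{A}_b\mat{B}_c^\top$ need not have a sign, so the off-diagonal blocks may become positive. To avoid this I will use the freedom in the factorization. Set $\mat{A}_b=\mat{I}_m$ for every $b$, and let $\mat{B}_c=\mat{S}_c^\top$, which is entrywise nonnegative. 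Then define the $(b,c)$ block of $\mat{L}$ to be $\Omega_{bc}\,\mat{S}_c$. The diagonal blocks equal $\mat{S}_b$, and the off-diagonal blocks are $\Omega_{bc}\mat{S}_c\le 0$ entrywise because $\mat{S}_c\ge 0$ and $\Omega_{bc}\le 0$. Hence $\max(0,\mat{L})=\mat{S}$.

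For the rank, write $\mat{L}=(\mat{U}\otimes \mat{I}_m)\,\big[(\mat{V}^\top\otimes\mat{I}_m)\,\mathrm{diag}(\mat{S}_1,\dots,\mat{S}_k)\big]$. Blockwise, $\sum_{\ell=1}^3 U_{b\ell}V_{c\ell}\mat{S}_c=\Omega_{bc}\mat{S}_c$, as required. The left factor has $3m$ columns, so $\mathrm{rank}(\mat{L})\le 3m$. The case $n<m$ follows symmetrically: put the block $\Omega_{bc}\mat{S}_b$ in position $(b,c)$ and factor as $\mathrm{diag}(\mat{S}_b)(\mat{U}\otimes\mat{I}_n)(\mat{V}\otimes\mat{I}_n)^\top$, which gives rank at most $3n$. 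Together these give $3\min(m,n)$.

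The main obstacle I expected is the sign control of the off-diagonal blocks. A generic low-rank factorization of the blocks does not guarantee negativity there. This is resolved by using the trivial factorization in which one factor is $\mat{S}_c$ itself, which is nonnegative. Once that is in place, the construction only needs the nonpositivity of the off-diagonal entries of $\mat{\Omega}$, which is immediate from \cref{eq:Omega}. Since $k\le 2$ makes the rank-3 bound on $\mat{\Omega}$ trivial (one can simply use $\mat{I}_k$), no special care for small $k$ is needed.
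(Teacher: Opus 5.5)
Your proposal is correct and is essentially the paper's construction. The paper also sets the $(b,c)$ block of $\mat{L}$ to $\Omega_{bc}\mat{S}_c$, written as a Hadamard product of the block-replicated row $[\mat{S}_1\ \cdots\ \mat{S}_k]$ with $\mat{\Omega}\otimes\mathbb{1}_m\mathbb{1}_n^{\top}$, and it gets sign control from $\mat{S}_c\geq 0$ together with $\Omega_{bc}\leq 0$ off the diagonal. The differences are only in bookkeeping: you bound the rank by an explicit factorization through $\mat{U}\otimes\mat{I}_m$ where the paper invokes rank rules for Kronecker and Hadamard products, and you absorb $k\leq 2$ by substituting $\mat{I}_k$ for $\mat{\Omega}$ where the paper treats $k=1,2$ with separate constructions.
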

\vspace{-2ex}}}
\vspace{0.5ex}

Note that the lemma asserts the existence of a subzero matrix completion whose rank is \textit{independent of the number of blocks}, $k$. Furthermore, if each block $S_b$ has strictly positive elements and is of full rank, then this latent rank is within a constant factor of the lower bound in \cref{thm:rankL_bound}.

\begin{proof}
Without lack of generality, we suppose that $m\!\leq\!n$. (If instead $n\!\leq\!m$, we can apply the same reasoning to the transpose of $\mat{S}$.) We start with the simplest cases to build intuition. If $k\!=\!1$, then we can trivially take $\mat{L}\!=\!\mat{S}_1$ so that $\text{rank}(\mat{L})\!\leq\! m$. Likewise, if $k\!=\!2$, we consider the subzero matrix completion given by
\begin{equation}
\label{eq:row_flip}
\mat{L} = \left[
\begin{array}{rr}
\mat{S}_1 & -\mat{S}_2 \\
-\mat{S_1} & \mat{S}_2
\end{array}
\right].\,
\end{equation}
where the last $m$ rows of $\mat{L}$ are related to its first $m$ rows by a flipped sign. So in this case again, we see that $\text{rank}(\mat{L})\!\leq\! m$, satisfying the claim of the theorem.

For $k\!\geq\!3$, the proof is based on a similar construction as in the example of the ring graph. At a high level, the proof has two steps. First, we exhibit a subzero matrix completion $\mat{L}$, satisfying $\mat{S}=\max(0,\mat{L})$, from a combination of Kronecker and Hadamard products. Then we use the properties of these products to bound the rank of $\mat{L}$.

The form of \cref{eq:row_flip} suggests how to proceed for the more general case. Let $\mathbb{1}_m$ denote the $m$-dimensional column vector of all ones, and consider the matrix
\begin{equation}
\mat{L}\ =\ \left(\mathbb{1}_m\otimes
\left[\begin{array}{llcr}
  \!\!\mat{S}_1\! & \mat{S}_2\! & \cdots\! & \mat{S}_k\!\!
  \end{array}\right]\right)\
  \odot\ \left(\mat{\Omega}\, \otimes\, \mathbb{1}_m{\mathbb{1}_n}^{\!\!\!\!\top}\right)
\label{eq:hadamard}
\end{equation}
where $\otimes$ and $\odot$ denote, respectively, the Kronecker and Hadamard products, and $\mat{\Omega}$ is the subzero matrix completion of the $k\!\times\!k$ identity matrix in \cref{eq:Omega}. The factors of the Hadamard product in this construction are each of size $km\times kn$, consisting of $k^2$ blocks of size $m\!\times\! n$. More explicitly, we can write this construction~as
\begin{equation}
\mat{L}\ =\ 
  \left[
\begin{array}{rrrrr}
\blue{\mat{S}_1} & \blue{\mat{S}_2} & \blue{\mat{S}_3} & \cdots & \blue{\mat{S}_k} \\
\blue{\mat{S}_1} & \blue{\mat{S}_2} & \blue{\mat{S}_3} & \cdots & \blue{\mat{S}_k} \\
\vdots\hspace{1ex} & \vdots\hspace{1ex} & \vdots\hspace{1ex} & \ddots & \vdots\hspace{1ex} \\
\blue{\mat{S}_1} & \blue{\mat{S}_2} & \blue{\mat{S}_3} & \cdots & \blue{\mat{S}_k}
\end{array}
\right]\
\odot\
 \left[
\begin{array}{rrrrr}
\blue{\Omega_{11}\mathbb{1}} & \red{\Omega_{12}\,\mathbb{1}} & \red{\Omega_{13}\mathbb{1}} & \cdots & \red{\Omega_{1k}\mathbb{1}} \\
\red{\Omega_{21}\mathbb{1}} & \blue{\Omega_{22}\mathbb{1}} & \red{\Omega_{23}\mathbb{1}} & \cdots & \red{\Omega_{2k}\mathbb{1}} \\
\vdots\hspace{2ex} & \vdots\hspace{2ex} & \vdots\hspace{2ex} & \ddots & \vdots\hspace{2ex} \\
\red{\Omega_{k1}\mathbb{1}} & \red{\Omega_{k2}\mathbb{1}} & \red{\Omega_{k3}\mathbb{1}} & \cdots & \blue{\Omega_{kk}\mathbb{1}} \end{array}
\right],
\label{eq:hadamard2}
\end{equation}
where we have colored the nonnegative blocks of these factors in blue and the nonpositive blocks in red; here we have also used $\mathbb{1}=\mathbb{1}_m{\mathbb{1}_n}^{\!\!\top}$ as shorthand for the $m\!\times\!n$ matrix of all ones. Noting that $\Omega_{ii}\!=\!1$ along the diagonal of the right factor, we see that the Hadamard (elementwise) product for $\mat{L}$ in \cref{eq:hadamard2} yields a subzero matrix completion of the block-diagonal matrix $\mat{S}$ in \cref{eq:blocks}.

To complete the proof, we must bound the rank of $\mat{L}$. Here we exploit that (i) the rank of a Kronecker product is equal to the product of the ranks of its factors, and (ii) the rank of a Hadamard product is less than or equal to the product of the ranks of its factors. Thus from \cref{eq:hadamard} we have
\begin{align}
\text{rank}(\mat{L})\ 
&\leq\ \text{rank}\left(\mathbb{1}_m\otimes
\left[\begin{array}{llcr}
  \!\!\mat{S}_1\!\! & \mat{S}_2\!\! & \cdots\!\! & \mat{S}_k\!\!
  \end{array}\right]\right)\,
  \times\,
  \text{rank}\big(\mat{\Omega}\, \otimes\, \mathbb{1}_m{\mathbb{1}_n}^{\!\!\!\!\top}\big), \\
 &=\ \text{rank}(\mathbb{1}_m)\, \times\,
 \text{rank}\big(\left[\begin{array}{llcr}
  \!\!\mat{S}_1\!\! & \mat{S}_2\!\! & \cdots\!\! & \mat{S}_k\!\!
  \end{array}\right]\big)\,
  \times\,
  \text{rank}(\mat{\Omega})\,\times\, \text{rank}\big(\mathbb{1}_m{\mathbb{1}_n}^{\!\!\!\!\top}\big), \\
  &=\ 
 \text{rank}\left(\left[\begin{array}{llcr}
  \!\!\mat{S}_1\!\! & \mat{S}_2\!\! & \cdots\!\! & \mat{S}_k\!\!
  \end{array}\right]\right)\,
  \times\,
  \text{rank}(\mat{\Omega}), \\
  &\leq\ m\, \times\, \text{rank}(\mat{\Omega}), \\
 &\leq\ 3m,
\end{align}
yielding an upper bound on the rank of $\mat{L}$ that is independent of the number of blocks, $k$. This completes the proof.
\end{proof}

Armed with this lemma, we now consider the more general case for block diagonal matrices whose $k$ blocks are of different sizes. The following result provides one possible generalization.

\vspace{1ex}
\noindent\fcolorbox{black}{shade}{\parbox{0.985\textwidth}{\vspace{-2ex}
\begin{proposition}
\label{theorem:block}
Let $\mat{S}$ denote the sparse matrix in \cref{eq:blocks} with blocks of size $m_b\!\times\!n_b$ for \mbox{$1\!\leq\!b\leq\!k$}, and let $m_*$ and $n_*$ denote, respectively, the maximum number of rows and columns in any block: \mbox{$m_*\!=\!\max_b m_b$} and \mbox{$n_*\!=\!\max_b n_b$}. Then there exists a matrix $\mat{L}$, satisfying \mbox{$\mat{S} = \max(0,\mat{L})$}, with
\mbox{$\textup{rank}(\mat{L}) \leq 3\min(m_*,n_*)$}.
\end{proposition}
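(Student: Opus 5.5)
The plan is to reduce to \cref{lemma:block} by padding every block to a common size. Without loss of generality suppose $m_*\leq n_*$; otherwise we apply the same argument to $\mat{S}^\top$, since transposition preserves both rank and the relation $\mat{S}=\max(0,\mat{L})$.

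For each $b$, form an $m_*\times n_*$ block $\tilde{\mat{S}}_b$ by appending $m_*-m_b$ rows of zeros and $n_*-n_b$ columns of zeros to $\mat{S}_b$. Let $\tilde{\mat{S}}$ be the block-diagonal matrix with blocks $\tilde{\mat{S}}_1,\ldots,\tilde{\mat{S}}_k$. This is a nonnegative matrix of the form in \cref{eq:blocks} with all blocks of equal size $m_*\times n_*$. \Cref{lemma:block} then gives a matrix $\tilde{\mat{L}}$ with $\tilde{\mat{S}}=\max(0,\tilde{\mat{L}})$ and $\textup{rank}(\tilde{\mat{L}})\leq 3\min(m_*,n_*)$.

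Next observe that $\mat{S}$ is a submatrix of $\tilde{\mat{S}}$. Specifically, if we select from $\tilde{\mat{S}}$ the rows and columns that do not come from padding, the zero off-diagonal blocks of $\tilde{\mat{S}}$ restrict to the zero off-diagonal blocks of $\mat{S}$, and each padded block restricts to $\mat{S}_b$. Let $\mat{L}$ be the submatrix of $\tilde{\mat{L}}$ on the same row and column indices. Because $\max(0,\cdot)$ acts elementwise, restricting to these indices preserves the relation, so $\mat{S}=\max(0,\mat{L})$. Finally, a submatrix has rank no greater than the matrix it comes from, so $\textup{rank}(\mat{L})\leq\textup{rank}(\tilde{\mat{L}})\leq 3\min(m_*,n_*)$.

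There is no real obstacle here. The only point needing care is the bookkeeping: the selected indices must recover exactly the block-diagonal layout of $\mat{S}$. This holds because padding only adds rows and columns inside each block's own row and column strips, so deleting them returns the original layout.
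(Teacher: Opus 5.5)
Your proposal is correct and follows essentially the same route as the paper. The paper also zero-pads each block to size $m_*\times n_*$, applies \cref{lemma:block} to the padded block-diagonal matrix, and deletes the padding rows and columns, using that a submatrix cannot have larger rank; your preliminary reduction to $m_*\leq n_*$ via transposition is harmless but unnecessary, since \cref{lemma:block} already gives the bound $3\min(m_*,n_*)$ directly.
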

\vspace{-2ex}}}
\vspace{0.5ex}

\begin{proof}
For each block $\mat{S}_b$ in \cref{eq:blocks}, let $\mat{Z}_b$ define the \textit{zero-padded} matrix with $m_*$ rows and~$n_*$ columns that has $\mat{S}_b$ as its upper left $m_b\!\times\! n_b$ block. Let $\mat{Z}$ denote the $km_*\!\times\!kn_*$ block-diagonal matrix that has the blocks $\mat{Z}_b$ along its diagonal. By the previous lemma, there exists a matrix $\mat{K}$, satisfying $\mat{Z}=\max(0,\mat{K})$, with $\text{rank}(\mat{K})\leq 3\min(m_*,n_*)$. Finally, we obtain a matrix $\mat{L}$ satisfying $\mat{S}=\max(0,\mat{L})$ by deleting rows and columns from $\mat{K}$, and by constructing $\mat{L}$ in this way it will be the case that $\text{rank}(\mat{L})\leq\text{rank}(\mat{K})$.
\end{proof}


\subsection{Counterexample: relative primes}
\label{sec:gcd}

Not all sparse nonnegative matrices are conducive to subzero matrix completions of low rank. The following counterexample is instructive. Consider the $n\!\times\! n$ matrix with elements
\begin{equation}
\label{eq:logGCD}
S_{ij} = \log \mbox{\textsc{gcd}}(i,j),
\end{equation}
where $\mbox{\textsc{gcd}}(i,j)$ indicates the greatest common divisor of the integers $i$ and $j$; thus the $ij^{\text{th}}$ element is zero whenever $i$ and $j$ are relatively prime. It is a well-known result from number theory~\cite[Theorem 332]{hardy1975-intro}
that in the limit $n\!\rightarrow\!\infty$ two randomly chosen integers between $1$ and $n$ (inclusive) are relatively prime with probability $\frac{6}{\pi^2} \approx 0.60793$. Thus for large $n$ this matrix is over 60\% sparse. 

For this example, it is possible not only to determine the rank of $\mat{S}$, but also to provide a lower bound on the rank of any subzero matrix completion. It would be surprising---given the seemingly random distribution of the primes~\cite{cramer1936-order}, the difficulty of factoring large integers~\cite{crandall2005-prime}, and the computational complexity of long-established algorithms for calculating \textsc{gcd}s~\cite{moenck1973-fast}---to discover an asymptotically low-dimensional embedding of the integers that reproduces the elements of $\mat{S}$ in \cref{eq:logGCD}. The next results show that this skepticism is justified: not only does the rank of $\mat{S}$ grow in lockstep with $n$, but so does the rank of any subzero matrix completion of $\mat{S}$.

\vspace{1ex}
\noindent\fcolorbox{black}{shade}{\parbox{0.985\textwidth}{\vspace{-2ex}
\begin{proposition}
\label{thm:rankS_logGCD}
Let $\mat{S}$ denote the $n\!\times\! n$ matrix with elements $S_{ij} = \log\mbox{\textup{\textsc{gcd}}}(i,j)$, and let $\Pi(n)$ denote the number of prime powers less than or equal to~$n$. Then $\textup{rank}(\mat{S}) = \Pi(n)$.
\end{proposition}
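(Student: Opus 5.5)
The plan is to write $\log$ of the \textsc{gcd} as a sum over prime powers, using the von Mangoldt function $\Lambda$. Recall that $\Lambda(q)=\log p$ if $q=p^e$ is a prime power ($e\geq 1$) and $\Lambda(q)=0$ otherwise. The classical identity $\log N=\sum_{q\mid N}\Lambda(q)$, applied with $N=\textsc{gcd}(i,j)$, gives
\begin{equation*}
S_{ij}=\log\textsc{gcd}(i,j)=\sum_{q\mid \textsc{gcd}(i,j)}\Lambda(q)=\sum_{q\in\mathcal{Q}_n}\Lambda(q)\,[q\mid i]\,[q\mid j],
\end{equation*}
where $\mathcal{Q}_n$ is the set of prime powers $q\leq n$. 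The sum can be restricted to $\mathcal{Q}_n$ because any $q$ dividing both $i$ and $j$ satisfies $q\leq n$.

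In matrix form this reads $\mat{S}=\mat{D}\mat{\Lambda}\mat{D}^\top$. Here $\mat{D}$ is the $n\times\Pi(n)$ binary matrix with $D_{iq}=[q\mid i]$, and $\mat{\Lambda}$ is the $\Pi(n)\times\Pi(n)$ diagonal matrix with entries $\Lambda(q)=\log p>0$. This factorization immediately gives the upper bound $\textup{rank}(\mat{S})\leq\Pi(n)$.

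For the matching lower bound, it suffices to show that $\mat{D}$ has full column rank $\Pi(n)$. Then $\mat{S}=(\mat{D}\mat{\Lambda}^{1/2})(\mat{D}\mat{\Lambda}^{1/2})^\top$ is a Gram matrix of a full-column-rank matrix, so its rank equals $\Pi(n)$.

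To prove that $\mat{D}$ has full column rank, restrict to the $\Pi(n)$ rows indexed by the prime powers $i\in\mathcal{Q}_n$ themselves. This gives a square submatrix with entries $[q\mid q']$ for $q,q'\in\mathcal{Q}_n$. Order $\mathcal{Q}_n$ increasingly. Since $q\mid q'$ implies $q\leq q'$, with $[q\mid q]=1$, the submatrix is triangular with ones on the diagonal. It is therefore invertible, and $\mat{D}$ has rank $\Pi(n)$.

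The main point requiring care is this lower bound. The key observation is that divisibility is a partial order compatible with the usual order on integers, which makes the restricted submatrix unitriangular. Positivity of $\mat{\Lambda}$ then converts full rank of $\mat{D}$ into full rank of $\mat{S}$.

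An alternative route is to note that $\mat{S}=\mat{E}\,\mathrm{diag}(\Lambda)\,\mat{E}^\top$, where $\mat{E}_{ik}=[k\mid i]$ is the full $n\times n$ unitriangular divisibility matrix. In this form the rank is the number of nonzero $\Lambda(k)$ for $k\leq n$, which is again $\Pi(n)$.
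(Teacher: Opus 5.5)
Your proof is correct, and it takes a genuinely different route from the paper. The paper proves the two bounds separately. For the upper bound, it uses $\gcd(k\ell,j)=\gcd(k,j)\gcd(\ell,j)$ for coprime $k,\ell>1$ to show that each row indexed by a non-prime-power $m>1$ is the sum of two earlier rows (row 1 being zero). For the lower bound, it exhibits the principal submatrix $\mat{\Phi}$ on prime-power indices, which after permutation is block diagonal with blocks $(\log p)\min(a,b)$ of positive determinant. You instead get both bounds from the single Smith-type factorization $\mat{S}=\mat{D}\mat{\Lambda}\mat{D}^\top$ coming from $\log N=\sum_{q\mid N}\Lambda(q)$, and you reduce the lower bound to the unitriangularity of divisibility on prime powers. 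Your route buys more structure. It shows directly that $\mat{S}$ is positive semidefinite with an explicit binary Gram factor. Your alternative form $\mat{S}=\mat{E}\,\mathrm{diag}(\Lambda)\,\mat{E}^\top$ also generalizes verbatim to any matrix with entries $f(\gcd(i,j))$: its rank is the number of $k\le n$ at which $f*\mu$ is nonzero, where $\mu$ is the M\"obius function. The two lower-bound arguments are closely linked. On rows and columns indexed by powers of a single prime $p$, your $\mat{D}$ is the lower-triangular matrix of ones, and its Gram matrix is exactly the paper's $\min(a,b)$ block. So your unitriangularity observation is the paper's row-reduction step carried out for all primes at once. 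The paper's route, in turn, needs no arithmetic-function machinery. It also phrases the lower bound through an explicit nonsingular submatrix, in the same spirit as the submatrix arguments it uses for subzero completions in \cref{thm:rankL_bound} and \cref{thm:rankL_logGCD}.
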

\vspace{-2ex}}}
\vspace{-2ex}

\begin{proof}
We prove the result in two steps---first by showing that $\text{rank}(\mat{S}) \leq \Pi(n)$, and then by showing that $\text{rank}(\mat{S}) \geq \Pi(n)$. To prove the upper bound, we examine the number of linearly independent rows in $\mat{S}$. Note that the first row of~$\mat{S}$ consists of all zeros, so consider the $m^{\text th}$ row where $m\!>\!1$. Suppose that $m$ is \textit{not} a prime power. Then we can write $m=k\ell$ where $k$ and $\ell$ are relatively prime. It follows that $\text{gcd}(m,j)=\text{gcd}(k,j)\,\text{gcd}\,(\ell,j)$, and taking logs, we see that every row in $\mat{S}$ not corresponding to a prime power is equal to the sum of two preceding rows. It follows that $\mat{S}$ has at most $\Pi(n)$ linearly independent rows, or equivalently, that $\text{rank}(\mat{S})\leq\Pi(n)$. 
 
To prove the lower bound, we consider the submatrix $\mat{S}_p$ that includes only the rows and columns of $\mat{S}$ whose indices are powers of some prime~$p$. This submatrix has an especially simple structure; its $ij^\text{th}$ element is given by $(\log p) \min(i,j)$, and by row reduction, it can be shown that $\det\mat{S}_p\!>\! 0$, or equivalently, that $\mat{S}_p$ is of full rank. Now consider the larger submatrix $\mat{\Phi}$ derived from the rows and columns of $\mat{S}$ whose indices are powers of \textit{any} prime. By permuting the rows and columns of $\mat{\Phi}$, we can create a block diagonal submatrix whose individual blocks are the matrices~$\mat{S}_p$. It follows that $\text{rank}(\mat{S}) \geq \text{rank}(\mat{\Phi}) = \sum_p \text{rank}(\mat{S}_p) = \Pi(n)$.
\end{proof}

It can be deduced from the Prime Number Theorem~\cite{hardy1975-intro} that $\Pi(n)\!\sim\! \frac{n}{\log n}$. Hence from \cref{thm:rankS_logGCD} we conclude that $\text{rank}(\mat{S})$ grows at a nearly linear rate with its number of rows and columns. The next proposition considers \textit{subzero matrix completions} of $\mat{S}$ and shows that their rank, too, must grow at a similar rate.

\vspace{1ex}
\noindent\fcolorbox{black}{shade}{\parbox{0.985\textwidth}{\vspace{-2ex}
\begin{proposition}
\label{thm:rankL_logGCD}
Let $\mat{S}$ denote the $n\times n$ matrix with elements $S_{ij} = \log\mbox{\textup{\textsc{gcd}}}(i,j)$, and let $\pi(n)$ denote the number of primes less than or equal to~$n$. If $\mat{L}$ is any subzero matrix completion of $\mat{S}$, satisfying $\mat{S}=\max(0,\mat{L})$, then $\textup{rank}(\mat{L}) \geq \pi(\frac{n}{2})$.
\end{proposition}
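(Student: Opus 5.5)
The natural tool is \cref{thm:rankL_bound}. It suffices to exhibit an all-positive submatrix $\mat{S}_+$ of $\mat{S}$ with $\text{rank}(\mat{S}_+)\geq\pi(\frac{n}{2})$, because every subzero completion $\mat{L}$ agrees with $\mat{S}$ on its positive entries and so contains $\mat{S}_+$ as a submatrix. The difficulty is that the block-diagonal submatrix $\mat{\Phi}$ used in the proof of \cref{thm:rankS_logGCD} has zeros off the blocks, so it is not usable directly. We need rows and columns chosen so that every pair of indices shares a common factor, while the entries still separate the primes.

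The construction I would try is to index both rows and columns by the numbers $2p$, where $p$ ranges over the primes with $2\leq p\leq\frac{n}{2}$, so that $2p\leq n$. (Rows and columns could also be chosen differently, but this symmetric choice suffices.) Every pair shares the factor $2$, hence every entry is positive. Concretely, for distinct odd primes $p\neq q$ we have $\gcd(2p,2q)=2$, and the diagonal entries are $\gcd(2p,2p)=2p$. The row and column with $p=2$ give the index $4$, which needs separate bookkeeping: $\gcd(4,2q)=2$ for odd $q$, and $\gcd(4,4)=4$.

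After taking logs, the submatrix has the form $\mat{S}_+=(\log 2)\,\mathbb{1}\mathbb{1}^\top+\mat{D}$, where $\mat{D}$ is diagonal with entries $\log(2p)-\log 2=\log p>0$. The entry for the index $4$ is $\log 4-\log 2=\log 2>0$, so it fits the same pattern. Its dimension is $\pi(\frac{n}{2})$. The matrix $\mat{S}_+$ is a positive definite diagonal matrix plus a positive semidefinite rank-one matrix, so it is positive definite and therefore of full rank, which gives $\text{rank}(\mat{S}_+)=\pi(\frac{n}{2})$.

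Combining this with \cref{thm:rankL_bound} yields $\text{rank}(\mat{L})\geq\pi(\frac{n}{2})$. The main obstacle is just finding a submatrix that is both all-positive and provably full rank. Once the multiplier $2$ is spotted as the way to force positivity, the full-rank check is immediate by the diagonal-plus-rank-one argument. The only remaining details are making sure that the prime $2$ is included consistently and that the indices stay within $n$, which the bound $p\leq\frac{n}{2}$ guarantees.
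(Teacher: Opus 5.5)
Your proposal is correct and matches the paper's proof. The paper takes the same submatrix $\mat{E}$, indexed by $2p_k$ for primes $p_k\le \frac{n}{2}$, with diagonal entries $\log 2p_k$ and constant off-diagonal entries $\log 2$. It proves positive definiteness via $\vec{v}^{\top}\mat{E}\vec{v}=\sum_k v_k^2\log p_k+(\sum_k v_k)^2\log 2>0$, which is your diagonal-plus-rank-one argument, and then applies \cref{thm:rankL_bound}; your separate check of the index $4$ is a detail the paper leaves implicit.
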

\vspace{-2ex}}}
\vspace{-2ex}

\begin{proof}
We prove the result by showing that $\mat{S}$ has a positive submatrix of rank $\pi(\frac{n}{2})$. Let $p_k$ denote the $k^\text{th}$ prime, and consider the submatrix $\mat{E}$, with $\pi(\frac{n}{2})$ rows and columns, that includes only the \textit{even} rows and columns of $\mat{S}$ whose indices are equal to $2p_k$ for some prime $p_k$. This submatrix takes an especially simple form: its diagonal elements are given by $E_{kk}\! =\! \log 2 p_k$, while all of its off-diagonal elements are equal to~$\log 2$. Let $\vec{v}$ be any non-zero vector of length $\pi(\frac{n}{2})$. Then
\begin{equation}
\label{eq:vEv}
\vec{v}^{\!\top}\mat{E}\,\vec{v}\, =\, \sum_{k} v_k^2\, \log p_k +  \Big({\sum}_k v_k\Big)^2(\log 2)\, >\, 0,
\end{equation}
showing that $\mat{E}$ is positive-definite and of full rank. Finally, we note that all of the elements of $\mat{E}$ are positive, so it follows from \cref{thm:rankL_bound} that $\text{rank}(\mat{L}) \geq \text{rank}(\mat{E}) = \pi(\frac{n}{2})$.
\end{proof}

Again, from the Prime Number Theorem~\cite{hardy1975-intro}, we know that $\pi(n)\!\sim\! \frac{n}{\log n}$. Hence the previous propositions show that subzero matrix completions cannot reduce the rank of the matrix~$\mat{S}$ in \cref{eq:logGCD} by more than a constant factor as $n\!\rightarrow\!\infty$. In fact, since $\Pi(n)\!\sim\! \pi(n)$, we see that asymptotically they can at most reduce the rank by a factor of $\frac{1}{2}$.


\subsection{Sparse datasets}

The idealized matrices in previous sections provide intuition for the type of latent low-rank structure we might discover in real-world datasets. In this paper we look for this structure in three matrices of empirical interest---all of them large, sparse, and nonnegative. The first is a sparse similarity matrix with 70K rows and columns derived from the \textsc{mnist} image dataset of handwritten digits~\cite{lecun98-gradient}. The second is the synaptic count matrix for the connectome of a female fruit fly~\cite{dorkenwald2024-flywire} with over 139K brain cells. The third is a stochastic matrix derived from bigram counts for the 250K most common tokens in Wikipedia text~\cite{TC-wikipedia}. In all of these matrices, fewer than $1\%$ of the elements are positive, and they can all be comfortably stored in memory as sparse arrays (though not as dense ones).
Further details on these matrices can be found in appendix \ref{app:data}.

\Cref{tab:data} summarizes several basic properties of these matrices; these include the numbers of rows and columns $(m,n)$ with at least one nonzero element and the fraction of nonzero elements $\rho_+$ in these rows and columns. It is possible, using sparse matrix-vector multiplies, to compute a large number of leading singular values $(\sigma_1,\sigma_2,\ldots)$ for these matrices. The table also shows the relative errors $\varepsilon_k$ for $k\!\in\!\{10^1,10^2,10^3\}$, computed as
\begin{equation}
\varepsilon_k = \sqrt{1-\frac{\sum_{j=1}^{k} \sigma_j^2}{\|\mat{S}\|^2_F}},
\label{eq:svd_err}
\end{equation}
when each matrix is approximated via a truncated singular value decomposition (SVD). Note that the errors in \cref{eq:svd_err} lie between zero and one since $\|\mat{S}\|^2_F = \sum_j \sigma_j^2$, where~$\|\mat{S}\|_F$ denotes the Frobenius norm of $\mat{S}$. From the values of $\varepsilon_k$ in the table, we see that each matrix has a very slowly decaying spectrum of singular values. 

\begin{table}[b]
\begin{center}
\begin{tabular}{cccclccclccc}
\toprule
$\mat{S}$ &  
$m$ & $n$ & 
$\rho_+$ &  &
\hspace{1ex}$\varepsilon_{10}$\hspace{1ex} &
\hspace{1ex}$\varepsilon_{100}$\hspace{1ex} &
$\varepsilon_{1000}$ & &
\hspace{1ex}$m_+$\hspace{1ex} & $n_+$ & $\!$rank($\mat{S}_+$)$\!$ \\ \midrule
digits & \hspace{1ex}70,000 & \hspace{1ex}70,000 & 0.00023 & & 
  0.998 & 0.987 & 0.923 & & 11 & 9 & 9 \\ 
connectome & 139,003 & 138,955 & 0.00102 & &
   0.964 & 0.866 & 0.662 & & 61 & 59 & 59 \\ 
bigrams & 249,992 & 249,987 & 0.00088 & &
  0.479 & 0.379 & 0.307 & & 218 & 214 & 214 \\
\bottomrule
\end{tabular}
\end{center}
\caption{Sparse nonegative matrices analyzed numerically in this paper. For each matrix $\mat{S}$, the table shows the number of rows and columns $(m,n)$ with at least one nonzero element, the fraction of nonzero elements $\rho_+$, and the relative errors $\varepsilon_k$ in \cref{eq:svd_err} from truncated singular value decompositions of rank $k\!\in\!\{10,100,1000\}$. The table also shows the number of rows and columns $(m_+,n_+)$ and rank of the largest positive submatrix~$\mat{S}_+$ found by a randomized greedy procedure.}
\label{tab:data}
\end{table}

We observed in \cref{thm:rankL_bound} that the rank of any positive submatrix of $\mat{S}$ provides a lower bound on the rank of any subzero matrix completion. Thus it is also informative, for each matrix in \cref{tab:data}, to identify its positive submatrix of largest rank, or even more simply, its largest positive submatrix. But this latter problem, though simpler, is still not simple; it is equivalent to finding the maximal edge biclique in a bipartite graph, which is known to be NP-complete~\cite{peeters2003-maximum}. In lieu of an exhaustive approach, we used a simple greedy procedure to search for large positive submatrices. The procedure starts from a randomly sampled nonzero element of~$\mat{S}$ and iteratively adds rows and columns until no larger positive submatrix can be found; we also used heuristics to accelerate these searches by capitalizing on the results of previous ones. With this procedure, we were able to identify large positive submatrices, with $m_+$ rows and $n_+$ columns, of the matrices in \cref{tab:data}. For each matrix, the table also shows the dimensions and rank of the largest positive submatrix~$\mat{S}_+$ found in this~way. In practice these largest positive submatrices were also of full rank.

The relatively small values of these ranks suggest that subzero matrix completions may be useful to find latent low-rank structure in these matrices. Latent low-rank structure is also suggested, in the thresholded similarity matrix of \textsc{mnist} digits, by the approximately block diagonal structure that emerges from a simple permutation of its rows and columns; see \cref{fig:digit-blocks}. The ten blocks along the diagonal in this figure record the thresholded similarities of images from the same digit class, while the sparser off-diagonal blocks record those from easily distinguished classes (e.g., \textsc{zeros} versus \textsc{fours}). Unlike the examples in previous sections, however, the matrices in \cref{tab:data} do not have any perfect symmetries or telltale patterns that can be exploited to guess a subzero matrix completion of low rank. We therefore turn to a study of numerical methods.

\begin{figure}[t]
\centerline{
  \includegraphics[height=1.5in]{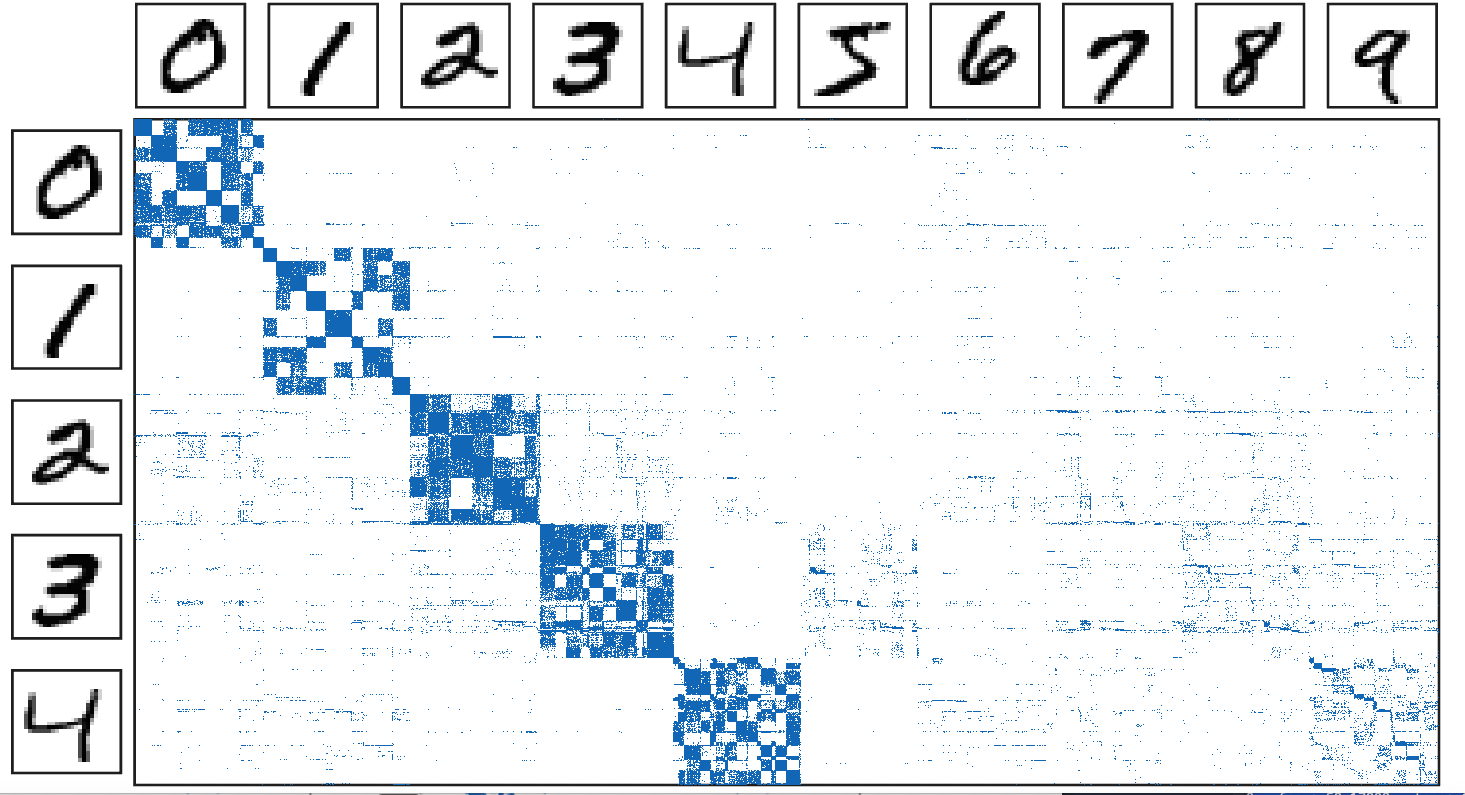}
  \hspace{5ex}
  \includegraphics[height=1.5in]{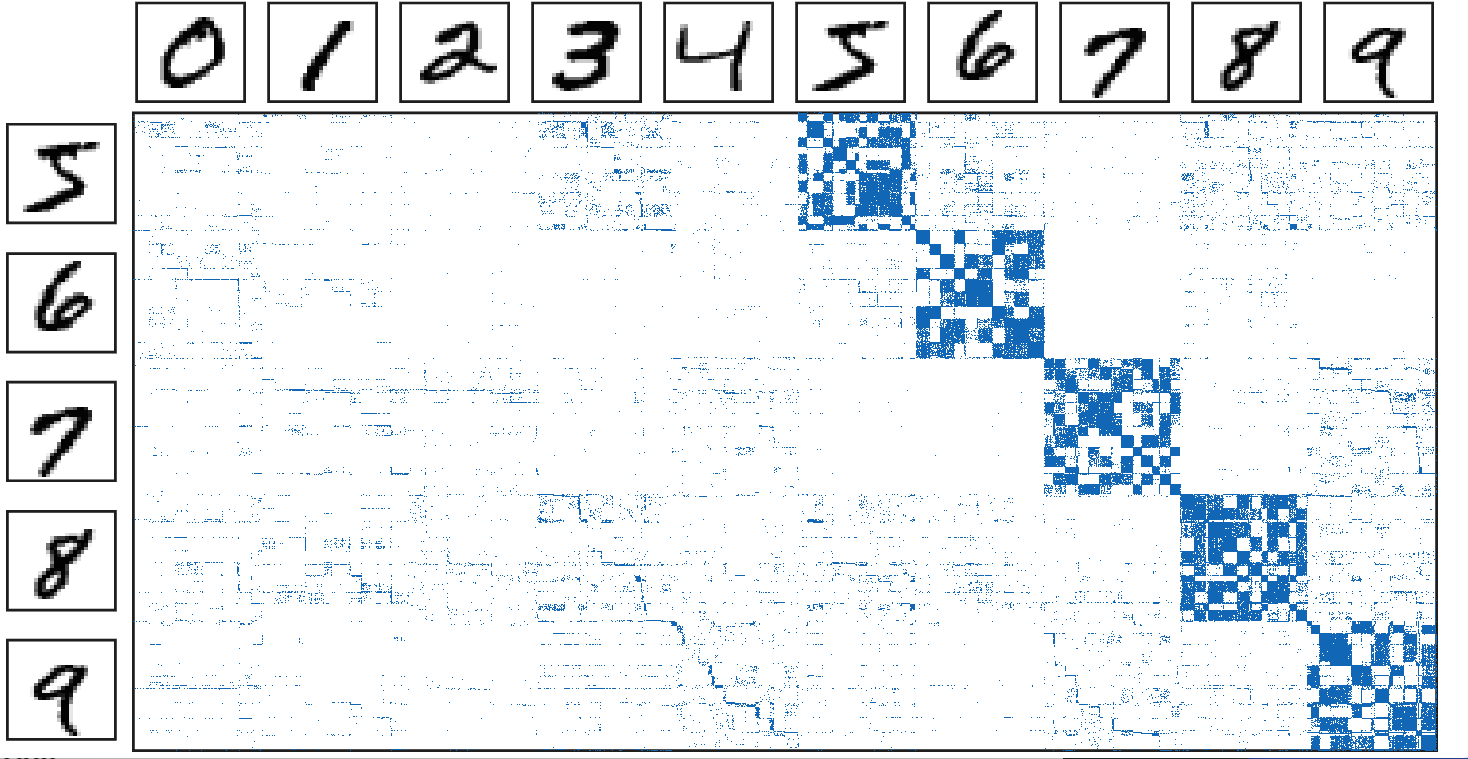}}
\caption{The sparse similarity matrix for \textsc{MNIST} digit images in \cref{tab:data} exhibits an approximately block diagonal structure if its rows and columns are appropriately permuted. Above, shown side-by-side, are the upper and lower halves of this reconstituted matrix.}
\label{fig:digit-blocks}
\end{figure}



\section{Algorithm}
\label{sec:alg}

Given an $m\!\times\!n$ sparse nonnegative matrix $\mat{S}$, we now turn to the problem of computing a low-rank matrix $\mat{L}$ such that $S_{ij}\approx\max(0,L_{ij})$. The next sections review the basic framework in which this problem has been most commonly tackled.

\subsection{Objective function}
\label{sec:loss}

Suppose that $\mat{L}$ is close but not exactly equal to a subzero matrix completion of~$\mat{S}$. There are many ways to measure how well $\mat{S}$ can be reconstructed from $\mat{L}$, or equivalently, how closely $\mat{S}$ matches the rectified matrix
\begin{equation}
    \hat{\mat{S}} = \max(0,\mat{L}).
\label{eq:rectify}
\end{equation}
We start by reviewing two widely adopted measures of differences between nonnegative matrices. The first is the \textit{normalized root-mean-squared error} \textup{(RMSE)}, which we denote by
\begin{equation}
\label{eq:RMSE}
\mathcal{E}(\mat{S},\hat{\mat{S}})\ =\ \frac{\|\mat{S}-\hat{\mat{S}}\|_F}{\|\mat{S}\|_F},
\end{equation}
and which is always greater than or equal to zero but not bounded above. Another is the \textit{weighted Jaccard distance} \textup{(WJD)}, which we denote by 
\begin{equation}
\label{eq:WJD}
\mathcal{J}_W(\mat{S},\hat{\mat{S}})\ =\ 1\, -\, \frac{\sum_{ij} \min\big(S_{ij},\hat{S}_{ij}\big)}{\sum_{ij}\max\big(S_{ij},\hat{S}_{ij}\big)},
\end{equation}
and which is bounded between zero and one. Both the RMSE and WJD are calibrated so that the all-zero approximation gives an error of one: i.e., $\mathcal{E}(\mat{S},\mat{0})\! =\! \mathcal{J}_W(\mat{S},\mat{0})\! =\! 1$. Likewise, both vanish if and only if $\mat{S} = \hat{\mat{S}}$.

Despite their familiarity, the losses in \cref{eq:RMSE,eq:WJD} are not the most commonly used for subzero matrix completion. One reason is that they are not especially well-suited to hill-climbing methods: their partial derivatives with respect to the elements of $\mat{L}$ vanish whenever these elements are negative. Another reason is that in general we seek to discover approximate completions that are interpretable in the following sense: \textit{the more negative an element~$L_{ij}\!<\!0$, the less likely it is matched to a \textit{positive} element~$S_{ij}\!>\!0$.} Note, however, that the RMSE \cref{eq:RMSE} and WJD in \cref{eq:WJD} only depend on $\mat{L}$ through the \textit{rectified} values of its elements. Hence these criteria are equally partial to errors in which a positive element ($S_{ij}\!>\!0$) is modeled by a zero element ($L_{ij}\!=\!0$) or a strongly negative element ($L_{ij}\!\ll\!-S_{ij})$.

Instead, following many previous works~\cite{saul2022-nmd,saul2022-geometrical,seraghiti2023-accelerated,awari2024-coordinate,wang2024-momentum,wang2025-accelerated}, we consider how to optimize an objective function that involves, in addition to the low-rank matrix $\mat{L}$, an auxiliary matrix (or latent variable)~$\mat{Z}$ of the same size. This optimization takes the form
\begin{equation}
\label{eq:auxCost}
\min_{\mat{L},\mat{Z}}\frac{\|\mat{Z}\!-\!\mat{L}\|_F}{\|\mat{S}\|_F} 
 \quad \mbox{such that}\quad
  \left\{ \begin{array}{l}
      \mat{S} = \max(0,\mat{Z}), \\
      \text{rank}(\mat{L}) \leq r,
      \end{array}\right.
\end{equation}
where the prescribed rank $r$ is typically far less than either the number of rows or columns of~$\mat{S}$. Note that the objective in \cref{eq:auxCost} is bounded below by zero, and it is only equal to zero when $\mat{Z}\!=\!\mat{L}$, which together with the first constraint, implies that $\mat{S}=\max(0,\mat{L})$. In addition, unlike the RMSE and WJD, this objective more severely penalizes errors in which a positive element in $\mat{S}$ is modeled by a strongly negative element of~$\mat{L}$.
The optimization in \cref{eq:auxCost} can be derived from the deterministic limit~\cite{saul2022-nmd} of an Expectation-Maximization algorithm for maximum likelihood estimation~\cite{dempster1977-maximum}, and for this reason (as we shall see in the next section), it lends itself very well to alternating minimization procedures. Finally, the following result relates the objective in \cref{eq:auxCost} to the RMSE in \cref{eq:RMSE}.

\vspace{1ex}
\noindent\fcolorbox{black}{shade}{\parbox{0.985\textwidth}{\vspace{-2ex}
\begin{proposition}
The objective in \cref{eq:auxCost} at any feasible point of the optimization provides an upper bound on the \text{RMSE} in \cref{eq:RMSE}; specifically, if $\mat{S}\!=\!\max(0,\mat{Z})$ and $\hat{\mat{S}}\!=\!\max(0,\mat{L})$, then \mbox{$\|\mat{Z}\!-\!\mat{L}\|_F \geq \big\|\mat{S}\!-\!\hat{\mat{S}}\big\|_F$}.
\end{proposition}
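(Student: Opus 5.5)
The plan is to reduce the matrix inequality to a scalar inequality that holds elementwise, and then sum over all entries. Both sides are Frobenius norms, so it suffices to show that for every pair $(i,j)$
\begin{equation*}
|Z_{ij}-L_{ij}| \;\geq\; |S_{ij}-\hat{S}_{ij}|.
\end{equation*}
Squaring and summing over $i,j$ then gives $\|\mat{Z}-\mat{L}\|_F^2 \geq \|\mat{S}-\hat{\mat{S}}\|_F^2$. Taking square roots, and dividing by $\|\mat{S}\|_F$ if one wants the normalized forms in \cref{eq:RMSE,eq:auxCost}, yields the claim.

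For the elementwise inequality, we substitute the feasibility constraint $S_{ij}=\max(0,Z_{ij})$ and the definition $\hat{S}_{ij}=\max(0,L_{ij})$ from \cref{eq:rectify}. The required statement then becomes the scalar fact that the map $x\mapsto\max(0,x)$ is $1$-Lipschitz on $\mathbb{R}$:
\begin{equation*}
|\max(0,z)-\max(0,\ell)|\leq|z-\ell|.
\end{equation*}
We will verify this by cases on the signs of $z$ and $\ell$.

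\begin{itemize}
\item If both $z$ and $\ell$ are nonnegative, the two sides are equal.
\item If both are nonpositive, the left side is $0$.
\item If $z>0\geq\ell$, the left side is $z$, and $z\leq z-\ell$ because $\ell\leq0$.
\item The case $\ell>0\geq z$ is symmetric to the previous one.
\end{itemize}

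Alternatively, we could note that $\max(0,\cdot)$ is the projection onto the convex set $[0,\infty)$, and projections onto convex sets are nonexpansive.

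No step here is a real obstacle. The only point needing care is the mixed-sign case. That case is where the objective in \cref{eq:auxCost} can be strictly larger than the RMSE, which matches the earlier remark that it penalizes strongly negative $L_{ij}$ more heavily.
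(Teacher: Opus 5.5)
Your proposal is correct and takes essentially the same route as the paper: you use the 1-Lipschitz property of $\max(0,\cdot)$ to get the elementwise bound $|\max(0,Z_{ij})-\max(0,L_{ij})|\leq|Z_{ij}-L_{ij}|$, then square and sum over entries. The only difference is minor: you verify the Lipschitz property by a sign-case analysis (or nonexpansiveness of projection), while the paper appeals to continuity and $|f'(\xi)|\leq 1$ away from the origin.
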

\vspace{-2ex}}}

\begin{proof}
Note that the function $f(\xi) = \max(0,\xi)$ is 1-Lipschitz because it is continuous and $|f'(\xi)|\leq 1$ for all nonzero $\xi$. From Lipschitz continuity, it follows that 
\begin{equation}
\big|S_{ij}\!-\!\hat{S}_{ij}\big| = \big|\!\max(0,Z_{ij})-\max(0,L_{ij})\big| \leq |Z_{ij}\!-\!L_{ij}|
\label{eq:lipschitz}
\end{equation}
for each element of these matrices, and summing over the squares of differences in \cref{eq:lipschitz}, we see that $\|\mat{S}-\hat{\mat{S}}\|_F^2 \leq \|\mat{Z}-\mat{L}\|_F^2$.
\end{proof}

This result improves by a constant factor on a previous upper bound~\cite{gillis2025-extrapolated} relating the objective in \cref{eq:auxCost} to the RMSE in \cref{eq:RMSE}. From this upper bound, it follows that we may perform the optimization in \cref{eq:auxCost} with a reasonable expectation of converging to solutions with low (though not necessarily minimal) values of the RMSE.


\subsection{Alternating minimization}
\label{sec:altmin}

The objective in \cref{eq:auxCost} can be minimized by an iterative procedure that alternates between updating the low-rank matrix $\mat{L}$ for fixed $\mat{Z}$ and updating the auxiliary matrix $\mat{Z}$ for fixed $\mat{L}$. Each update is derived by minimizing the objective with respect to one of these matrices, and thus the objective decreases monotonically from one iteration to the next. For fixed~$\mat{Z}$, the objective is optimized by choosing $\mat{L}$ to be its best approximation (in the least-squares sense) of rank $r$:
\begin{equation}
\mat{L}\ =\ \arg\!\!\!\!\min_{\!\!\!\!\!\!\!\!\text{rank}(\mat{X})\leq r}\! \big\|\mat{Z}\!-\!\mat{X}\big\|.
\label{eq:tsvd}
\end{equation}
This approximation can be computed from a \textit{truncated} singular value decomposition (SVD) of $\mat{Z}$, keeping only its leading $r$ components. 
Likewise, for fixed~$\mat{L}$, the objective is optimized by setting
\begin{equation}
Z_{ij} = \left\{\!\!
\begin{array}{ll}
  S_{ij} & \mbox{if $S_{ij}\!>\!0$}, \\
  L_{ij}\!-\!\hat{S}_{ij} & \mbox{if $S_{ij}\!=\!0$},
\end{array}
\right.
\label{eq:updateZ}
\end{equation}
where $\hat{S}_{ij}$ is the rectified value of $L_{ij}$ given by \cref{eq:rectify}. Alternating procedures of this form were among the earliest to be explored for this optimization~\cite{saul2022-nmd,saul2022-geometrical}.

More recent work~\cite{seraghiti2023-accelerated,wang2024-momentum,wang2025-accelerated,wang2025-efficient,gillis2025-extrapolated}, has focused on explicitly representing the low-rank matrix $\mat{L}\!\in\!\mathbb{R}^{m\times n}$ as the product of two smaller matrices; in many ways this approach permits a finer degree of control over the optimization. We do so by writing
\begin{align}
\label{eq:ABt}
\mat{L} &= \mat{A}\mat{B}^\top, \\
\label{eq:reluAB}
\hat{\mat{S}} &= \max\!\big(0,\mat{A}\mat{B}^{\!\top}\big),
\end{align}
where $\mat{A}\!\in\!\mathbb{R}^{m\times r}$ and $\mat{B}\!\in\!\mathbb{R}^{n\times r}$, and
as in \cref{eq:rectify}, we use $\mat{\hat{S}}$ to denotes the rectified low-rank product with which ones hope to approximate the sparse matrix~$\mat{S}$. Here we parameterize both factors $\mat{A}$ and $\mat{B}$ as tall and skinny matrices with~$r$ columns. Given this parameterization, the objective in \cref{eq:auxCost} can be optimized by a three-way alternating procedure~\cite{seraghiti2023-accelerated} that repeatedly updates one of the matrices $\mat{Z}$, $\mat{A}$, or $\mat{B}$ while holding the other two of these matrices fixed. In this case, the auxiliary matrix $\mat{Z}$ continues to be updated by \cref{eq:updateZ}, while the factors $\mat{A}$ and $\mat{B}$ are optimized by the \textit{least-squares} updates~\cite{hastie2015-matrix}:
\begin{align}
\label{eq:lsqA}
\mat{A} &\leftarrow \mat{Z}\ \mat{B}(\mat{B}^{\!\top}\mat{B})^{-1},\\
\label{eq:lsqB}
\mat{B} &\leftarrow \mat{Z}^{\!\top}\!\mat{A}(\mat{A}^{\!\top}\!\mat{A})^{-1}.
\end{align}
These least-squares updates can also be further tuned by introducing additional hyperparameters~\cite{seraghiti2023-accelerated,wang2024-momentum,gillis2025-extrapolated} into the optimization (e.g., for momentum, regularization, extrapolation).

It will be useful to make a change of variables and rewrite the above updates in a slightly different way. In particular, let
\begin{equation}
\mat{\Delta} = \mat{Z}-\mat{A}\mat{B}^{\!\top},
\label{eq:Delta}
\end{equation}
so that $\mat{\Delta}$ measures the \textit{difference} between the dense matrix~$\mat{Z}$ and the low-rank matrix~$\mat{A}\mat{B}^{\!\top}$. We proceed to rewrite the updates in \cref{eq:updateZ,eq:lsqA,eq:lsqB} in terms of $(\mat{\Delta},\mat{A},\mat{B})$ instead of $(\mat{Z},\mat{A},\mat{B})$. In particular, suppose that we have updated $\mat{Z}$ according to \cref{eq:updateZ}. Then
from the update in \cref{eq:updateZ}, and the factorization in \cref{eq:ABt}, it follows that
\begin{equation}
\mat{\Delta}\, =\, \mathbb{1}_{\mat{S}} \odot \big(\mat{S}\!  +\! \hat{\mat{S}} -\! \mat{A}\mat{B}^{\!\top}\!\big)\, -\, \hat{\mat{S}},
\label{eq:updateD}    
\end{equation}
where $\odot$ denotes the Hadamard product and $\mathbb{1}_\mat{S}$ denotes the sparse \textit{binary} matrix with the same zeros as $\mat{S}$, but whose nonzero values are all equal to one. Finally, substituting $\mat{\Delta}+\mat{A}\mat{B}^{\!\top}$ for~$\mat{Z}$ in \cref{eq:lsqA,eq:lsqB}, we can rewrite these least squares updates as
\begin{align}
\label{eq:spLsqA}
\mat{A} &\leftarrow \mat{A} + \mat{\Delta} \mat{B}(\mat{B}^{\!\top}\mat{B})^{-1},\\
\label{eq:spLsqB}
\mat{B} &\leftarrow \mat{B} + \mat{\Delta}^{\!\!\top}\!\mat{A}(\mat{A}^{\!\top}\!\mat{A})^{-1}.
\end{align}
The updates in this form make explicit that the factors $\mat{A}$ and $\mat{B}$ are shifted by an amount that is linear in the matrix~$\mat{\Delta}$. We will examine the properties of this matrix further in \cref{sec:sparse}.


\section{Scaling to large problems}
\label{sec:scaling}

In this section we show how to scale the alternating minimization over the matrices $\mat{\Delta}$, $\mat{A}$, and~$\mat{B}$ in \cref{eq:updateD,eq:spLsqA,eq:spLsqB} up to very large problems in subzero matrix completion. We have in mind applications where the dense matrix $\mat{Z}$ in \cref{eq:updateZ} is much too large to be stored all at once in memory, thereby exceeding the limits of current hardware. There are also computational bottlenecks that come to the fore in this regime. We address each of these challenges in turn.

\subsection{Memory management}

A memory-bottleneck arises whenever it is too expensive to store $\mat{Z}$ in \cref{eq:updateZ}, or alternatively~$\mat{\Delta}$ in \cref{eq:updateD}, as a dense matrix. We begin by describing two simple and complementary ways to conserve memory in these problems.

\vspace{-3ex}
\paragraph{Parallel least-squares.}
\label{sec:parallel}
The simplest memory-efficient implementation is obtained by exploiting the parallel structure of the updates. Such parallelism is often leveraged in optimizations based on alternating least-squares~\cite{zhou2008-large,yu2014-parallel}. Let~$I$ denote a subset of contiguous row indices between 1 and $m$ inclusive, and let $\mat{A}_{I}$
denote the submatrix of $\mat{A}$ consisting of \textit{rows} in $I$. Then analogous to the updates for $\hat{\mat{S}}$, $\mat{\Delta}$, and $\mat{A}$ in \cref{eq:reluAB,eq:Delta,eq:updateD,eq:spLsqA}, we can also derive the blockwise updates 
\begin{eqnarray}
  \label{eq:Shat_par}
    \hat{\mat{S}}_{I} & \!\!\!=\!\!\! &\max\big(0,\mat{A}_{I}\,\mat{B}^{\!\top}\big), \\[-0.25ex]
  \label{eq:Delta_par}
    \mat{\Delta}_{I} &\!\!\!=\!\!\! & 
      \big[\mathbb{1}_{\mat{S}} \odot \big(\mat{S}\!  -\! \hat{\mat{S}} -\! \mat{A}\mat{B}^{\!\top}\!\big)\big]_{I}\,
      -\, \mat{\hat{S}}_{I}\\[-0.25ex]
  \label{eq:lsqA_par}
    \mat{A}_{I} &\!\!\!\leftarrow\!\!\! &\mat{A}_{I}\, +\, \mat{\Delta}_{I}\, \mat{B}(\mat{B}^{\!\top}\mat{B})^{-1}.
\end{eqnarray}
Likewise, for any block row of $\mat{B}$, we can derive an analogous update that involves only the corresponding \textit{columns} of~$\mat{S}$, $\hat{\mat{S}}$, and $\mat{\Delta}$. 
These blockwise updates can be parallelized across multiple cores if they are available. On a single core, where the updates must be performed serially, it is generally most efficient to do so with the largest blocks of $\mat{\Delta}_{I}$ that fit into memory. 


\vspace{-3ex}
\paragraph{Stochastic least-squares.}
\label{sec:stochastic}

A memory-efficient implementation can also be obtained by introducing an element of randomness into the alternating minimization over $\mat{\Delta}$, $\mat{A}$, and $\mat{B}$. This idea builds on a familiar idea in statistical learning---namely, that stochastic optimizers often converge much faster on large problems by exploiting correlations across different cross-sections of the data~\cite{bottou2004-large}. For large problems in subzero matrix completion, such correlations arise whenever different submatrices of~$\mat{S}$ have similar statistics. In this case, we can use \textit{stochastic} least-squares updates (analogous to stochastic gradient-based methods) to learn the matrices $\mat{A}$ and $\mat{B}$ more quickly. Let~$J$ denote a random mini-batch of column indices between 1 and $n$ inclusive, and let $\mat{\Delta}_{:J}$ denote the submatrix of $\mat{\Delta}$ consisting of \textit{columns} in $J$. Then analogous to the updates for $\hat{\mat{S}}$, $\mat{\Delta}$, and $\mat{A}$ in \cref{eq:reluAB,eq:Delta,eq:updateD,eq:spLsqA}, we consider the stochastic updates 
\begin{eqnarray}
  \label{eq:Shat_sto}
    \hat{\mat{S}}_{:J} & \!\!\!=\!\!\! &\max\left(0,\mat{A}\mat{B}_{J}^{\!\top}\right), \\[-0.25ex]
  \label{eq:Delta_sto}
    \mat{\Delta}_{:J} &\!\!\!=\!\!\! & 
      \big[\mathbb{1}_{\mat{S}} \odot \big(\mat{S}\!  -\! \hat{\mat{S}} -\! \mat{A}\mat{B}^{\!\top}\!\big)\big]^J - \mat{\hat{S}}_{:J}, \\[-0.25ex]
  \label{eq:lsqA_sto}
    \mat{A} &\!\!\!\leftarrow\!\!\! &\mat{A}\, +\, \eta\mat{\Delta}_{:J}\, \mat{B}_{J}\!\left[\mat{B}_{J}^{\!\top}\mat{B}_{J}\right]^{-1}.
\end{eqnarray}
where $\eta\!>\!0$ is a step size. Once again, the mini-batch size is gated by the available memory to store the submatrix $\mat{\Delta}_{:J}$. Though not shown above, these stochastic updates can also be accelerated by incorporating a \textit{momentum} term~\cite{polyak1964-momentum}---namely, adding a multiple of the previously executed update to the right side of \cref{eq:lsqA_sto} and using an additional hyperparameter to control the amount~of~momentum.

\textbf{Complementarity.} It is worth contrasting the different ways that memory is conserved by the least-squares updates in \cref{eq:lsqA_par,eq:lsqA_sto}. In both cases, it is the elements of~$\mat{A}$ that are being updated, but in the former, it is the rows of $\mat{A}$ and $\mat{\Delta}$ that are being subsampled---to be updated, if possible, in parallel---while in the latter, it is the rows of $\mat{B}$ and $\mat{\Delta}^{\!\top}$ that are being subsampled to yield a stochastic update for all the rows of~$\mat{A}$. 
While both updates conserve memory by working with smaller submatrices of~$\mat{\Delta}$, they otherwise involve different trade-offs. The parallel updates have the advantage that they monotonically decrease the objective function and do not require additional hyperparameters, such as step sizes; the stochastic updates have the advantage that they converge much faster when different submatrices of~$\mat{S}$ have similar statistics. For very large problems, both strategies can also be combined into a single update that subsamples rows and columns of $\mat{\Delta}$.

\subsection{Computational throughput}
\label{sec:GPU}

Though memory can be managed by working with blocks of the matrices $\mat{\hat{S}}$ and $\mat{\Delta}$, there remains the computational bottleneck of evaluating and multiplying these blocks for the parallel or stochastic updates in the previous section. The main bottleneck is to calculate large blocks of the rectified matrix \mbox{$\hat{\mat{S}} = \max(0,\mat{A}\mat{B}^{\!\top})$} in eq.~(\ref{eq:reluAB}). When $\mat{S}$ has up to hundreds of thousands of rows and columns, this cost can be managed reasonably well by multiplying the factors $\mat{A}$ and $\mat{B}^{\!\top}$ (or blocks thereof) on modern GPUs. 
The two software implementations that we provide---one in Python, and one in MATLAB---perform these matrix multiplications on NVIDIA GPUs when that option is available. We have also taken further steps to accelerate these GPU-based implementations, mimicking other aspects of hardware-aware algorithm design that have been highly impactful in other areas of machine learning~\cite{dao2022-flash1,dao2024-flash2,shah2024-flash3,zadouri2026-flash4}. We describe these next.

\begin{figure}[t]
\centerline{\includegraphics[width=.48\textwidth]{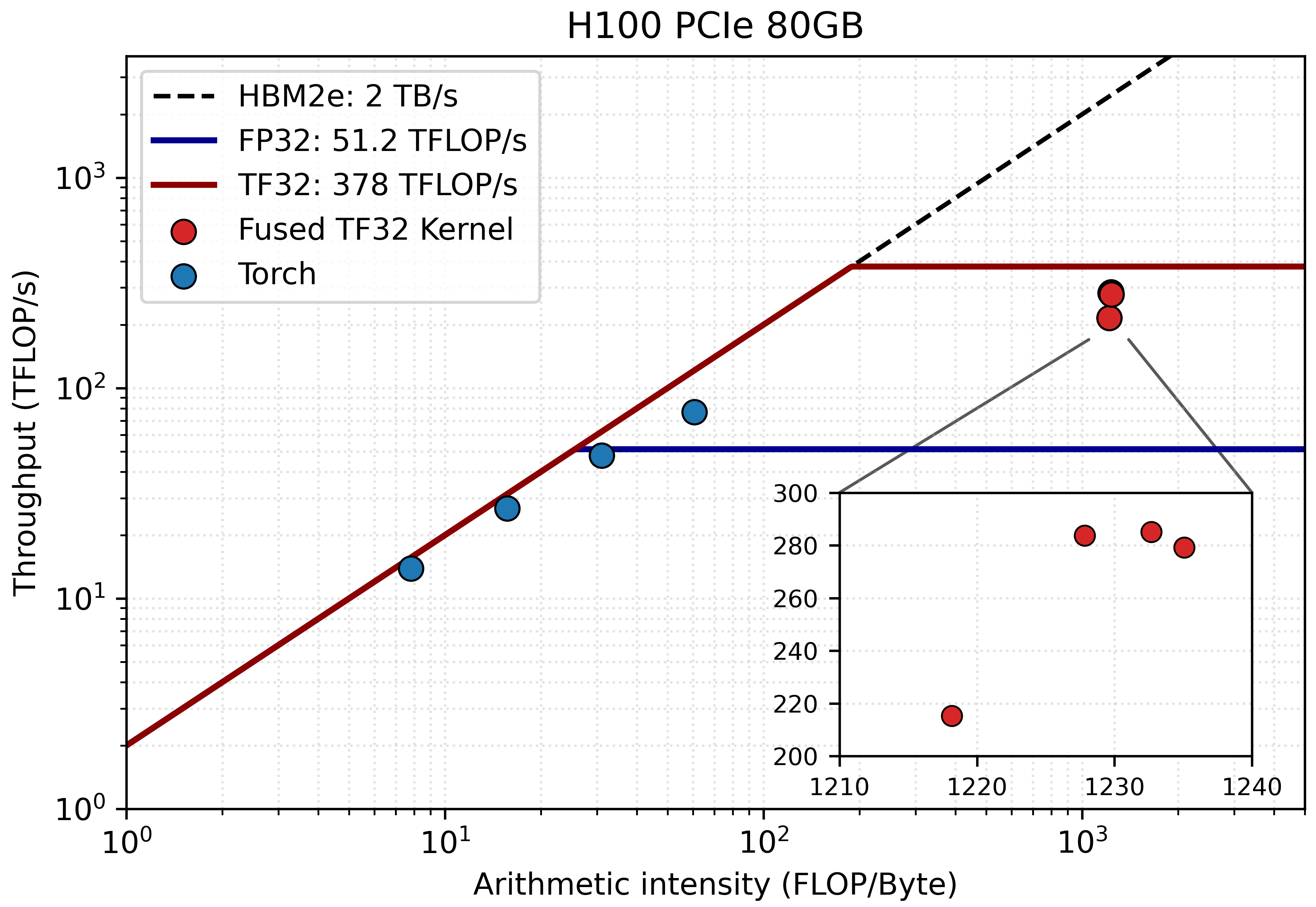}
\hspace{2ex}
\includegraphics[width=.48\textwidth]{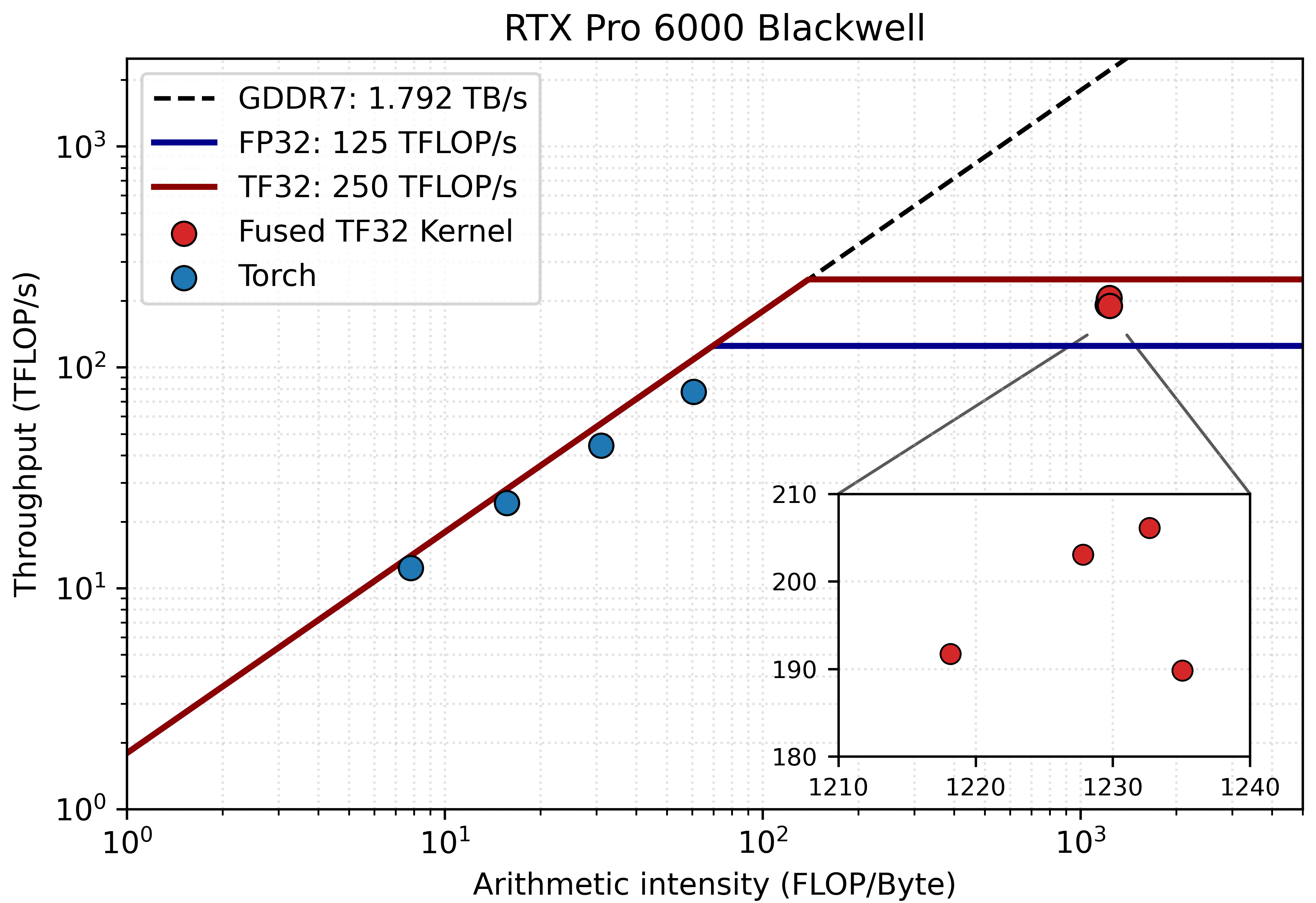}}
\caption{Roofline plots showing the theoretical peak FP32 and TF32 throughput ceilings for the H100 and RTX Pro 6000 Blackwell GPUs as a function of arithmetic intensity. The dots show the achieved performance of our fused TF32 kernels (in red) and a two-kernel epilogue-fused PyTorch baseline (in blue); see \cref{tab:kernel-throughput} for details.}
\label{fig:roofline}    
\end{figure}

\vspace{-3ex}
\paragraph{Kernel fusion.}
We have written custom, I/O-aware CUDA kernels that fuse multiple operations on the GPU that would otherwise be handled inefficiently by separate kernels. Consider for example the operations required by the full-batch updates in \cref{eq:updateD,eq:spLsqA}. One of our fused kernels takes matrices $\mat{A}$, $\mat{B}$, and $\mat{B}^\dagger$ (the pseudoinverse) as input and, in one call to the GPU, computes the~matrix
\begin{equation}
\max\big(0,\mat{A}\mat{B}^{\!\top}\big)\big(\mat{B}^\dagger\big)^\top
\label{eq:kernel}
\end{equation}
that appears implicitly on the right side of \cref{eq:spLsqA}. Note that without a fused kernel this calculation would require at least two separate calls to the GPU---the first to multiply the factors~$\mat{A}$ and~$\mat{B}^{\!\top}$ and rectify their product via epilogue fusion, then a second to multiply the rectified product by the transpose of~$\mat{B}^{\dagger}$.  When $\mat{A}$ and $\mat{B}$ are tall, skinny matrices, the matrix multiplications in \cref{eq:kernel} are gated by memory bandwidth rather than the GPU's peak floating-point throughput. Specifically, in a two-kernel implementation, the large intermediate matrix $\mat{A}\mat{B}^{\!\top}$ is written to GPU memory only to be read again by the subsequent kernel that multiplies it by the pseudoinverse of~$\mat{B}^{\!\top}$. Our fused kernel eliminates this memory inefficiency. 

\vspace{-3ex}
\paragraph{Arithmetic intensity.}
Performance gains from kernel fusion can be quantified by the \textit{arithmetic intensity}, measured in FLOPs/byte, and defined as the ratio of floating-point operations performed by the GPU to the total bytes transferred to and from memory. Let $I_\text{product}$ denote the arithmetic intensity of the single GPU call for calculating the matrix product $\mat{A}\mat{B}^\top\!$ (where
$\mat{A}\!\in\!\mathbb{R}^{m\times r}$ and $\mat{B}\!\in\!\mathbb{R}^{n\times r}$), and let $I_{\text{fused}}$ denote the arithmetic intensity of the fused kernel for calculating \cref{eq:kernel}. In appendix~\ref{app:gpu}, we show that
\begin{equation}
\frac{I_\text{fused}}{I_\text{product}}\ \geq\ 1 + \frac{mn}{(m\!+\!n)r},
\label{eq:AIratio}
\end{equation}
so that the fused kernel always yields a much higher arithmetic intensity in the typical regime for subzero matrix completion where $m,n\!\gg\! r$. Finally, we note that the gains are smaller but still quite significant when the kernel is used to multiply smaller sub-blocks of the matrices $\mat{\Delta}$ and $\mat{B}$, as in the stochastic least-square update of \cref{eq:lsqA_sto}. In this case, the ratio of arithmetic intensities is computed by replacing the number of columns~$n$ in \cref{eq:AIratio} by the smaller block size, $n_\text{block}$. For a typical problem in subzero matrix completion, with (say) a latent rank of $r\!=\!10$, a sparse matrix with $m\!=\!n\!=\!10^5$ rows and columns, and a block size of $n_\text{block}\!=\!10^3$, the arithmetic intensity of the fused kernel is still larger by a factor of over 100.

\vspace{-3ex}
\paragraph{Tensor Cores.}
With fused kernels, the stochastic updates for subzero matrix completion are no longer memory-bound. In this regime, further speedups can be obtained from Tensor Cores, specialized hardware units on modern NVIDIA GPUs that perform matrix multiply-accumulate instructions significantly faster than conventional floating-point compute units. The gains are made possible by the TensorFloat32 (TF32) reduced-precision floating-point format~\cite{stosic2021-accelerating} at the cost of a slight but acceptable loss in numerical precision. We 
provide TF32 implementations of the kernel in \cref{eq:kernel} for multiple Tensor Core generations; see Appendix~\ref{app:gpu} for further details. In our Python implementation, we use autotuning to select various optimal parameters of these kernels (e.g., tile sizes, operand mode, instruction shapes) based on the sizes of their matrix inputs. We also combine this autotuning with the just-in-time (JIT) compilation of NVIDIA's runtime compiler and include an option to back off to FP32 fused kernels.

\vspace{-3ex}
\paragraph{Performance gains.} Our custom fused TF32 kernels lead to significant speedups over a JIT-compiled PyTorch baseline that uses two kernel launches with epilogue fusion to compute \cref{eq:kernel}. \Cref{fig:roofline} compares the computational throughput of these kernels to theoretical performance rooflines as a function of arithmetic intensity for matrix sizes $m\!=\!250000$, $n_\text{block}\!=\! 2500$ and ranks $r\!\in\!\{16,32,64,128\}$. Here we see that our custom kernels 
achieve up to 75\% and 82\% of peak theoretical TF32 throughput, respectively, on the H100 PCIe 80GB and RTX Pro 6000 Blackwell Workstation GPUs. \Cref{tab:kernel-throughput} reveals the breakdown by rank, and for small ranks ($r\!<\!32$) we see that the custom fused kernels achieve order-of-magnitude increases in throughput. Moreover, for larger ranks---a regime in which the GPU is hardly idle---the increased throughput translates even more directly into shorter, overall wall-clock times.

\begin{table}[t]
\centering
\begin{tabular}{c} \\
\textbf{H100} \\
\textbf{PCIe} \\
\textbf{80GB}
\end{tabular}
\begin{tabular}{ccc}
\toprule
rank & fused TF32 & PyTorch \\ \midrule
16  & 215.30 & 13.89 \\
32  & 283.73 & 26.84 \\
64  & 285.10 & 47.83 \\
128 & 279.19 & 77.05 \\
\bottomrule
\end{tabular}
\hspace{7ex}
\begin{tabular}{c} \\
\textbf{RTX} \\
\textbf{Pro 6000} \\
\textbf{Blackwell}
\end{tabular}
\begin{tabular}{ccc}
\toprule
rank & fused TF32 & PyTorch \\ \midrule
16  & 191.72 & 12.35 \\
32  & 203.06 & 24.33 \\
64  & 206.11 & 44.32 \\
128 & 189.80 & 77.48 \\
\bottomrule
\end{tabular}
\caption{Achieved throughputs in TFLOP/s on H100 (\textit{left}) and RTX GPUs (\textit{right}) for the fused TF32 kernels versus the two-kernel PyTorch implementation with an option for TF32 acceleration.}
\label{tab:kernel-throughput}
\end{table}

\subsection{Exploiting sparsity}
\label{sec:sparse}

In the previous sections, we have addressed memory and computation-related bottlenecks for the alternating minimization in \cref{sec:altmin}. These bottlenecks arise from the fact that the latent matrix $\mat{Z}$ in \cref{eq:updateZ}, though of same large size as the matrix~$\mat{S}$, is neither sparse \textit{nor} low-rank. The stochastic updates conserve memory by working with randomized blocks of this matrix, and the custom CUDA kernels yield further speedups by optimizing the power of modern GPUs. In this section we describe the starting point for a rather different approach.

\vspace{-3ex}
\paragraph{Origins of sparsity.} Though the matrix $\mat{Z}$ is neither sparse nor low-rank, interestingly \textit{it can often be written as the sum of a sparse matrix and a low-rank matrix}. We have, in fact, already hinted at this decomposition in \cref{eq:Delta}, for if the matrix $\mat{\Delta}$ is sparse, then it follows at once that $\mat{Z}=\mat{\Delta}+\mat{A}\mat{B}^{\!\top}$ is sparse plus low-rank. Furthermore, from the update in \cref{eq:updateD}, we see that $\mat{\Delta}$ cannot have more nonzero elements than the \textit{combined} number of nonzero elements in~$\mat{S}$ and $\hat{\mat{S}}$, where the latter (repeated here for convenience) is equal to the rectified low-rank matrix
$$\mat{\hat{S}} = \max(0,\mat{A}\mat{B}^{\!\top}).$$
In sum, $\mat{Z}$ will be sparse plus low-rank whenever $\mat{\Delta}$ is sparse, and $\mat{\Delta}$ will be sparse whenever $\mat{\hat{S}}$ is sparse. But when do expect the latter to be the case?
The answer is \textit{almost always}---at least, almost always if the matrices $\mat{A}$ and $\mat{B}$ are sensibly initialized, if the rank $r$ is not too small, and if the matrix~$\mat{S}$ is highly sparse. In particular, suppose that we \textit{initialize $\mat{A}$ as a strictly positive matrix and~$\mat{B}$ as a strictly negative matrix}. With this choice of initialization, the rectified matrix $\hat{\mat{S}}$ is equal to the all-zero matrix (i.e., completely sparse) at the outset of the alternating minimization. Suppose furthermore that the model of subzero matrix completion has sufficient capacity (i.e., a sufficiently high rank $r$) to explain the pattern of sparsity in $\mat{S}$. Then from this initialization, one also expects $\hat{\mat{S}}$ to remain fairly sparse as a consequence of minimizing the objective in \cref{eq:auxCost}, because otherwise it could hardly provide a reasonable approximation to $\mat{S}$. 

\vspace{-3ex}
\paragraph{Sparse matrix multiplication.}
Consider the two primary computational bottlenecks addressed by the custom CUDA kernels of \cref{sec:GPU}. Both involve matrix multiplications: the first is to multiply the factors $\mat{A}$ and $\mat{B}^\top$ (or blocks thereof) that appear in \cref{eq:kernel}, and the second is to multiply $\mat{\hat{S}}$ by the pseudoinverse of $\mat{B}^\top\!$. We can immediately exploit the sparsity of $\hat{\mat{S}}$ to accelerate the second of these operations, with an expectation that the speedup should be commensurate with the sparsity of $\mat{\hat{S}}$. One can also see this more directly in the least-squares updates of \cref{eq:spLsqA,eq:spLsqB}; if $\mat{\hat{S}}$ is sparse, then from \cref{eq:updateD} so is $\mat{\Delta}$, and as observed previously, the least-squares updates for the factors $\mat{A}$ and $\mat{B}$ are both linear in the matrix $\mat{\Delta}$.

\vspace{-3ex}
\paragraph{Fast similarity search.}
\label{sec:fss}
Though less obvious, sparsity can also be exploited to address the primary computational bottleneck of calculating $\mat{\hat{S}}$ in \cref{eq:reluAB} from the factors $\mat{A}$ and $\mat{B}$. Note that in order to compute $\hat{\mat{S}}$, it is necessary to know the magnitudes of the positive elements in $\mat{A}\mat{B}^{\!\top}$, but not the magnitudes of the negative ones. In particular, suppose there was an oracle that provided just the set of indices, $\Omega$, where $\mat{A}\mat{B}^{\!\top}$ has strictly positive elements: 
\begin{equation}
\Omega = \left\{(i,j)\,\big|\,(\mat{A}\mat{B}^{\!\top})_{ij}>0\right\}.
\label{eq:nonzeros}
\end{equation}
Then to compute $\hat{\mat{S}}$, it would not be necessary to compute the whole matrix product of $\mat{A}\mat{B}^\top$, but only to compute the very small fraction of nonzero elements indicated by $\Omega$. Such an oracle might seem fanciful, as it would need to solve the \textit{needles-in-the-haystack} problem of identifying the positive elements of $\mat{A}\mat{B}^{\!\top}$ without having to compute~$\mat{A}\mat{B}^{\!\top}$ itself. But there is, in fact, a large literature on algorithms to solve this problem, particularly when $\mat{A}$ and~$\mat{B}$ are tall skinny matrices with a small number of columns. To make the connection to earlier work, let~$\vec{a}_i$ and~$\vec{b}_j$ denote, respectively, the $i^{\text{th}}$ row of $\mat{A}$ and the $j^\text{th}$ row of $\mat{B}$, and let $\hat{\vec{a}}_i\!=\!\vec{a}_i/\|\vec{a}_i\|$ and $\hat{\vec{b}}_j\!=\!\vec{b}_j/\|\vec{b}_j\|$ denote the normalized unit vectors for these rows. Then we can rewrite the set of indices in \cref{eq:nonzeros}~as
\begin{equation}
\Omega = \left\{(i,j)\,\Big|\,\big\|\hat{\vec{a}}_i\!-\!\hat{\vec{b}}_j\big\|^2<2\right\}.
\label{eq:range-search}
\end{equation}
Thus the oracle mentioned above can also be viewed as solving the following problem: given two large collections of vectors, identify those pairs of vectors---consisting of one in each collection---that are less than two units of distance apart. This is the problem of \textit{range search}, closely related to the problem of \textit{nearest-neighbor search}, and many algorithms have been proposed to tackle it efficiently for vectors that lie in a low-dimensional space~\cite{bentley1975-multidimensional,agarwal2017-range}. 

\vspace{-3ex}
\paragraph{Contention with GPU-based approaches.}
We experimented with storing $\hat{\mat{S}}$ in memory as a sparse matrix and with customized oracles to perform the range search in \cref{eq:range-search}. There is often contention, however, between these optimizations for storing and computing sparse matrices and the dense matrix optimizations for  the custom CUDA kernels in the previous section. Thus to some extent it becomes an empirical question---for problems of different sizes and degrees of sparsity, for implementations on different architectures, and for updates at different stages of the alternating minimization---whether the gains from these sparse optimizations combine or compete with GPU-based speedups. We summarize these findings in the next section.


\section{Experimental results}
\label{sec:experiment}

In this section we analyze the large sparse matrices in \cref{tab:data} with the more scalable updates for alternating minimization in \cref{sec:scaling}. In \cref{sec:recipes} we describe the particular combination of updates that was most effective in practice and compare its performance to that of leading gradient-based approaches. Next, in \cref{sec:approximation}, we compare the reconstruction accuracy of different low-rank models as measured by mean-squared error and weighted Jaccard distance. Finally, in \cref{sec:fly}, we present a downstream application of these models and show that cell descriptors of varying specificity can be accurately predicted from low-rank representations of the fly connectome.

\subsection{SUMAC and SALSA}
\label{sec:recipes}

We experimented with a large number of different combinations of the updates in \cref{sec:scaling}. We give only the briefest summary of these experiments: in the end, what worked best was the combination of updates that extracted random blocks of $\mat{S}$ for the stochastic least-squares updates in \cref{sec:stochastic} and maximized the gains from GPUs using the custom kernels in \cref{sec:GPU}. For the problem sizes in \cref{tab:data}, the stochastic least-squares updates were sufficiently memory-conserving that the parallel updates in \cref{sec:parallel} were not needed; likewise, the GPU-based speedups were much easier to realize throughout the course of the optimization than the gains from fast similarity search in \cref{sec:fss}. We have released software packages in Python and MATLAB, under the name \textbf{SUMAC}~\cite{sumac2026-github}, to solve the problem of \textbf{su}bzero \textbf{ma}trix \textbf{c}ompletion in this way, and at the core of these packages are the custom kernels (written in C\texttt{++} and inline assembly) for the \textbf{s}tochastic \textbf{a}lternating \textbf{l}east-\textbf{s}quares \textbf{a}lgorithm (\textbf{SALSA}) described in the previous section.

As a baseline, we compare the performance of these packages to those for more general-purpose optimizations. The problem of subzero matrix completion can also be solved by existing packages for gradient-based learning in neural networks. To do so, we simply convert the objective function in \cref{sec:loss} into one that can be optimized by standard approaches. We do this in two steps. First we use \cref{eq:updateZ} to eliminate the latent variable~$\mat{Z}$ from \cref{eq:auxCost} and obtain a squared loss,
\begin{equation}
\label{eq:SLloss}
    \|\mat{Z}-\mat{L}\|_F^2\ =\  \sum_{S_{ij}=0} \max\left(0,L_{ij}\right)^2 + \sum_{S_{ij}>0} 
    (S_{ij}\!-\!L_{ij})^2,   
\end{equation}
 that is expressed entirely in terms of the sparse matrix $\mat{S}$ and the latent low-rank matrix $\mat{L}$. Next we substitute $\mat{L}\!=\!\mat{A}\mat{B}^\top$ into \cref{eq:SLloss} to obtain a squared loss
that is expressed entirely in terms of the sparse matrix $\mat{S}$ and the factors $\mat{A}$ and $\mat{B}$. Once the loss is expressed in this way, it can be minimized by standard gradient-based packages, such as the ADAM optimizer~\cite{kingma2015-adam} in PyTorch, that alternately view the rows and columns of $\mat{S}$ as outputs of a rectified-linear neural network~\cite{mazumdar2019-learning,saul2022-nmd} and the factors $\mat{A}$ and $\mat{B}$ as inputs or weight matrices.

\begin{figure}
\centerline{\includegraphics[width=0.9\textwidth]{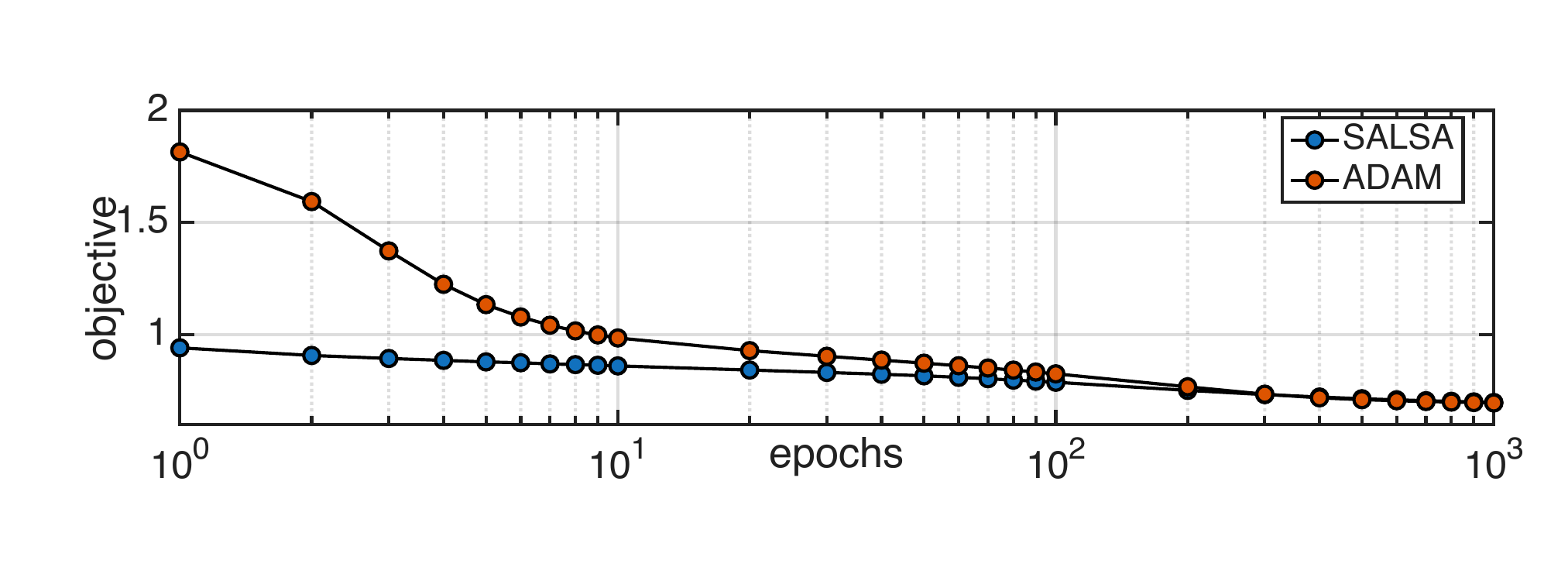}}
\vspace{-2ex}
\caption{Objective in \cref{eq:auxCost} versus the number of epochs, averaged over five different random seeds, for models of subzero matrix completion of rank 16 by SALSA and ADAM on the fly connectome matrix. SALSA converges more rapidly at the outset (and in shorter wall-clock times), while ADAM performs slightly better in the tail; see text for details.}
\label{fig:costs}
\end{figure}

\Cref{fig:costs} compares the convergence of the SALSA optimizer versus the ADAM optimizer in PyTorch with default hyperparameters. The plot shows the objective in \cref{eq:auxCost} for the fly connectome as a function of the number of epochs (i.e., loops over the rows and columns of $\mat{S}$), and the results have been averaged over five different initializations for a subzero matrix completion of rank 16. Both algorithms were initialized in the same way, so that the cost at epoch zero (not shown in the figure) was equal to~2, and both also used the same mini-batch size and momentum. The results show that SALSA decreases the cost more quickly at the outset while ADAM performs similarly or marginally better in the tail. 
A more striking contrast emerges, however, if we examine the convergence in terms of \textit{wall-clock time}. The ADAM optimizer benefits from the GPU acceleration in PyTorch, whereas the SALSA optimizer benefits from the custom CUDA kernels for its core routines. For the experiments shown in \cref{fig:costs}, the latter yield significantly shorter wall-clock times. In particular, on a single H100 GPU, these 1000 iterations take 9 minutes of wall-clock time for the ADAM optimizer in PyTorch versus 2-3 minutes for the SUMAC packages that we provide in Python and MATLAB. We remark that the ADAM optimizer in PyTorch can be further accelerated by using \emph{multiple} GPUs, and this support for SUMAC is a natural next step.

\subsection{Matrix approximation}
\label{sec:approximation}

We have already observed that the sparse matrices in \cref{tab:data} have slowly decaying spectra of singular values. In this section we compare the low-rank models obtained from truncated SVDs of these matrices to those obtained from SUMAC. Most of the SUMAC models were trained for 1000 epochs with default parameters of $\eta\!=\!1$ for the learning rate, $\gamma\!=\!0.9$ for the momentum, and a mini-batch size equal to $(1/100)^\text{th}$ of the number of rows or columns in the matrix $\mat{S}$. Further details of these experiments can be found in Appendix~\ref{app:eval}.

\Cref{fig:approx} compares the matrix approximation errors from SVD and SUMAC, as measured by the normalized root-mean-squared errors (RMSEs) in \cref{eq:RMSE} and the weighted Jaccard distances (WJDs) in \cref{eq:WJD}. The errors for SUMAC in these plots were averaged over models initialized by five different random seeds. (The standard deviations of these errors are not shown as they are smaller than the size of the plot markers.) For each of the analyzed matrices, we observe a crossover value of the rank above which the approximation errors from SVD and SUMAC are comparable and below which the errors from SUMAC decay much more rapidly. Furthermore, in all cases this crossover value is extremely small relative to the size of these matrices---less than 10 for the digits and connectome matrices, and less than 100 for the bigram counts. Interestingly, in each matrix this crossover value is also commensurate with the rank of the largest positive submatrix identified in \cref{tab:data}. 
The results show that SUMAC can reveal latent low-rank structure that purely linear models cannot.

\begin{figure}
\centerline{\hspace{1.7in}\includegraphics[width=4.8in]{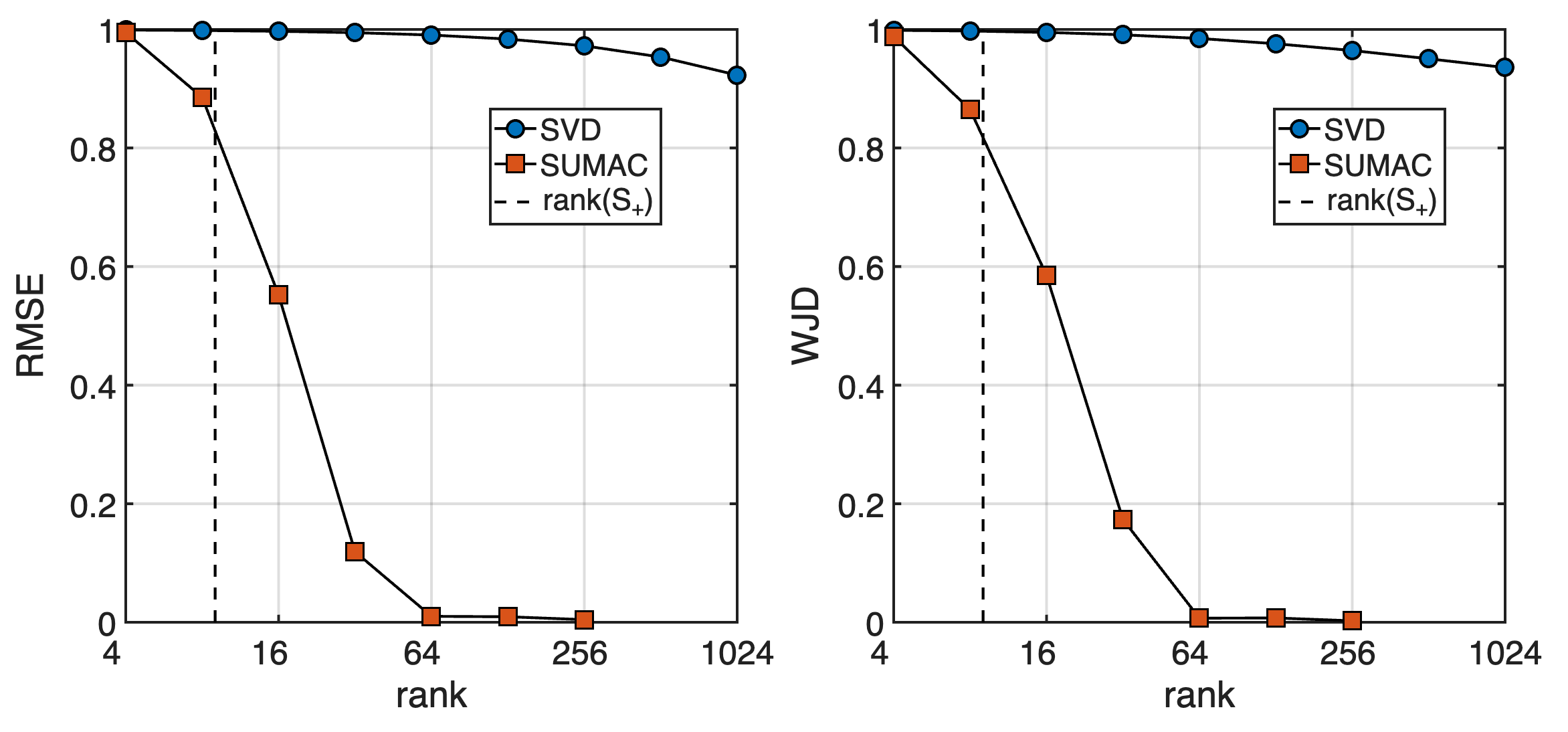}}
\vspace{2ex}
\centerline{\hspace{1.7in}\includegraphics[width=4.8in]{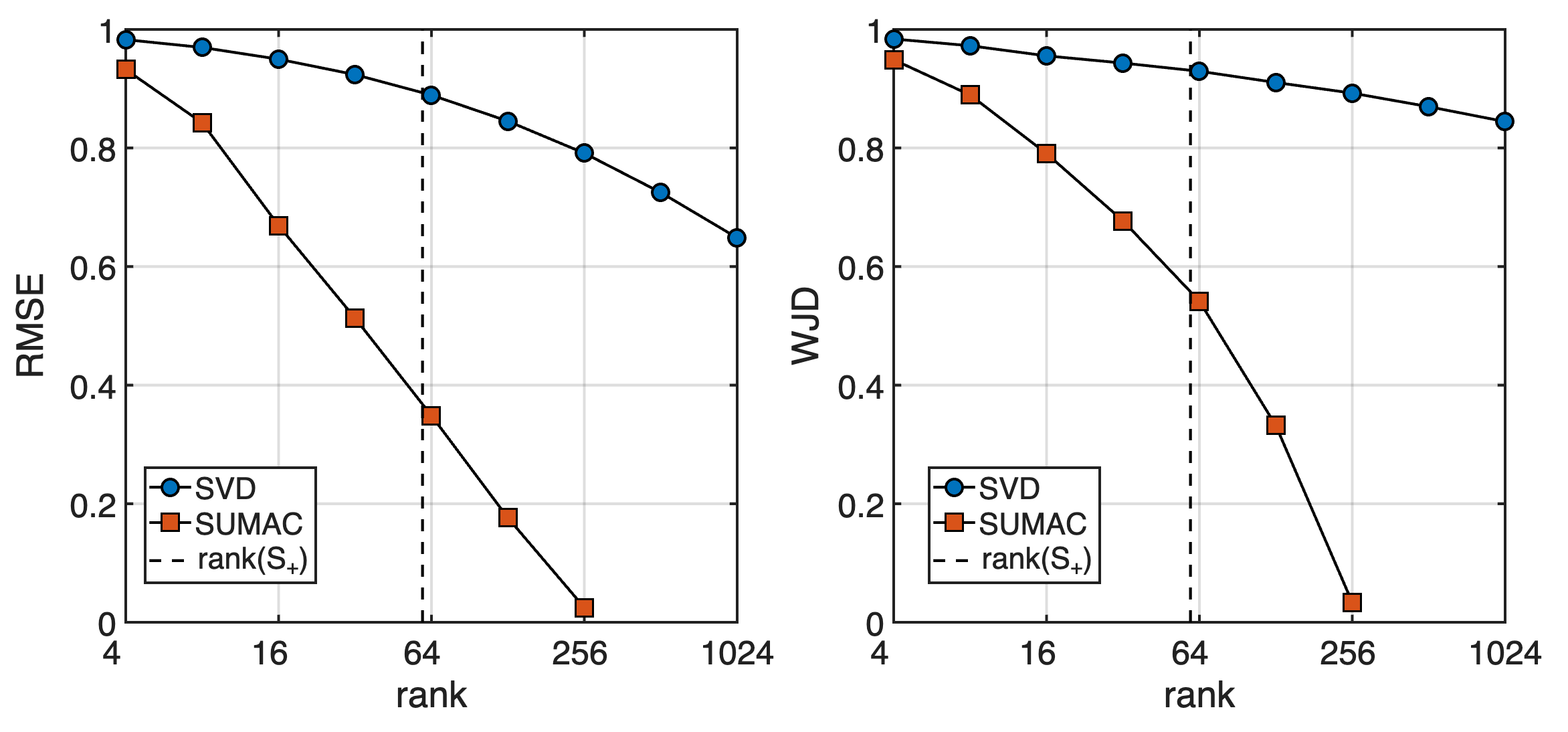}}
\vspace{2ex}
\centerline{\hspace{1.7in}\includegraphics[width=4.8in]{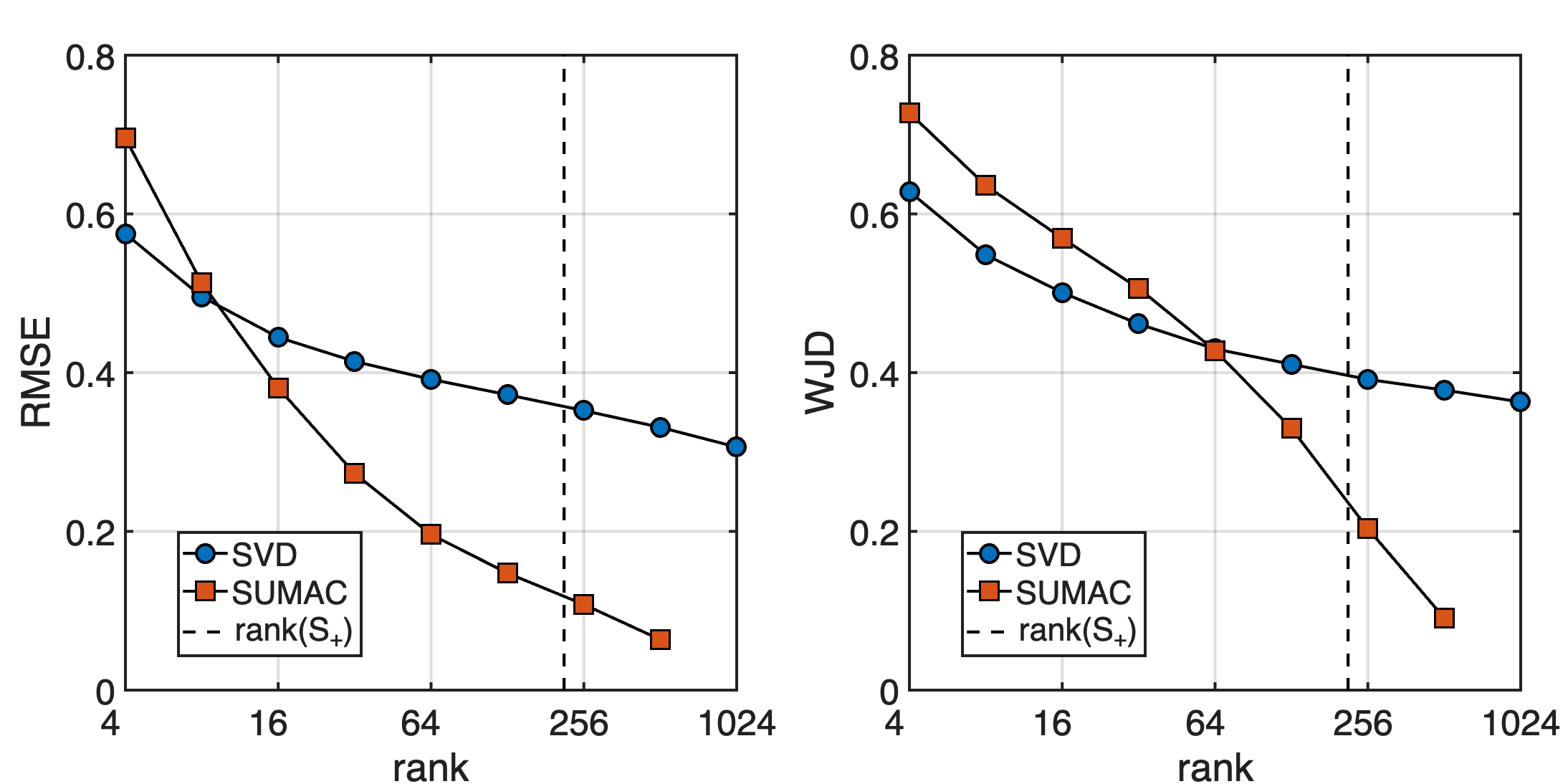}}
\vspace{-6.5in}\textbf{DIGITS} \\[0.5ex] 
$m\!=\!70,000$ \\ $\mbox{\hspace{0.5ex}}n\!=\!70,000$ \\[1.8in]
\textbf{CONNECTOME} \\[0.5ex]
$m\!=\!139,003$ \\ $\mbox{\hspace{0.5ex}}n\!=\!138,955$ \\[1.9in]
\textbf{BIGRAMS}\\ [0.5ex]
$m\!=\!249,992$ \\ $\mbox{\hspace{0.5ex}}n\!=\!249,987$ \\[0.8in]
\label{fig:approx}
\caption{Comparison of matrix approximation errors from truncated SVD and SUMAC with stochastic alternating least-squares updates. \textit{Left:} normalized root-mean-squared errors from \cref{eq:RMSE}. \textit{Right:} weighted Jaccard distances  from \cref{eq:WJD}. The dashed lines show the rank of the largest positive submatrix found by a randomized greedy search.} 
\end{figure}

Curiously we observe the errors from SUMAC to exceed those from SVD in some of the most severely underparameterized models (e.g., of rank $r\leq 32$ for the matrix of normalized bigram counts with 250K rows and columns). Presumably this occurs due to the existence of shallow local minima in the vicinity of the initialized values of the factors~$\mat{A}$ and~$\mat{B}$; recall that these factors were not initialized from the truncated SVD, but rather in a way that favors the sparsest possible updates in \cref{eq:spLsqA,eq:spLsqB}. In addition, other experiments suggest that these underparameterized models are less amenable to stochastic updates, mainly because the leading subspaces they compute are not stable across different mini-batches of rows and columns of the sparse matrix $\mat{S}$.


\subsection{Cell classification}
\label{sec:fly}

In this section we more closely examine the results of these models for the fly connectome. These models were learned entirely from the counts of synapses between cells, but the dataset also provides labels from human annotators that classify the majority of cells into categories of varying specificity~\cite{schlegel2024-whole}; see \cref{tab:classes}. For example, all of the 139,255 cells are labeled by \textsc{superclass} with one of the following nine labels: \texttt{optic}, \texttt{central}, \texttt{sensory}, \texttt{visual-projection}, \texttt{ascending}, \texttt{descending}, \texttt{visual-centrifugal}, \texttt{motor}, or \texttt{endocrine}. Most of the cells have also been more specifically labeled by \textsc{class} (with 27 labels), by \textsc{subclass} (with 96 labels), and by \textsc{cell type} (with 8547 labels). For each cell, it is known that many of these labels can be predicted from its synaptic connectivity with other cells~\cite{schwartzman2026-NTAC}. Here we investigate whether these labels can also be predicted by the latent low-rank structure of the connectome.

There are many ways in principle that one could explore this possibility. Here we adopt a simple approach based on nearest-neighbor classification in a low-dimensional embedding space. 
In particular, let $\mat{L}$ denote a low-rank matrix derived in some way from the sparse fly connectome $\mat{S}$. If~$\mat{L}$ is of rank $r$, then from the singular value decomposition
\begin{equation}
\mat{L} = \mat{U}\mat{\Sigma}\mat{V}^\top,
\label{eq:Lsvd}
\end{equation}
we can extract a diagonal $r\!\times\! r$ matrix $\mat{\Sigma}$ and semi-orthogonal matrices $\mat{U}$ and $\mat{V}$ of sizes $n\!\times r$, where $n$ is the number of cells in the connectome. We use these matrices to derive a low-dimensional embedding of the cells, where each cell is represented by a unit vector in $\mathbb{R}^{2r}$. In particular, to the $i$th cell, we associate the vector $\vec{\xi}_i$ with elements
\begin{equation}
\vec{\xi}_i \propto 
\left[\begin{array}{c} \!\!\vec{u}_i\!\! \\ \!\!\vec{v}_i\!\!\end{array}\right]
\mat{\Sigma}^{\frac{1}{2}},
\label{eq:embed}
\end{equation}
where we have used $\vec{u}_i$ and $\vec{v}_i$ to denote the $i$th rows of $\mat{U}$ and $\mat{V}$, respectively, and where the constant of proportionality is chosen so that $\|\xi_i\|=1$. Note that the first $r$ elements of $\vec{\xi}_i$ store information about the outgoing synapses of the $i$th cell, while the last~$r$ elements store information about the incoming ones. There are many possible variations of \cref{eq:embed} that can also be used to compute a low-dimensional embedding of cells (e.g., with different weightings or normalizations of the vectors~$\vec{u}_i$ and~$\vec{v}_i$), but for brevity we focus on the above.

\begin{table}[t]
\begin{center}
\begin{tabular}{lrrr} 
\toprule
 category\hspace{3ex} & \hspace{2ex}cardinality & \hspace{5ex}labeled & \hspace{5ex}missing \\ \midrule
superclass & 9 & 139,255 & 0 \\
class & 27  & 106,969 & 32,286 \\
subclass & 96 & 94,946 & 44,309 \\
cell type & 8,547 & 137,677 & 1,578 \\ \bottomrule
\end{tabular}
\end{center}
\caption{Numbers of labeled cells in the connectome for categories of differing specificity.}
\label{tab:classes}
\end{table}

Next we consider how the embedding in \cref{eq:embed} can be used to probe whether low-rank structure in the connectome, as captured by $\mat{L}$, is correlated with the cell labels in \cref{tab:classes}. If there are strong correlations, then one would expect similarly labeled cells $(i,i')$ to be separated by shorter pairwise distances $\|\vec{\xi}_i-\vec{\xi}_{i'}\|$ in the embedding space, and conversely, one would expect differently labeled cells to be separated by larger pairwise distances. One simple measure of this tendency is given by the held-out one-nearest-neighbor (1\textsc{nn}) error rate. In particular, for each category in \cref{tab:classes}, and for each labeled cell in this category, we can locate the nearest labeled cell in the embedding and observe if this nearest neighbor shares the same category label. The 1\textsc{nn} error rate is given by the proportion of disagreements.

\cref{tab:1nn-svd} shows the 1\textsc{nn} error rates obtained in this way from truncated SVDs that extract the \textit{linear} low-rank structure of the connectome matrix. These error rates are far lower than what would be obtained from random guessing, showing that (a) the cell labels are highly correlated with patterns of synaptic connectivity, and (b) the linear low-rank structure provides a summary description of these patterns that is considerably more compact than the connectome matrix itself. At the same time, it is also true that across all cell categories, the lowest error rates are obtained from the embeddings that are not especially low-dimensional. In particular, they are obtained from the embeddings in the last two rows of the table, where each cell in the connectome is represented by a vector with hundreds or thousands of elements.

By comparison, \cref{tab:1nn-sumac} shows the 1\textsc{nn} error rates, averaged over ten different random seeds, obtained from subzero matrix completions that look for \textit{nonlinear} low-rank structure in the connectome matrix; i.e., they seek a low-rank matrix $\mat{L}$ in \cref{eq:Lsvd} for which $\mat{S}\approx\max(0,\mat{L})$. The most striking difference here is that comparable or lower error rates are obtained by embeddings of much lower dimensionality. The results in \cref{tab:1nn-sumac} are shown for models of ranks between 8 and 16, while the results in \cref{tab:1nn-svd} are shown for truncated SVDs of ranks up to 1024.

We observed a surprising trend in these results as well. The more accurate results in \cref{tab:1nn-sumac} were obtained from models of subzero matrix completion that were only optimized for five epochs. This is a positive but mysterious result that we do not fully understand---positive because such models can be estimated very quickly, but mysterious because such models seem rather crude as measured by metrics such as the RMSE, WJD, and cost in \cref{eq:RMSE,eq:WJD,eq:auxCost}. To provide further context, \cref{tab:iter} shows the \textsc{superclass} 1\textsc{nn} error rates from subzero matrix completions that were optimized for different ranks and for different numbers of epochs; qualitatively similar results (not shown) were also obtained for the other cell categories. It is understandable that the highest-rank models, trained for the largest numbers of epochs, suffer from some sort of \textit{overfitting} to the objective in \cref{eq:auxCost} and therefore do not yield the lowest 1\textsc{nn} error rates for cell classification. Nevertheless, it is mysterious that the lowest error rates are obtained by low-rank models of subzero matrix completion that are hardly optimized at all, whose factors $\mat{A}$ and $\mat{B}$ in \cref{eq:ABt} are estimated by 10 or fewer epochs. By contrast, the embeddings obtained from truncated SVDs do not appear to suffer from this type of overfitting (or at least, not for the ranks up to 1024 shown in \cref{tab:1nn-svd}). We leave a more careful study of these overfitting issues to future work.

\begin{table}[t]
\begin{center}
\begin{tabular}{c}
\\
\textbf{SVD} \\[0.5ex]
\textsc{1nn} error \\ rate
(\%) \\
\end{tabular}
\hspace{4ex}
\begin{tabular}{rrrrr}
  \toprule
  rank & 
  \hspace{3ex}superclass & \hspace{3ex}class & \hspace{3ex}subclass & \hspace{3ex}cell type \\ \midrule
  4 & 13.3 & 10.6 & 43.4 & 64.6 \\
  16  & 3.96 & 2.29 & 15.7  & 35.0 \\
  64 & 2.48 & 1.20 & 6.41 & 22.2 \\
  256 & 1.89 & 0.94 & \textbf{4.30} & 16.1 \\
  1024 & \textbf{1.70}  & \textbf{0.83} & 4.79 & \textbf{15.1} \\
\bottomrule
\end{tabular}
\hspace{20ex}
\end{center}
\caption{\textsc{1nn} error rates (\%) from row and column embeddings obtained from different truncated SVDs of the connectome matrix.}
\label{tab:1nn-svd}
\end{table}

\begin{table}[t]
\vspace{2ex}
\begin{center}
\begin{tabular}{c}
\\
\textbf{SUMAC} \\[0.5ex]
\textsc{1nn} error \\ rate
(\%) \\
\end{tabular}
\hspace{4ex}
\begin{tabular}{rrrrr}
  \midrule  rank & 
  \hspace{5ex}superclass & \hspace{10ex}class & \hspace{7ex}subclass & \hspace{6ex}cell type \\ \midrule
   8  & $1.49 \pm 0.07$ & $0.85 \pm 0.06$ & $6.91 \pm 0.48$ & $21.3 \pm 0.49$ \\
   10  & $1.31\pm 0.06$ &  $0.74\pm 0.05$ & $5.57\pm 0.34$ & $19.1\pm 0.56$ \\
   12  & $1.21\pm 0.04$ & $0.68\pm 0.04$ &  $4.84\pm 0.48$ &  $18.1\pm 0.55$ \\
   14  & $1.15\pm 0.03$ &  $0.63 \pm 0.03$ &  $4.25\pm 0.26$  &  $17.3 \pm 0.27$ \\
   16  & $\textbf{1.13}\pm \textbf{0.02}$ & $\textbf{0.63}\pm \textbf{0.02}$ & $\textbf{4.19}\pm \textbf{0.14}$ &  $\textbf{17.2}\pm \textbf{0.26}$ \\
   \bottomrule
\end{tabular}
\end{center}
\caption{\textsc{1nn} error rates (\%) from row and column embeddings obtained from different subzero matrix completions of the connectome matrix. Each subzero matrix completion was optimized for only five iterations. The table shows the mean and standard deviation of the error rates over ten different random seeds.}
\label{tab:1nn-sumac}
\end{table}

\begin{table}[h]
\begin{center}
\begin{tabular}{c}
\\
\textbf{SUMAC} \\
superclass \\
\textsc{1nn} error \\
rate (\%) \\
\end{tabular}
\hspace{4ex}
\begin{tabular}{rrrrrr}
   \toprule
  rank & 
 \hspace{2ex}epoch=2 & \hspace{2ex}epoch=5 & \hspace{1ex}epoch=10 & \hspace{1ex}epoch=20 & epoch=1000 \\ \midrule
   004 &      6.40 &      4.15 &      4.02 &      4.65 &     5.49 \\
   008 &      2.14 &      1.42 &      1.44 &      1.68 &     3.26 \\
   016 &      1.39 &      \textbf{1.13} &      1.21 &      1.30 &     2.69 \\
   032 &      1.20 &      1.21 &      1.34 &      1.56 &     2.11 \\
   064 &      1.35 &      1.54 &      1.65 &      1.67 &     2.11 \\
   128 &      1.54 &      1.59 &      1.56 &      1.63 &     2.07 \\
   256 &      1.89 &      1.83 &      1.97 &      2.27 &     2.60 \\
 \bottomrule
\end{tabular}
\end{center}
\caption{\textsc{1nn} \textit{superclass} error rate (\%) from row and column embeddings obtained from different subzero matrix completions of the connectome matrix.}
\label{tab:iter}
\end{table}


\section{Related work}
\label{sec:related}

Low-rank matrix factorizations are a foundational tool in statistical learning and high dimensional data analysis~\cite{markovsky2012-lowrank,wright2021-hdda}. The basic problem, arising across many disciplines~\cite{deerwester1990-indexing,brunet2004-gene,koren2009-matrix,hu2022-lora,turk1991-eigenfaces}, is to approximate a large real-valued matrix by a product of two smaller factors. If the elements of the large matrix are fully observed, then this problem can be solved by a truncated singular value decomposition~\cite{eckart1936-svd}; otherwise, a low-rank completion can be used to recover the missing elements, a setting in which many theoretical guarantees have been proven~\cite{candes2009-exact,candes2010-matrix,keshavan2010-matrix,sun2016-guaranteed,nguyen2019-low,chatterjee2020-deterministic,zilber2022-GNMR}. 
These basic problems of matrix factorization and completion have been generalized on a number of different fronts~\cite{collins2002-generalization,gordon2003-GLM,singh2008-unified,bartholomew2011-latent,udell2016-generalized,hoff2021-additive}. Individual elements of the factors can be required to be binary~\cite{meeds2007-modeling}, discrete~\cite{kolda1998-semidiscrete}, bounded~\cite{lee1996-unsupervised,ding2020-convex,thanh2023-bounded}, or nonnegative~\cite{paatero1994-positive,lee1999-nmf,gillis2021-nmf}; in addition, one or both factors can be constrained as a whole to be sparse~\cite{zou2006-sparse,daspremont2008-optimal,witten2009-pmd,soni2016-noisy}, stochastic~\cite{lee1996-unsupervised,saul1997-aggregate,hofmann1999-probabilistic,thanh2023-bounded}, or of low norm~\cite{srebro2005-maximum}. Likewise, further constraints can apply to the matrix to be encoded or completed. Previous works have considered settings where its elements are binary~\cite{tipping1999-probabilistic,hoff2002-latent,schein2003-generalized,davenport2014-1bit,bhaskar2015-1bit}, discrete or categorical~\cite{cao2013-categorical,lafond2014-probabilistic,bhaskar2016-probabilistic}, bounded~\cite{kannan2014-bounded}, integer~\cite{gopalan2014-poisson}, or nonnegative~\cite{mazumdar2019-learning}; the completed matrix as a whole can also be constrained to be nonnegative~\cite{song20-nonnegative} or positive-semidefinite~\cite{bishop2014-deterministic,liu2024-symmetric}. Uncertainty in the observed elements can be incorporated by probabilistic models that are fitted by maximum likelihood estimation, maximum a posteriori estimation, or Bayesian methods~\cite{saul1997-aggregate,hofmann1999-probabilistic,hoff2002-latent,hoff2005-bilinear,meeds2007-modeling,pmf-mnih08,handcock2007-model,bhaskar2016-probabilistic,ma2020-universal,hoff2021-additive,saul2022-nmd}. It is common to enforce elementwise constraints on the completed matrix by passing the low-rank product of factors through an elementwise quantization or nonlinearity~\cite{ganti2015-monotonic,mazumdar2019-learning,saul2022-nmd,awari2025-alternating}; in probabilistic models, and particularly in generalized linear models, these elementwise nonlinearities arise from the inverse of statistical link functions~\cite{collins2002-generalization,gordon2003-GLM}.  Many authors have proven theoretical guarantees and derived statistical rates of convergence for the recovery of latent low-rank structure in these more general settings~\cite{cao2013-categorical,davenport2014-1bit,lafond2014-probabilistic,bhaskar2015-1bit,bhaskar2016-probabilistic,soni2016-noisy,mazumdar2019-learning,ganti2015-monotonic,ma2020-universal,liu2024-symmetric}. Finally, the factors in these models are commonly learned via some form of alternating minimization~\cite{schein2003-generalized,srebro2003-wlr,singh2008-unified,wen2012-solving,udell2016-generalized,tanner2016-low,chi2019-nonconvex}.

Our work builds directly on earlier attempts to discover low-rank encodings of sparse nonnegative matrices via a rectified linearity~\cite{saul2022-nmd,seraghiti2023-accelerated,wang2024-momentum,wang2025-accelerated,gillis2025-extrapolated}. Previous studies have explored a number of alternating minimizations for this problem. Saul~\cite{saul2022-nmd} introduced a probabilistic latent variable model where the factors are estimated by an Expectation-Maximization algorithm~\cite{dempster1977-maximum}; he also described the alternating minimization in \cref{eq:updateZ,eq:tsvd}, based on truncated SVDs, as a particular limiting case. Seraghiti \textit{et al}~\cite{seraghiti2023-accelerated} showed that faster solutions could be obtained by parameterizing the low-rank matrix $\mat{L}$ in \cref{eq:auxCost} as a product of smaller factors and iteratively re-estimating these factors with least-squares updates; Gillis \textit{et al}~\cite{gillis2025-extrapolated} proved the convergence of this scheme as well as that of an even faster variant with an extrapolation step. In parallel work, Wang \textit{et al}~\cite{wang2024-momentum,wang2025-accelerated} introduced additional regularizing terms into the updates for the factors and proved convergence of an accelerated alternating minimization based on Bregman proximal gradient methods. Several related works have considered the specialized setting where the low-rank matrix in \cref{eq:auxCost} is constrained to be symmetric~\cite{saul2022-geometrical,wang2025-efficient} or positive-definite~\cite{liu2024-symmetric}. Finally, it has been observed that minima of the objective in \cref{eq:auxCost} with nonzero error do not generally coincide with those of the RMSE in \cref{eq:auxCost}~\cite{gillis2025-extrapolated}, and subsequent works have explored algorithms to minimize the latter based on coordinate-descent~\cite{awari2024-coordinate} and the alternating direction method of multipliers~\cite{awari2025-alternating}.

Similar connections between sparse and low-rank matrices arise in the problem of manifold learning. The goal of manifold learning is to discover a faithful mapping of high-dimensional inputs into a low-dimensional embedding space. Some algorithms discover this mapping by using a sparse matrix to encode nearest-neighbor relations and constructing a low-rank Gram matrix from which the embedding is derived~\cite{tenenbaum2000-global,weinberger2006-mvu}. Manifold structure can also be sparsely encoded by locally linear reconstruction weights~\cite{roweis2000-nonlinear,chen2018-sparse} and thresholded similarity matrices~\cite{sengupta2018-tiling,saul2022-geometrical}. 

Sparse boolean matrices of zeros and ones are a special case of sparse nonnegative matrices. The \textit{sign rank} of a sparse boolean matrix $\mat{S}$ is equal to the minimal rank of all matrices $\mat{L}$ such that $\mat{S}=\Theta(\mat{L})$, where~$\Theta$ is the Heaviside step function (whose value at zero is left undefined). Properties of the sign rank are of great interest to researchers in learning theory and communication complexity~\cite{paturi1986-probabilistic,alon2016-sign,hatami2022-sign}. It is known, for example, that the $n\!\times\! n$ identity matrix with $n\!\geq 3$ has a sign rank of~3, and this result~\cite{alon2016-sign} provides a corresponding lower bound to the result in \Cref{thm:ident}. It is also known that there exist very sparse matrices whose sign rank is large~\cite{sherstov2023-near}.


\section{Discussion}
\label{sec:discuss}

In this paper we have analyzed when sparse nonnegative matrices can be recovered from a real-valued matrix of much lower rank by zeroing out its negative elements. We have also described a scalable algorithm to discover this latent low-rank structure in sparse matrices with hundreds of thousands of rows and columns.

There remain several directions for future work. The stochastic updates in this paper scale to large problem sizes by operating on random blocks of the factors~$\mat{A}$ and $\mat{B}$ in \cref{eq:ABt}. It is likely that further speedups could be obtained by adaptive strategies for choosing the momentum and step sizes~\cite{wang2024-momentum,gillis2025-extrapolated,kingma2015-adam} and/or selectively invoking the sparse optimizations of \cref{sec:sparse}. It should also be possible to adapt randomized methods for SVD~\cite{mahoney2010-random,halko2011-finding,tropp2017-practical} to this problem, exploiting the decomposition of $\mat{Z}$ as a sparse plus low-rank matrix in \cref{eq:Delta}. Though the stochastic updates in this paper work well in practice, they can diverge for poorly chosen batch sizes (e.g., too small) or step sizes (e.g., too large); it would be desirable to have a formal proof of convergence (as in~\cite{gillis2025-extrapolated,wang2025-accelerated}) that informed these choices. Another direction for study is suggested by the results in \cref{sec:fly}, where the lowest 1\textsc{nn} error rates were obtained from subzero matrix completions that were optimized for only a few iterations. It seems that the performance in downstream tasks may be susceptible to overfitting, either because the objective in \cref{eq:auxCost} is a poor surrogate for those tasks or because the sparse matrix~$\mat{S}$ itself is a single noisy sample from an underlying generative process. It would be worthwhile to explore the use of regularization~\cite{wang2024-momentum,wang2025-accelerated} in this context as a more principled alternative to early stopping. Another hope is that the theoretical results in \cref{sec:motivation}, for various idealized templates of sparse matrices, could better inform how to initialize the optimizations for subzero matrix completion on real-world datasets. Finally, we mention that many so-called spectral methods for clustering~\cite{ng2001-spectral,vonluxburg2007-tutorial} and graph-matching~\cite{umeyama1988-eigen} are based on computing the eigenvalues and eigenvectors of sparse matrices. It is natural to wonder if these problems could be reformulated in terms of latent low-rank decompositions that encode~this~sparsity.


\section*{Acknowledgements}
LS thanks Sebastian Seung for his suggestion and help to analyze connectome data with these methods, Charlie Epstein for several fruitful discussions about relative primes and the matrix in \cref{eq:logGCD}, Ryan O'Donnell and Mohan Paturi for pointers to related work on the sign rank of matrices, and Nicolas Gillis and  Giovanni Seraghiti for several illuminating exchanges about nonlinear matrix decompositions during a visit to Mons in the fall of 2025.



\appendix

\section{Details on sparse datasets}
\label{app:data}

In this appendix we describe how the sparse nonnegative matrices in \cref{tab:data} were constructed from publicly available datasets.

\vspace{-2ex}
\paragraph{Digits.} The first matrix in \cref{tab:data} was constructed from the training and test images in the $\textsc{mnist}$ dataset of handwritten digits~\cite{lecun98-gradient}. The matrix has 70K rows and columns, and the $ij^\text{th}$ element is nonzero if and only if the $j^\text{th}$ image is one of the $k$ nearest neighbors of the $i^\text{th}$ image with $k\!=\!16$; here nearest neighbors were computed using cosine distance, and each digit was counted among its own~$k$ nearest neighbors. The values of elements were computed as follows. Let~$\vec{x}_i$ denote the 784-dimensional pixel vector of the $i^\text{th}$ image, and let $j_i$ and $j'_i$ denote the indices of the $k^\text{th}$ and $(k\!+\!1)^\text{th}$ nearest neighbors to this image. For each image, we defined a threshold, given~by
\begin{equation}
\tau_i = \tfrac{1}{2}\left[\cos\left(\vec{x}_i,\vec{x}_{j_i}\right) + \cos\big(\vec{x}_i,\vec{x}_{j'_i}\big)\right]
\end{equation}
for the $i$th image, so that its $k$-nearest neighbors were precisely those closer in angle than $\cos^{-1}(\tau_i)$. In terms of this threshold, we computed the matrix elements in the $i^\text{th}$ row as
\begin{equation}
S_{ij} = \max\left(0,\vec{x}_i\!\cdot\!\vec{x}_j-\tau_i\|\vec{x}_i\|\|\vec{x}_j\|\right).
\end{equation}
The matrix constructed in this way is over 99.9\% sparse with exactly 16 nonzeros in each of its 70000 rows. Note that it is also asymmetric. 

\vspace{-2ex}
\paragraph{Connectome.} The second matrix in \cref{tab:data} was constructed from the recently published connectome of a female fruit fly~\cite{dorkenwald2024-flywire,matsliah2023-codex}. Connectome weights and cell labels were downloaded from \url{https://codex.flywire.ai/api/download?dataset=fafb} on September 16, 2025. The $ij^\text{th}$ element in this matrix records the number of synapses between the $i^\text{th}$ and $j^\text{th}$ cell in the connectome. The matrix has 139255 rows and columns, but a handful of these rows and columns have no nonzero elements; only the rows and columns with at least one nonzero element are counted in \cref{tab:data}. The experiments in \cref{sec:fly} are based on cell labels from this same dataset. All of the 139255 cells in the connectome are labeled by \textsc{superclass}, while most of the cells are more specifically labeled by \textsc{class}, \textsc{subclass}, and \textsc{cell type}. Within each of these categories there is a set of possible labels: this set contains 9 labels for superclasses, 28 for classes, 97 for subclasses, and 8547 for cell types. \cref{tab:labels} shows the most frequent ten labels in each category. Note that the labels are not distributed evenly, and within each category the dominant label often accounts for a large percentage. Even so, the error rates in \cref{sec:fly} are much lower than what would be obtained by always predicting the dominant label (e.g., \texttt{optic} for \textsc{superclass}, \texttt{optic-lobe-intrinsic} for \textsc{class}).

\vspace{-2ex}
\paragraph{Bigrams.} The third matrix in \cref{tab:data} is a stochastic matrix derived from bigram counts for the 250K most frequently observed tokens in Wikipedia text~\cite{TC-wikipedia}. Rows of the matrix containing nonzero bigram counts were normalized so that they summed to one. A handful of rows and columns have no nonzero elements, corresponding to tokens that were not observed to be followed or preceded by any of the 250K most frequently observed tokens. Only the rows and columns with at least one nonzero element are counted in \cref{tab:data}.

\begin{table}
\scriptsize
\begin{center}
\begin{tabular}{lrclrclrclr} 
\toprule
{\sc superclass} & & \hspace{1ex} & {\sc class} & & \hspace{1ex} & {\sc subclass} & & \hspace{1ex} & \multicolumn{2}{l}{\sc cell type} \\
\midrule
\texttt{optic} & 77849 & & \texttt{optic-lobe-intrinsic} & 76928 & & \textit{unlabeled} & 44309 & & \texttt{R1-6} & 8525 \\
\texttt{central} & 32388 & & \textit{unlabeled} & 32286 & & \texttt{transmedullary} & 17947 & & \texttt{KCg-m} & 2190 \\
\texttt{sensory} & 16981 & &\texttt{visual} & 11469 & & \texttt{photo-receptor} & 9777 & & \texttt{T2a} & 1774 \\
\texttt{visual-projection} & 7665 & & \texttt{Kenyon-Cell} & 5177 & & \texttt{medulla-intrinsic} & 7859 & & \texttt{Tm3} & 1756 \\
\texttt{ascending} & 2362 & & \texttt{CX} & 2869 & & \texttt{lamina-monopolar} & 7584 & & \texttt{T4c} & 1710 \\
\texttt{descending} & 1303 & & \texttt{mechanosensory} & 2648 & & \texttt{distal-medulla} & 6538 & & \texttt{T3} & 1676 \\
\texttt{visual-centrifugal} & 521 & & \texttt{AN} & 2362 & & \texttt{t4-neuron} & 6243 & & \texttt{KCab} & 1643 \\
\texttt{motor} & 106 & & \texttt{olfactory} & 2281 & & \texttt{t5-neuron} & 6002 & & \texttt{L1} & 1596 \\
\texttt{endocrine} & 80 & & \texttt{ALPN} & 685 & & \texttt{transmedullary-y} & 5278 & & \texttt{L2} & 1593  \\
\textit{\it unlabeled} & 0 & & \texttt{LHLN} & 514 & & \texttt{t2 neuron} & 3240 & & \texttt{Mi1} & 1581 \\
\bottomrule
\end{tabular}
\end{center}
\caption{Top 10 labels per cell category in the connectome data set.}
\label{tab:labels}
\end{table}


\section{Experimental settings}
\label{app:eval}

In \cref{sec:approximation} we presented experimental results for subzero matrix completions of the sparse nonnegative matrices in \cref{tab:data}. Most of these models were trained for 1000 epochs with default parameters of $\eta\!=\!1$ for the step size, $\gamma\!=\!0.9$ for the momentum, and 100 mini-batches per epoch (i.e., a mini-batch size equal to $(1/100)^\text{th}$ of the number of rows or columns in the matrix $\mat{S}$). However, some of the models required smaller step sizes, fewer (but larger) mini-batches, and/or more iterations to converge. \Cref{tab:hyper} lists these hyperparameters for all of the experiments in \cref{sec:approximation}. From the bottom rows of the table, we see that smaller step sizes and larger mini-batches are required for the smallest and most heavily \textit{underparameterized} models to converge, presumably because in these case there is high variance in the leading subspaces identified across different stochastic updates. Likewise, we see that more iterations (4000 versus 1000) are generally required for convergence on the largest sparse matrix of bigram counts (with nearly 250K rows and columns).  This trend is also to be expected for larger problem sizes.

\begin{table}[t]
\begin{center}
\begin{tabular}{rccccccc}
\toprule
\textbf{dataset} & \hspace{7ex}\textbf{rank}\hspace{7ex} & \textbf{step-size} & 
\hspace{2ex}\textbf{iterations}\hspace{2ex} & \textbf{mini-batches} \\
\midrule
digits & $2^2,2^3,\ldots,2^8$ & 1 & 1000 & 100 \\
connectome & $2^4,2^5,\ldots,2^8$ & 1 & 1000 & 100 \\
bigrams & $2^4,2^5,\ldots,2^9$ & 0.1 & 4000 & 100 \\
connectome & $2^2$ & 1 & 1000 & 25\\
connectome & $2^3$ & 1 & 1000 & 50\\
bigrams & $2^2$ & 0.01 & 1000 & 50 \\
bigrams & $2^3$ & 0.1 & 4000 & 50 \\
\bottomrule
\end{tabular}
\end{center}
\caption{Hyperparameter settings for the models of subzero matrix completion in \cref{sec:approximation}.}
\label{tab:hyper}
\end{table}


\section{GPU optimizations}
\label{app:gpu}

In this appendix we provide more details on the GPU optimizations in \cref{sec:GPU}. We begin by deriving \cref{eq:AIratio} for the ratio of arithmetic intensities. Let $\mat{A}\!\in\!\mathbb{R}^{m\times r}$ and $\mat{B}\!\in\!\mathbb{R}^{n\times r}$ with $r\!<\!\min(m,n)$. It requires $mnr$ scalar multiplications and $mn(r\!-\!1)$ scalar additions to compute the matrix product $\mat{A}\mat{B}^\top$, and thus the arithmetic intensity to calculate this product is given by
\begin{equation}
    I_\text{product} = \frac{mn(2r\!-\!1)}{w(mr+nr+mn)}\frac{\mathrm{FLOP}}{\mathrm{Byte}},
\label{eq:matmul_arithmetic_intensity}
\end{equation}
where $w$ is the number of bytes per word (e.g., $w=4$ for 32-bit floating-point values). Note that the denominator of this expression scales with the size of the matrix product $\mat{A}\mat{B}^\top$ since $wmn$ bytes must be transferred from the GPU at the end of the calculation. The \textit{split-kernel} computation of \cref{eq:kernel} accordingly suffers from the $2wmn$-byte memory round-trip of this matrix product. Next we compute the arithmetic intensity $I_{\mathrm{fused}}$ of the fused kernel for calculating \cref{eq:kernel}. This calculation involves two matrix products, so that its arithmetic intensity is given by
\begin{equation}
    I_{\text{fused}} = \frac{mn(2r\!-\!1)+mr(2n\!-\!1)}{2w(mr+nr)}\frac{\mathrm{FLOP}}{\mathrm{Byte}}.
    \label{eq:fused_arithmetic_intensity}
\end{equation}
Note that the denominator in this expression scales with the sizes of the factors $\mat{A}$ and $\mat{B}$ but not with the size of their product. In particular, the fused kernel avoids the $2wmn$ bytes memory round-trip of the split-kernel computation. Taking the ratio of \cref{eq:fused_arithmetic_intensity,eq:matmul_arithmetic_intensity}, we find
\begin{equation}
\frac{I_\text{fused}}{I_\text{product}}\,
 =\, \frac{1}{2}\left[1 + \frac{r(2n\!-\!1)}{n(2r\!-\!1)}\right] \left[1 + \frac{mn}{(m\!+\!n)r}\right]\,
 >\, 1 + \frac{mn}{(m\!+\!n)r},
\end{equation}
where the inequality follows from the earlier assumption that $r\!<\!n$. In this way we recover the lower bound in \cref{eq:AIratio}.

Next we provide more details of our TF32 implementations for the kernel in \cref{eq:kernel}. We provide two implementations of this kernel: one is a warp-synchronous Tensor Core implementation that uses the \textbf{mma} instruction family introduced with the A100 generation of NVIDIA GPUs, and the other is a warp-group asynchronous Tensor Core version that uses the \textbf{wgmma} instruction family introduced with the H100 generation.
A CUDA warp contains 32 threads, and on Hopper GPUs, \textbf{wgmma} instructions are issued by a warpgroup, a cooperative group of four consecutive warps (i.e., 128 threads) within the same thread block. As shorthand in what follows, we use $\mat{C}$ to denote the pseudoinverse of $\mat{B}^\top$ in \cref{eq:kernel} and $\mat{Y}$ to denote the kernel output $\max(0,\mat{A}\mat{B}^\top)\mat{C}$.

Our TF32 kernels decompose $\mat{Y}$ into tiles. Let $i$ index a tile of $m_T$ output rows (where $m_T<m$), and let $j$ index a tile of $n_T$ rows (where $n_T<n$) from $\mat{B}$ and $\mat{C}$ along their shared dimension. We write these tiles as $\mat{A}_i\in\mathbb{R}^{m_T\times r}$, $\mat{Y}_i\in\mathbb{R}^{m_T\times r}$, and $\mat{B}_j,\mat{C}_j\in\mathbb{R}^{n_T\times r}$. Before the main compute kernel, a short packing kernel converts $\mat{B}$ and $\mat{C}$ from FP32 to TF32 and rearranges each tile into the operand layout expected by the Tensor Core instructions.
The main kernel then uses a one-dimensional thread-block grid over the output rows: each thread block owns one $\mat{A}_i$ tile and the corresponding output tile $\mat{Y}_i$. Since $\mat{A}_i$ is reused for every $j$ tile, it is loaded once at kernel startup,
either into registers in the \textbf{mma} kernel, or into registers or shared memory depending on the \textbf{wgmma} operand mode. The compute loop then streams over the $\mat{B}_j,\mat{C}_j$ tiles. For each tile $j$, the packed $\mat{B}_j$ and $\mat{C}_j$ data are staged from global
memory into shared memory using an asynchronous copy pipeline. The \textbf{mma} kernel uses multistage \textbf{cp.async} copies: while the Tensor Core instructions consume the current shared-memory tile,
later tiles are already in flight. The \textbf{wgmma} kernel uses a producer/consumer pattern and utilizes the specialized Tensor Memory Accelerator (TMA) units for global-to-shared memory copies: one warpgroup issues \textbf{cp.async.bulk} transfers into staged shared-memory buffers,
while the compute warpgroups consume ready stages with \textbf{wgmma} instructions. 
Within the compute phase, a series of \textbf{mma}/\textbf{wgmma} instructions performs the first contraction to compute the local intermediate $\mat{L_{ij}}=\mat{A}_i\mat{B}_j^{\!\top}$. Since TF32 Tensor Core instructions produce FP32 outputs, we then need to convert $\mat{S}_{ij}$ again to TF32 to feed into the second contraction that accumulates $\mat{Y}_i\mathrel{+}=\max(0,\mat{L_{ij}})\mat{C}_j$. In both cases, the intermediate $\mat{S}_{ij}$ tile is never written to global memory. The overlapping copy/compute pipelines for both kernels are sketched in Figure~\ref{fig:kernel-pipeline}.

In our Python implementation, we use autotuning to scan, select, and cache the tile sizes $m_T$ and~$n_T$, the number of rows handled per warp or warpgroup and the number of pipeline stages that maximize the kernel performance for the given input matrix shapes. For the \textbf{wgmma} kernel, the operand mode and the \textbf{wgmma} instruction shapes are also part of the autotuning space. (The TF32 \textbf{wgmma.mma\_async} PTX instruction supports varying shapes for the $m$, $n$, and $k$ dimensions of its three input matrix operands: $m=64$ and $k=8$ are fixed, but $n$ may vary between 8 and 256 in steps of 8.) We allow for different \textbf{wgmma} instruction shapes in the first and second contraction of the compute phase. 

This autotuning is combined with JIT compilation using NVIDIA's runtime compiler in order to avoid having to precompile a large number of parameter combinations ahead of time. In addition to the TF32 kernels, we also provide plain FP32 fused kernels.

\begin{figure}
\centering
\resizebox{0.98\textwidth}{!}{%
\begin{tikzpicture}[
    >=Latex,
    x=0.58cm,
    y=0.72cm,
    rowlabel/.style={font=\scriptsize, anchor=east, align=right},
    title/.style={font=\small\bfseries, anchor=west},
    stage/.style={draw, rounded corners=1pt},
    copy/.style={stage, fill=blue!10, draw=blue!55!black},
    wait/.style={stage, fill=black!6, draw=black!40},
    ready/.style={stage, fill=green!10, draw=green!45!black},
    first/.style={stage, fill=orange!18, draw=orange!75!black},
    relu/.style={stage, fill=red!8, draw=red!55!black},
    second/.style={stage, fill=purple!10, draw=purple!65!black},
    bartext/.style={font=\tiny, align=center},
]
    \node[title] at (0.0,6.6) {\textbf{mma} kernel:};

    \node[rowlabel] at (1.9,5.55) {cp.async};
    \node[rowlabel] at (1.9,4.57) {warp, tile $i$};

    \draw[copy] (2.2,5.28) rectangle (7.8,5.82);
    \node[bartext] at (4.6,5.55) {Copy $j+1$ in flight};
    \draw[copy] (12.65,5.28) rectangle (19.25,5.82);
    \node[bartext] at (14.75,5.55) {Copy $j+2$ in flight};
    \node[bartext] at (19.85,5.55) {$\cdots$};

    \draw[wait] (2.2,4.30) rectangle (3.8,4.84);
    \node[bartext] at (2.95,4.57) {Wait $j$};
    \draw[first] (3.9,4.30) rectangle (6.05,4.84);
    \node[bartext] at (5,4.57) {MMA $L_{ij}$};
    \draw[relu] (6.2,4.30) rectangle (7.45,4.84);
    \node[bartext] at (6.825,4.57) {ReLU};
    \draw[second] (7.6,4.30) rectangle (9.75,4.84);
    \node[bartext] at (8.65,4.57) {MMA $Y_i$};
    \draw[copy] (9.95,4.30) rectangle (12.5,4.84);
    \node[bartext] at (11.2,4.57) {Issue $j+2$};

    \draw[wait] (12.65,4.30) rectangle (14.75,4.84);
    \node[bartext] at (13.7,4.57) {Wait $j+1$};
    \draw[first] (14.9,4.30) rectangle (17.75,4.84);
    \node[bartext] at (16.25,4.57) {MMA $L_{i,j+1}$};
    \draw[relu] (17.8,4.30) rectangle (19.25,4.84);
    \node[bartext] at (18.5,4.57) {ReLU};
    \node[bartext] at (19.85,4.57) {$\cdots$};
    \draw[->, thin, draw=black!65] (4.6,5.28) -- (13.7,4.84);
    \draw[->, draw=black!65] (2.1,3.95) -- (20.9,3.95)
        node[anchor=west, font=\scriptsize] {time};

    \node[title] at (0.0,3.25) {\textbf{wgmma} kernel:};

    \node[rowlabel] at (1.9,2.15) {producer WG};
    \node[rowlabel] at (1.9,1.17) {consumer WG 0, tile $i$};
    \node[rowlabel] at (1.9,0.37) {consumer WG 1, tile $i+1$};

    \draw[copy] (2.3,1.88) rectangle (4.4,2.42);
    \node[bartext] at (3.35,2.15) {Copy $j+1$};
    \draw[wait] (4.5,1.88) rectangle (11,2.42);
    \node[bartext] at (7.725,2.15) {Wait Stage $j$ completion};
    \draw[copy] (11.15,1.88) rectangle (13.55,2.42);
    \node[bartext] at (12.35,2.15) {Copy $j+2$};
    \draw[wait] (13.7,1.88) rectangle (19.55,2.42);
    \node[bartext] at (16.625,2.15) {Wait Stage $j+1$ completion};

    \draw[ready] (2.35,0.90) rectangle (3.95,1.44);
    \node[bartext] at (3.15,1.17) {Ready $j$};
    \draw[first] (4.1,0.90) rectangle (6.45,1.44);
    \node[bartext] at (5.275,1.17) {MMA $L_{ij}$};
    \draw[relu] (6.6,0.90) rectangle (8.0,1.44);
    \node[bartext] at (7.3,1.17) {ReLU};
    \draw[second] (8.15,0.90) rectangle (10.1,1.44);
    \node[bartext] at (9.125,1.17) {MMA $Y_i$};
    \draw[ready] (11.1,0.90) rectangle (13.55,1.44);
    \node[bartext] at (12.365,1.17) {Ready $j+1$};
    \draw[first] (13.65,0.90) rectangle (16.6,1.44);
    \node[bartext] at (15.05,1.17) {MMA $L_{i,j+1}$};
    \node[bartext] at (17.3,1.17) {$\cdots$};

    \draw[densely dashed, draw=black!55] (11.065,-0.05) -- (11.065,2.55)
        node[pos=1, above, font=\scriptsize, text=black!65] {stage $j$ done};
    \draw[->, thin, draw=black!65] (3.35,1.88) -- (12.475,1.44);
    \draw[->, thin, dashed, draw=black!65] (12.35,1.88) -- (20.25,1.35);

    \draw[ready] (2.7,0.10) rectangle (4.3,0.64);
    \node[bartext] at (3.5,0.37) {Ready $j$};
    \draw[first] (4.45,0.10) rectangle (6.95,0.64);
    \node[bartext] at (5.7,0.37) {MMA $L_{i+1,j}$};
    \draw[relu] (7.1,0.10) rectangle (8.5,0.64);
    \node[bartext] at (7.8,0.37) {ReLU};
    \draw[second] (8.65,0.10) rectangle (11,0.64);
    \node[bartext] at (9.8,0.37) {MMA $Y_{i+1}$};
    \draw[ready] (11.45,0.10) rectangle (14.0,0.64);
    \node[bartext] at (12.725,0.37) {Ready $j+1$};
    \draw[first] (14.15,0.10) rectangle (17.6,0.64);
    \node[bartext] at (15.875,0.37) {MMA $L_{i+1,j+1}$};
    \node[bartext] at (18.1,0.37) {$\cdots$};
    \draw[->, draw=black!65] (2.1,-0.3) -- (20.9,-0.3)
        node[anchor=west, font=\scriptsize] {time};
\end{tikzpicture}%
}
\caption{Schematic asynchronous copy/compute pipeline overlap for the fused TF32 kernels with two staging buffers. Box widths and offsets are schematic and do not represent actual timings. For each tile, the first MMA block forms $\mat{L_{ij}}$, the ReLU block rectifies $\mat{L_{ij}}$ and converts it back to TF32, and the second MMA block accumulates $\mat{Y_i}$. In the WGMMA kernel, the dashed vertical line marks completion of the stage holding tile $j$, after which the producer reuses that buffer for tile $j\!+\!2$.}
\label{fig:kernel-pipeline}
\end{figure}



\begin{thebibliography}{100}

\bibitem{newman2003-networks}
M.~E.~J. Newman.
\newblock The structure and function of complex networks.
\newblock {\em SIAM Review}, 45:167--256, 2003.

\bibitem{michel2011-ngrams}
J.-B. Michel, Y.~K. Shen, A.~P. Aiden, A.~Veres, M.~K. Gray, J.~P. Pickett, D.~Hoiberg, D.~Clancy, P.~Norvig, J.~Orwant, S.~Pinker, M.~A. Nowak, and E.~L. Aiden.
\newblock Quantitative analysis of culture using millions of digitized books.
\newblock {\em Science}, 331(6014):176--182, 2011.

\bibitem{olshausen1996-emergence}
B.~A. Olshausen and D.~J. Field.
\newblock Emergence of simple-cell receptive field properties by learning a sparse code for natural images.
\newblock {\em Nature}, 381:607--609, 1996.

\bibitem{papadopoulos2004-nearest}
A.~N. Papadopoulos and Y.~Manolopoulos.
\newblock {\em Nearest Neighbor Search: A Database Perspective}.
\newblock Springer, New York, NY, 2004.

\bibitem{udell2019-rank}
M.~Udell and A.~Townsend.
\newblock Why are big data matrices approximately low rank?
\newblock {\em SIAM Journal on Mathematics of Data Science}, 1(1):144--160, 2019.

\bibitem{budzinskiy2025-rank}
S.~Budzinskiy.
\newblock When big data actually are low-rank, or entrywise approximation of certain function-generated matrices.
\newblock {\em {SIAM} Journal on Mathematics of Data Science}, 7(3):1098--1122, 2025.

\bibitem{deerwester1990-indexing}
S.~Deerwester, S.~T. Dumais, G.~W. Furnas, T.~K. Landauer, and R.~Harshman.
\newblock Indexing by latent semantic analysis.
\newblock {\em Journal of the Association for Information Science and Technology}, 41(6):391--407, 1990.

\bibitem{brunet2004-gene}
J.-P. Brunet, P.~Tamayo, T.~R. Golub, and J.~P. Mesirov.
\newblock Metagenes and molecular pattern discovery using matrix factorization.
\newblock {\em Proceedings of the National Academy of Sciences U.S.A.}, 101(12):4164--4169, 2004.

\bibitem{koren2009-matrix}
Y.~Koren, R.~Bell, and C.~Volinsky.
\newblock Matrix factorization techniques for recommender systems.
\newblock {\em Computer}, 42(8):30--37, 2009.

\bibitem{hu2022-lora}
E.~J. Hu, Y.~Shen, P.~Wallis, Z.~Allen-Zhu, Y.~Li, S.~Wang, and W.~Chen.
\newblock {LoRA}: Low-rank adaptation of large language models.
\newblock In {\em Proceedings of the 10th International Conference on Learning Representations (ICLR-2022)}, 2022.

\bibitem{turk1991-eigenfaces}
M.~Turk and A.~Pentland.
\newblock Eigenfaces for recognition.
\newblock {\em Journal of Cognitive Neuroscience}, 3:71--86, 1991.

\bibitem{jolliffe2016-pca}
I.~T. Jolliffe and J.~Cadima.
\newblock Principal component analysis: a review and recent developments.
\newblock {\em Philosophical Transactions of the Royal Society A: Mathematical, Physical and Engineering Sciences}, 374(2065):20150202, 04 2016.

\bibitem{saul2022-nmd}
L.~K. Saul.
\newblock A nonlinear matrix decomposition for mining the zeros of sparse data.
\newblock {\em SIAM Journal on Mathematics of Data Science}, 4(2):431--463, 2022.

\bibitem{saul2022-geometrical}
L.~K. Saul.
\newblock A geometrical connection between sparse and low-rank matrices and its application to manifold learning.
\newblock {\em Transactions on Machine Learning Research}, December 2022.

\bibitem{alon2016-sign}
N.~Alon, S.~Moran, and A.~Yehudayoff.
\newblock Sign rank versus {VC} dimension.
\newblock In {\em Proceedings of the 29th Annual Conference on Learning Theory}, pages 47--80, 2016.

\bibitem{seraghiti2023-accelerated}
G.~Seraghiti, A.~Awari, A.~Vandaele, M.~Porcelli, and N.~Gillis.
\newblock Accelerated algorithms for nonlinear matrix decomposition with the {ReLU} function.
\newblock In {\em Proceedings of the 2023 IEEE 33rd International Workshop on Machine Learning for Signal Processing (MLSP)}, pages 1--6. IEEE, 2023.

\bibitem{liu2024-symmetric}
H.~Liu, P.~Wang, L.~Huang, Q.~Qu, and L.~Balzano.
\newblock Symmetric matrix completion with {ReLU} sampling.
\newblock In {\em Proceedings of the 41st International Conference on Machine Learning}, pages 32015--32040, 2024.

\bibitem{wang2024-momentum}
Q.~Wang, C.~Cui, and D.~Han.
\newblock A momentum accelerated algorithm for {ReLU}-based nonlinear matrix decomposition.
\newblock {\em IEEE Signal Processing Letters}, 31:2865--2869, 2024.

\bibitem{awari2025-alternating}
A.~Awari, N.~Gillis, and A.~Vandaele.
\newblock Alternating direction method of multipliers for nonlinear matrix decompositions.
\newblock {\em arXiv preprint arXiv:2512.17473}, 2025.

\bibitem{gillis2025-extrapolated}
N.~Gillis, M.~Porcelli, and G.~Seraghiti.
\newblock An extrapolated and provably convergent algorithm for nonlinear matrix decomposition with the {ReLU} function.
\newblock {\em arXiv preprint arXiv:2503.23832}, 2025.

\bibitem{wang2025-efficient}
Q.~Wang.
\newblock An efficient alternating algorithm for {ReLU}-based symmetric matrix decomposition.
\newblock {\em arXiv preprint arXiv:2503.16846}, 2025.

\bibitem{wang2025-accelerated}
Q.~Wang, Y.~Qu, C.~Cui, and D.~Han.
\newblock An accelerated alternating partial {B}regman algorithm for {ReLU}-based matrix decomposition.
\newblock {\em Journal of Scientific Computing}, 105(48), 2025.

\bibitem{sumac2026-github}
\url{https://github.com/flatironinstitute/sumac}, 2026.

\bibitem{dorkenwald2024-flywire}
S.~Dorkenwald, A.~Matsliah, A.~R. Sterling, P.~Schlegel, S.-C. Yu, C.~E. McKellar, A.~Lin, M.~Costa, K.~Eichler, Y.~Yin, W.~Silversmith, C.~Schneider-Mizell, C.~S. Jordan, D.~Brittain, A.~Halageri, K.~Kuehner, O.~Ogedengbe, R.~Morey, J.~Gager, K.~Kruk, E.~Perlman, R.~Yang, D.~Deutsch, D.~Bland, M.~Sorek, R.~Lu, T.~Macrina, K.~Lee, J.~A. Bae, S.~Mu, B.~Nehoran, E.~Mitchell, S.~Popovych, J.~Wu, Z.~Jia, M.~A. Castro, N.~Kemnitz, D.~Ih, A.~S. Bates, N.~Eckstein, J.~Funke, F.~Collman, D.~D. Bock, G.~S. X.~E. Jefferis, H.~S. Seung, M.~Murthy, and The~FlyWire Consortium.
\newblock Neuronal wiring diagram of an adult brain.
\newblock {\em Nature}, 634:124--138, 2024.

\bibitem{matsliah2023-codex}
A.~Matsliah, A.~R. Sterling, S.~Dorkenwald, K.~Kuehner, R.~Morey, H.~S. Seung, and M.~Murthy.
\newblock Codex: Connectome data explorer.
\newblock \url{https://www.researchgate.net/publication/372464040\_Codex\_Connectome\_Data\_Explorer}, 2023.

\bibitem{candes2009-exact}
E.~Candes and B.~Recht.
\newblock Exact matrix completion via convex optimization.
\newblock {\em Foundations of Computational Mathematics}, 9:717--772, 2009.

\bibitem{candes2010-matrix}
E.~Candes and Y.~Plan.
\newblock Matrix completion with noise.
\newblock {\em Proceedings of the IEEE}, 98(6):925--936, 2010.

\bibitem{keshavan2010-matrix}
R.~H. Keshavan, A.~Montanari, and S.~Oh.
\newblock Matrix completion from a few entries.
\newblock {\em {IEEE} Transactions on Information Theory}, 56(6):2980--2998, 2010.

\bibitem{chatterjee2015-matrix}
S.~Chatterjee.
\newblock Matrix estimation by universal singular value thresholding.
\newblock {\em Annals of Statistics}, 43(1):177--214, 2015.

\bibitem{sun2016-guaranteed}
R.~Sun and Z.-Q. Luo.
\newblock Guaranteed matrix completion via non-convex factorization.
\newblock {\em IEEE Transactions on Information Theory}, 62(11):6535--6579, 2016.

\bibitem{nguyen2019-low}
L.~T. Nguyen, J.~Kim, and B.~Shim.
\newblock Low-rank matrix completion: a contemporary survey.
\newblock {\em IEEE Access}, 7:94215--94237, 2019.

\bibitem{chatterjee2020-deterministic}
S.~Chatterjee.
\newblock A deterministic theory of low rank matrix completion.
\newblock {\em IEEE Transactions on Information Theory}, 66(12):8046--8055, 2020.

\bibitem{zilber2022-GNMR}
P.~Zilber and B.~Nadler.
\newblock {GNMR}: a provable one-line algorithm for low rank matrix recovery.
\newblock {\em SIAM Journal on Mathematics of Data Science}, 4(2):909--934, 2022.

\bibitem{chung1997-spectral}
F.~R.~K. Chung.
\newblock {\em Spectral Graph Theory}.
\newblock Number~92 in CBMS Regional Conference Series in Mathematics. American Mathematical Society, Providence, RI, 1997.

\bibitem{cvetkovic2007-signless}
D.~Cvetkovi\'{c}, P.~Rowlinson, and S.~K. Simi\'{c}.
\newblock Signless {L}aplacians of finite graphs.
\newblock {\em Linear Algebra and its Applications}, 423(1):155--171, 2007.

\bibitem{lee1999-learning}
D.~D. Lee and H.~Sompolinsky.
\newblock Learning a continuous hidden variable model for binary data.
\newblock In M.~J. Kearns, S.~A. Solla, and D.~A. Cohn, editors, {\em Advances in Neural Information Processing Systems 11}, pages 515--521. MIT Press, 1999.

\bibitem{droge2023-kissing}
H.~Dr\"{o}ge, Z.~L\"{a}hner, Y.~Bahat, O.~Martorell~Nadal, F.~Heide, and M.~Moeller.
\newblock Kissing to find a match: Efficient low-rank permutation representation.
\newblock In A.~Oh, T.~Naumann, A.~Globerson, K.~Saenko, M.~Hardt, and S.~Levine, editors, {\em Advances in Neural Information Processing Systems 36}, pages 48459--48471. Curran Associates, Inc., 2023.

\bibitem{hardy1975-intro}
G.~H. Hardy and E.~M. Wright.
\newblock {\em An Introduction to the Theory of Numbers}.
\newblock Oxford University Press, 4th edition, 1975.

\bibitem{cramer1936-order}
H.~Cram\'{e}.
\newblock On the order of magnitude of the difference between consecutive prime numbers.
\newblock {\em Acta Arithmetica}, 2(1)(1):23--46, 1936.

\bibitem{crandall2005-prime}
R.~Crandall and C.~Pomerance.
\newblock {\em Prime Numbers: A Computational Perspective}.
\newblock Springer, 2nd edition, 2005.

\bibitem{moenck1973-fast}
R.~T. Moenck.
\newblock Fast computation of {GCDs}.
\newblock In {\em Proceedings of the 5th Annual ACM Symposium on Theory of Computing (STOC-1973)}, page 142–151, 1973.

\bibitem{lecun98-gradient}
Y.~LeCun, L.~Bottou, Y.~Bengio, and P.~Haffner.
\newblock Gradient-based learning applied to document recognition.
\newblock {\em Proceedings of the {IEEE}}, 86(11):2278--2324, 1998.

\bibitem{TC-wikipedia}
J.~Artiles and S.~Sekine.
\newblock Tagged and cleaned {W}ikipedia and its {N}gram.
\newblock \url{https://nlp.cs.nyu.edu/wikipedia-data/}.
\newblock Accessed: September 5, 2025.

\bibitem{peeters2003-maximum}
R.~Peeters.
\newblock The maximum edge biclique problem is {NP}-complete.
\newblock {\em Discrete Applied Mathematics}, 131(3)(3):651--654, 2003.

\bibitem{awari2024-coordinate}
A.~Awari, H.~Nguyen, S.~Wertz, A.~Vandaele, and N.~Gillis.
\newblock Coordinate descent algorithm for nonlinear matrix decomposition with the {ReLU} function.
\newblock In {\em Proceedings of the 2024 32nd European Signal Processing Conference (EUSIPCO)}, pages 2622--2626. IEEE, 2024.

\bibitem{dempster1977-maximum}
A.~P. Dempster, N.~M. Laird, and D.~B. Rubin.
\newblock Maximum likelihood estimation from incomplete data via the {EM} algorithm (with discussion).
\newblock {\em Journal of the Royal Statistical Society B}, 39:1--38, 1977.

\bibitem{hastie2015-matrix}
T.~Hastie, R.~Mazumder, J.~D. Lee, and R.~Zadeh.
\newblock Matrix completion and low-rank {SVD} via fast alternating least squares.
\newblock {\em Journal of Machine Learning Research}, 16(104):3367--3402, 2015.

\bibitem{zhou2008-large}
Y.~Zhou, D.~Wilkinson, R.~Schreiber, and R.~Pan.
\newblock Large-scale parallel collaborative filtering for the {Netflix} prize.
\newblock In R.~Fleischer and J.~Xu, editors, {\em Algorithmic Aspects in Information and Management}, pages 337--348. Springer, 2008.

\bibitem{yu2014-parallel}
H.-F. Yu, C.-J. Hsieh, S.~Si, and I.~S. Dhillon.
\newblock Parallel matrix factorization for recommender systems.
\newblock {\em Knowledge and Information Systems (KAIS)}, 41(3):793--–819, 2014.

\bibitem{bottou2004-large}
L.~Bottou and Y.~LeCun.
\newblock Large scale online learning.
\newblock In S.~Thrun, L.~K. Saul, and B.~Sch\"{o}lkopf, editors, {\em Advances in Neural Information Processing Systems 16}, pages 25--32. MIT Press, 2004.

\bibitem{polyak1964-momentum}
B.~T. Polyak.
\newblock Some methods of speeding up the convergence of iteration methods.
\newblock {\em USSR Computational Mathematics and Mathematical Physics}, 4(5):1--17, 1964.

\bibitem{dao2022-flash1}
T.~Dao, D.~Y. Fu, S.~Ermon, A.~Rudra, and C.~R\'{e}.
\newblock {FlashAttention}: fast and memory-efficient exact attention with {IO}-awareness.
\newblock In S.~Koyejo, S.~Mohamed, A.~Agarwal, D.~Belgrave, K.~Cho, and A.~Oh, editors, {\em Advances in Neural Information Processing Systems 35}, pages 16344--16359. Curran Associates Inc., 2022.

\bibitem{dao2024-flash2}
T.~Dao.
\newblock {FlashAttention}-2: Faster attention with better parallelism and work partitioning.
\newblock In {\em Proceedings of the 12th International Conference on Learning Representations (ICLR-2024)}, 2024.

\bibitem{shah2024-flash3}
J.~Shah, G.~Bikshandi, Y.~Zhang, V.~Thakkar, P.~Ramani, and T.~Dao.
\newblock {FlashAttention}-3: Fast and accurate attention with asynchrony and low-precision.
\newblock In A.~Globerson, L.~Mackey, D.~Belgrave, A.~Fan, U.~Paquet, J.~Tomczak, and C.~Zhang, editors, {\em Advances in Neural Information Processing Systems 37}, pages 68658--68685. Curran Associates Inc., 2024.

\bibitem{zadouri2026-flash4}
T.~Zadouri, M.~Hoehnerbach, J.~Shah, T.~Liu, V.~Thakkar, and T.~Dao.
\newblock Flashattention-4: algorithm and kernel pipelining co-design for asymmetric hardware scaling.
\newblock {\em arXiv preprint arXiv:2603.05451}, 2026.

\bibitem{stosic2021-accelerating}
D.~Stosic and P.~Micikevicius.
\newblock \url{https://developer.nvidia.com/blog/accelerating-ai-training-with-tf32-tensor-cores/}, 2021.

\bibitem{bentley1975-multidimensional}
J.~L. Bentley.
\newblock Multidimensional binary search trees used for associative searching.
\newblock {\em Communications of the ACM}, 18(9):509--517, 1975.

\bibitem{agarwal2017-range}
P.~K. Agarwal.
\newblock Range searching.
\newblock In J.~E. Goodman, J.~O'Rourke, and C.~D. T{\'{o}}th, editors, {\em Handbook of Discrete and Computational Geometry}, chapter~40, pages 1057--1092. Chapman and Hall/CRC, 2017.

\bibitem{kingma2015-adam}
D.~P. Kingma and J.~Ba.
\newblock Adam: A method for stochastic optimization.
\newblock In {\em Proceedings of the 3rd International Conference on Learning Representations (ICLR-2015)}, 2015.

\bibitem{mazumdar2019-learning}
A.~Mazumdar and A.~S. Rawat.
\newblock Learning and recovery in the {ReLU} model.
\newblock In {\em Proceedings of the 57th Annual Allerton Conference on Communication, Control, and Computing}, pages 108--115, 2019.

\bibitem{schlegel2024-whole}
P.~Schlegel, Y.~Yin, A.~S. Bates, S.~Dorkenwald, K.~Eichler, P.~Brooks, D.~S. Han, M.~Gkantia, M.~dos Santos, E.~J. Munnelly, G.~Badalamente, L.~S. Capdevila, V.~A. Sane, A.~M.~C. Fragniere, L.~Kiassat, M.~W. Pleijzier, T.~Stürner, I.~F.~M. Tamimi, C.~R. Dunne, I.~Salgarella, A.~Javier, S.~Fang, E.~Perlman, T.~Kazimiers, S.~R. Jagannathan, A.~Matsliah, A.~R. Sterling, S.-C. Yu, C.~E. McKellar, FlyWire Consortium, M.~Costa, H.~S. Seung, M.~Murthy, V.~Hartenstein, D.~D. Bock, and G.~S. X.~E. Jefferis.
\newblock Whole-brain annotation and multi-connectome cell typing of \textit{{D}rosophila}.
\newblock {\em Nature}, 634:139--152, 2024.

\bibitem{schwartzman2026-NTAC}
G.~Schwartzman, B.~Jourdan, D.~García-Soriano, and A.~Matsliah.
\newblock {NTAC}: Neuronal type assignment from connectivity.
\newblock {\em Nature Communications}, 17(1284):1--12, 2026.

\bibitem{markovsky2012-lowrank}
I.~Markovsky.
\newblock {\em Low Rank Approximation: Algorithms, Implementation, Applications (Communications and Control Engineering)}.
\newblock Springer, 2012.

\bibitem{wright2021-hdda}
J.~Wright and Y.~Ma.
\newblock {\em High-Dimensional Data Analysis with Low-Dimensional Models: Principles, Computation, and Applications}.
\newblock Cambridge University Press, 2021.

\bibitem{eckart1936-svd}
C.~Eckart and G.~Young.
\newblock The approximation of one matrix by another of lower rank.
\newblock {\em Psychometrika}, 1:211--218, 1936.

\bibitem{collins2002-generalization}
M.~Collins, S.~Dasgupta, and R.~E. Schapire.
\newblock A generalization of principal components analysis to the exponential family.
\newblock In T.~G. Dietterich, S.~Becker, and Z.~Ghahramani, editors, {\em Advances in Neural Information Processing Systems 14}, pages 617--624. MIT Press, 2002.

\bibitem{gordon2003-GLM}
G.~J. Gordon.
\newblock Generalized${^2}$ linear${^2}$ models.
\newblock In S.~Becker, S.~Thrun, and K.~Obermayer, editors, {\em Advances in Neural Information Processing Systems 15}, pages 593--600. MIT Press, 2003.

\bibitem{singh2008-unified}
A.~P. Singh and G.~J. Gordon.
\newblock A unified view of matrix factorization models.
\newblock In {\em Proceedings of the European Conference on Machine Learning and Knowledge Discovery in Databases (ECML/PKDD-08)}, pages \mbox{358--373}, 2008.

\bibitem{bartholomew2011-latent}
D.~J. Bartholomew, M.~Knott, and I.~Moustaki.
\newblock {\em Latent Variable Models and Factor Analysis: A Unified Approach}.
\newblock Wiley, 2011.

\bibitem{udell2016-generalized}
M.~Udell, C.~Horn, R.~Zadeh, and S.~Boyd.
\newblock Generalized low rank models.
\newblock {\em Foundation and Trends in Machine Learning}, 9(1):1--118, 2016.

\bibitem{hoff2021-additive}
P.~D. Hoff.
\newblock Additive and multiplicative effects network models.
\newblock {\em Statistical Science}, 36(1):34--50, 2021.

\bibitem{meeds2007-modeling}
E.~Meeds, Z.~Ghahramani, R.~M. Neal, and S.~T. Roweis.
\newblock Modeling dyadic data with binary latent factors.
\newblock In B.~Sch\"{o}lkopf, J.~Platt, and T.~Hofmann, editors, {\em Advances in Neural Information Processing Systems 19}, pages 977--984. MIT Press, 2007.

\bibitem{kolda1998-semidiscrete}
T.~G. Kolda and D.~P. O'Leary.
\newblock A semidiscrete matrix decomposition for latent semantic indexing in information retrieval.
\newblock {\em ACM Transactions on Information Systems}, 16(4):322--346, 1998.

\bibitem{lee1996-unsupervised}
D.~D. Lee and H.~S. Seung.
\newblock Unsupervised learning by convex and conic coding.
\newblock In M.~C. Mozer, M.~Jordan, and T.~Petsche, editors, {\em Advances in Neural Information Processing Systems 9}, pages 515--521. MIT Press, 1996.

\bibitem{ding2020-convex}
C.~H.~Q. Ding, T.~Li, and M.~I. Jordan.
\newblock Convex and semi-nonnegative matrix factorizations.
\newblock {\em IEEE Transactions on Pattern Analysis and Machine Intelligence}, 32(1):45--55, 2010.

\bibitem{thanh2023-bounded}
O.~V. Thanh, N.~Gillis, and F.~Lecron.
\newblock Bounded simplex-structured matrix factorization: algorithms, identifiability, and applications.
\newblock {\em IEEE Transactions on Signal Processing}, 71:2434--2447, 2023.

\bibitem{paatero1994-positive}
P.~Paatero and U.~Tapper.
\newblock Positive matrix factorization: A non-negative factor model with optimal utilization of error estimates of data values.
\newblock {\em Environmetrics}, 5:111--126, 1994.

\bibitem{lee1999-nmf}
D.~D. Lee and H.~S. Seung.
\newblock Learning the parts of objects by non-negative matrix factorization.
\newblock {\em Nature}, 401:788--791, 1999.

\bibitem{gillis2021-nmf}
N.~Gillis.
\newblock {\em Nonnegative matrix factorization}.
\newblock SIAM, 2021.

\bibitem{zou2006-sparse}
H.~Zou, T.~Hastie, and R.~Tibshirani.
\newblock Sparse principal component analysis.
\newblock {\em Journal of Computational and Graphical Statistics}, 15(2):265--286, 2006.

\bibitem{daspremont2008-optimal}
A.~D'Aspremont, F.~Bach, and L.~El~Ghaoui.
\newblock Optimal solutions for sparse principal component analysis.
\newblock {\em Journal of Machine Learning Research}, 9:1269--1294, 2008.

\bibitem{witten2009-pmd}
D.~M. Witten, R.~Tibshirani, and T.~Hastie.
\newblock A penalized matrix decomposition, with applications to sparse principal components and canonical correlation analysis.
\newblock {\em Biostatistics}, 10(3):515--534, 2009.

\bibitem{soni2016-noisy}
A.~Soni, S.~Jain, J.~Haupt, and S.~Gonella.
\newblock Noisy matrix completion under sparse factor models.
\newblock {\em IEEE Transactions on Information Theory}, 62(6):3636--3661, 2016.

\bibitem{saul1997-aggregate}
L.~Saul and F.~Pereira.
\newblock Aggregate and mixed-order {M}arkov models for statistical language processing.
\newblock In {\em Proceedings of the 2nd Conference on Empirical Methods in Natural Language Processing (EMNLP-97)}, pages 81--89, 1997.

\bibitem{hofmann1999-probabilistic}
T.~Hofmann.
\newblock Probabilistic latent semantic analysis.
\newblock In {\em Proceedings of the 15th Conference on Uncertainty in Artificial Intelligence (UAI-99)}, pages 289--296, 1999.

\bibitem{srebro2005-maximum}
N.~Srebro, J.~Rennie, and T.~S. Jaakkola.
\newblock Maximum-margin matrix factorization.
\newblock In L.~K. Saul, Y.~Weiss, and L.~Bottou, editors, {\em Advances in Neural Information Processing Systems 17}, pages 1329--1336. MIT Press, 2005.

\bibitem{tipping1999-probabilistic}
M.~E. Tipping.
\newblock Probabilistic visualisation of high-dimensional binary data.
\newblock In M.~J. Kearns, S.~A. Solla, and D.~A. Cohn, editors, {\em Advances in Neural Information Processing Systems 11}, pages 592--598. MIT Press, 1989.

\bibitem{hoff2002-latent}
P.D. Hoff, A.E. Raftery, and M.S. Handcock.
\newblock Latent space approaches to social network analysis.
\newblock {\em Journal of the American Statistical Association}, 97(460):1090--1098, 2002.

\bibitem{schein2003-generalized}
A.~I. Schein, L.~K. Saul, and L.~H. Ungar.
\newblock A generalized linear model for principal component analysis of binary data.
\newblock In {\em Proceedings of the 9th Workshop on Artificial Intelligence and Statistics (AISTATS-03)}, pages 14--21, 2003.

\bibitem{davenport2014-1bit}
M.~A. Davenport, Y.~Plan, E.~van~den Berg, and M.~Wooters.
\newblock 1-bit matrix completion.
\newblock {\em Information and Inference: A Journal of the {IMA}}, 3(3):189--223, 2014.

\bibitem{bhaskar2015-1bit}
S.~A. Bhaskar and A.~Javanmard.
\newblock 1-bit matrix completion under exact low-rank constraint.
\newblock In {\em Proceedings of the 49th Annual Conference on Information Sciences and Systems (CISS-15)}, pages 1--6, 2015.

\bibitem{cao2013-categorical}
Y.~Cao and Y.~Xie.
\newblock Categorical matrix completion.
\newblock In {\em Proceedings of the 6th IEEE International Workshop on Computational Advances in Multi-Sensor Adaptive Processing (CAMSAP-15)}, pages 369--372, 2013.

\bibitem{lafond2014-probabilistic}
J.~Lafond, O.~Klopp, E.~Moulines, and J.~Salmon.
\newblock Probabilistic low-rank matrix completion on finite alphabets.
\newblock In Z.~Ghahramani, M.~Welling, C.~Cortes, N.~Lawrence, and K.~Q. Weinberger, editors, {\em Advances in Neural Information Processing Systems 27}, pages 1727--1735. Curran Associates, Inc., 2014.

\bibitem{bhaskar2016-probabilistic}
S.~A. Bhaskar.
\newblock Probabilistic low-rank matrix completion from quantized measurements.
\newblock {\em Journal of Machine Learning Research}, 17:1--34, 2016.

\bibitem{kannan2014-bounded}
R.~Kannan, M.~Ishteva, and H.~Park.
\newblock Bounded matrix factorization for reocommender systems.
\newblock {\em Knowledge and Information Systems}, 39(3):491--511, 2014.

\bibitem{gopalan2014-poisson}
P.~Gopalan, L.~Charlin, and D.~Blei.
\newblock Content-based recommendations with {P}oisson factorization.
\newblock In Z.~Ghahramani, M.~Welling, C.~Cortes, N.~Lawrence, and K.~Q. Weinberger, editors, {\em Advances in Neural Information Processing Systems 27}, pages 3176--3184. Curran Associates, Inc., 2014.

\bibitem{song20-nonnegative}
G.-J. Song and M.~K. Ng.
\newblock Nonnegative low rank matrix approximation for nonnegative matrices.
\newblock {\em Applied Mathematics Letters}, 105:106300, 2020.

\bibitem{bishop2014-deterministic}
W.~E. Bishop and B.~M. Yu.
\newblock Deterministic symmetric positive semidefinite matrix completion.
\newblock In Z.~Ghahramani, M.~Welling, C.~Cortes, N.~D. Lawrence, and K.~Q. Weinberger, editors, {\em Advances in Neural Information Processing Systems 27}, pages 2762--2770. Curran Associates, Inc., 2014.

\bibitem{hoff2005-bilinear}
P.~D. Hoff.
\newblock Bilinear mixed-effects models for dyadic data.
\newblock {\em Journal of the American Statistical Association}, 100:286--295, 2005.

\bibitem{pmf-mnih08}
A.~Mnih and R.~R. Salakhutdinov.
\newblock Probabilistic matrix factorization.
\newblock In J.~C. Platt, D.~Koller, Y.~Singer, and S.~T. Roweis, editors, {\em Advances in Neural Information Processing Systems 20}, pages 1257--1264. Curran Associates, Inc., 2008.

\bibitem{handcock2007-model}
M.~S. Handcock, A.~E. Raftery, and J.~M. Tantrum.
\newblock Model-based clustering for social networks.
\newblock {\em Journal of the Royal Statistical Society, Series A}, 170:301–354, 2007.

\bibitem{ma2020-universal}
Z.~Ma, Z.~Ma, and H.~Yuan.
\newblock Universal latent space model fitting for large networks with edge covariates.
\newblock {\em Journal of Machine Learning Research}, 21(4):1--67, 2020.

\bibitem{ganti2015-monotonic}
R.~S. Ganti, L.~Balzano, and R.~Willett.
\newblock Matrix completion under monotonic single index models.
\newblock In C.~Cortes, N.~Lawrence, D.~Lee, M.~Sugiyama, and R.~Garnett, editors, {\em Advances in Neural Information Processing Systems 28}, pages 1864--1872. Curran Associates, Inc., 2015.

\bibitem{srebro2003-wlr}
N.~Srebro and T.~Jaakkola.
\newblock Weighted low-rank approximations.
\newblock In {\em Proceedings of the 20th International Conference on Machine Learning (ICML-03)}, pages 720--727, 2003.

\bibitem{wen2012-solving}
Z.~Wen, W.~Yin, and Y.~Zhang.
\newblock Solving a low-rank factorization model for matrix completion by a nonlinear successive over-relaxation algorithm.
\newblock {\em Mathematical Programming Computation}, 4:333--361, 2012.

\bibitem{tanner2016-low}
J.~Tanner and K.~Wei.
\newblock Low rank matrix completion by alternating steepest descent methods.
\newblock {\em Applied and Computational Harmonic Analysis}, 40(2):417--429, 2016.

\bibitem{chi2019-nonconvex}
Y.~Chi, Y.~M. Lu, and Y.~Chen.
\newblock Non-convex optimization meets low-rank matrix factorization: an overview.
\newblock {\em {IEEE} Transactions on Signal Processing}, 67(20):5239--5269, 2019.

\bibitem{tenenbaum2000-global}
J.~B. Tenenbaum, V.~de~Silva, and J.~C. Langford.
\newblock A global geometric framework for nonlinear dimensionality reduction.
\newblock {\em Science}, 290:2319--2323, 2000.

\bibitem{weinberger2006-mvu}
K.~Q. Weinberger and L.~K. Saul.
\newblock An introduction to nonlinear dimensionality reduction by maximum variance unfolding.
\newblock In {\em Proceedings of the 21st National Conference on Artificial Intelligence (AAAI-06)}, pages 1683--1686, 2006.

\bibitem{roweis2000-nonlinear}
S.~T. Roweis and L.~K. Saul.
\newblock Nonlinear dimensionality reduction by locally linear embedding.
\newblock {\em Science}, 290:2323--2326, 2000.

\bibitem{chen2018-sparse}
Y.~Chen, D.~M. Paiton, and B.~A. Olshausen.
\newblock The sparse manifold transform.
\newblock In S.~Bengio, H.~Wallach, H.~Larochelle, K.~Grauman, N.~Cesa-Bianchi, and R.~Garnett, editors, {\em Advances in Neural Information Processing Systems 31}, pages 10534--10545. Curran Associates, Inc., 2018.

\bibitem{sengupta2018-tiling}
A.~M. Sengupta, C.~Pehlevan, M.~Tepper, A.~Genkin, and D.~B. Chklovskii.
\newblock Manifold-tiling localized receptive fields are optimal in similarity-preserving neural networks.
\newblock In S.~Bengio, H.~Wallach, H.~Larochelle, K.~Grauman, N.~Cesa-Bianchi, and R.~Garnett, editors, {\em Advances in Neural Information Processing Systems 31}, pages 7080--7090. Curran Associates, Inc., 2018.

\bibitem{paturi1986-probabilistic}
R.~Paturi and J.~Simon.
\newblock Probabilistic communication complexity.
\newblock {\em Journal of Computer and System Sciences}, 33(1):106–--123, 1986.

\bibitem{hatami2022-sign}
H.~Hatami, K.~Hosseini, and S.~Lovett.
\newblock Sign rank vs discrepancy.
\newblock {\em Theory of Computing}, 18(19):1--18, 2022.

\bibitem{sherstov2023-near}
A.~A. Sherstov and P.~Wu.
\newblock Near-optimal lower bounds on the threshold degree and sign-rank of {AC}$^0$.
\newblock {\em SIAM Journal on Computing}, 52(2):\texttt{STOC19}--1--\texttt{STOC19}--86, 2023.

\bibitem{mahoney2010-random}
M.~W. Mahoney.
\newblock Randomized algorithms for matrices and data.
\newblock {\em Foundations and Trends in Machine Learning}, 3(2):123--224, 2010.

\bibitem{halko2011-finding}
N.~Halko, P.~G. Martinsson, and J.~A. Tropp.
\newblock Finding structure with randomness: probabilistic algorithms for constructing approximate matrix decompositions.
\newblock {\em {SIAM} Review}, 53(2):217--288, 2011.

\bibitem{tropp2017-practical}
J.~A. Tropp, A.~Yurtsever, M.~Udell, and V.~Cevher.
\newblock Practical sketching algorithms for low-rank matrix approximation.
\newblock {\em SIAM Journal on Matrix Analysis and Applications}, 38(4):1454--1485, 2017.

\bibitem{ng2001-spectral}
A.~Ng, M.~Jordan, and Y.~Weiss.
\newblock On spectral clustering: analysis and an algorithm.
\newblock In T.~Dietterich, S.~Becker, and Z.~Ghahramani, editors, {\em Advances in Neural Information Processing Systems 14}, pages 849--856, 2001.

\bibitem{vonluxburg2007-tutorial}
U.~Von~Luxburg.
\newblock A tutorial on spectral clustering.
\newblock {\em Statistics and Computing}, 17(4):395--416, 2007.

\bibitem{umeyama1988-eigen}
S.~Umeyama.
\newblock An eigendecomposition approach to weighted graph matching problems.
\newblock {\em IEEE Transactions on Pattern Analysis and Machine Intelligence}, 10(5):695--703, 1988.

\end{thebibliography}
\end{document}